\documentclass{article}

 \usepackage[preprint]{neurips_2026}

\usepackage{my/myconfig} 

\usepackage[utf8]{inputenc} 
\usepackage[T1]{fontenc}    
\usepackage{hyperref}       
\usepackage{url}            
\usepackage{booktabs}       
\usepackage{amsfonts}       
\usepackage{nicefrac}       
\usepackage{microtype}      
\usepackage{soul}
\usepackage{tcolorbox}

\newcommand{\cmark}{\ding{51}}
\newcommand{\xmark}{\ding{55}}

\newcommand{\system}{\textsc{EquiMem}\xspace}
\title{\system: Calibrating Shared Memory in Multi-Agent Debate via Game-Theoretic Equilibrium}

\author{
Yuqiao Meng$^{1}$ \quad
Sakshi Sunil Narvekar$^{1}$ \quad
Luoxi Tang$^{1}$ \quad
Rupali Rajendra Vaje$^{1}$ \\
\textbf{Yingxue Zhang}$^{1}$ \quad
\textbf{Muchao Ye}$^{2}$ \quad
\textbf{Zhaohan Xi}$^{1}$ \\
\\
$^{1}$Binghamton University, State University of New York, Binghamton, NY, USA \\
$^{2}$University of Iowa, Iowa City, IA, USA \\
Corresponding to: Zhaohan Xi \texttt{<zxi@binghamton.edu>}
}

\begin{document}

\maketitle

\begin{abstract}

Multi-agent debate (MAD) systems increasingly rely on shared memory to support long-horizon reasoning, but this convenience opens a critical vulnerability: a single corrupted entry can contaminate the downstream memory-augmented reasoning, and debate alone fails to filter such errors. Existing safeguards filter entries via heuristics or LLM-based validation, yet they rely on AI judgments that share the same failure modes and overlook the cross-agent dynamics of MAD. We address this gap by formulating memory updating in MAD as a \textbf{zero-trust memory game}, in which no agent is assumed honest and the game's equilibrium serves as an indicator of optimal memory trust. Guided by this equilibrium, we propose \system, an inference-time calibration mechanism that quantifies each update algorithmically against the shared memory state, using agents' existing retrieval queries and traversal paths as evidence rather than soliciting any LLM judgment. \system instantiates calibration for both embedding- and graph-based memory, and across diverse benchmarks, MAD frameworks, and memory architectures, it consistently outperforms existing safeguards, remains robust under adversarial agents, and incurs negligible inference overhead.
\end{abstract}
\section{Introduction}
\label{sec:intro}

Multi-agent debate (MAD) systems built on large language models (LLMs) have shown strong performance on complex reasoning~\citep{du2024improving,li2024improving,meng2026small,wang2025learning}, embodied action~\citep{shridhar2020alfworld,wang2025madra}, and planning~\citep{hu2025debate,li2025swe,ma2024agentboard} tasks, where agents iteratively discuss, critique, and refine each other's outputs~\citep{chan2023chateval,liang2024encouraging,more2026theramind}. To support interactions beyond a single round, recent MAD systems add a \emph{shared memory} that persists intermediate reasoning, past actions, and episodic trajectories across rounds~\citep{anokhin2024arigraph,wang2023voyager,zhang2025g,zhong2024memorybank}.

While shared memory boosts long-horizon reasoning, it also opens a critical vulnerability: a corrupted memory state, which can subsequently contaminate all downstream memory-augmented reasoning \citep{torra2026memory}. The corruption arises because individual agents are inherently imperfect: LLMs hallucinate, agree sycophantically ~\citep{banks2026ai,carro2024flattering,sharma2023towards}, or confidently assert incorrect claims~\citep{shinn2023reflexion,yang2025eligibility,zhang2025g}. These failures do not cancel under debate. Across recent works, three failure patterns present (Figure \ref{fig:intro}): (i) an over-confident contributor pushes a hallucinated entry past hedged auditors who defer to its confidence rather than challenge it ~\citep{du2024improving,wang2024rethinking,xiong2023examining}, producing a corrupted memory that reads like established fact; (ii) over-confident auditors veto a tentative but correct contribution \citep{cui2024or,zhang2025falsereject}, producing an over-curated memory that drops truly useful facts; (iii) all agents hedge (or over-confident) and nothing certain is committed \citep{song2025llms,tang2026value,tomani2024uncertainty}, leaving memory under-populated. In all three cases, the commit decision depends on agents' self-reported confidence rather than on any check against the memory state itself, so debate alone cannot filter the errors.


Existing safeguards either filter proposed entries via heuristics or auxiliary LLM classifiers~\citep{alon2023detecting,wang2024self}, or rely on consensus-based validation~\citep{wei2025memguard}. Both deliver real protection, but still produce calibration through AI models that carry the same risks of hallucination and sycophancy. More importantly, these methods are designed for a single agent reading its own memory, and do not model multiple agents continuously writing to (and potentially colluding over) a shared memory space. This gap motivates a critical question: \emph{how can we calibrate memory updates in a MAD system without relying on any single agent's judgment?}

\begin{wrapfigure}{r}{0.62\textwidth}
  \vspace{-22pt}
  \centering
  \includegraphics[width=0.62\textwidth]{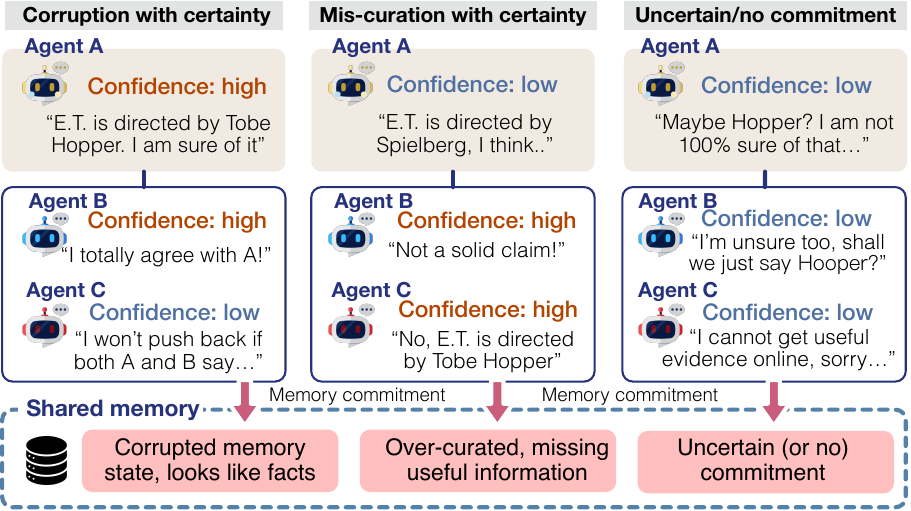}
  \vspace{-13pt}
  \caption{Failures of debate-driven memory commitment.}
  \vspace{-5pt}
  \label{fig:intro}
\end{wrapfigure}
Answering this question requires rethinking the cross-agent dynamics of debate itself. Since MAD agents already inspect and critique each other's contributions, we formulate their interaction with memory as a \textbf{zero-trust memory game}. In this formulation, the agent proposing an update acts as a \emph{contributor} in each debating round, while the remaining agents serve as \emph{auditors} who provide evidence about the update's consistency with current memory. Crucially, no agent is assumed honest in our formulation. We define game-theoretical utility functions for both roles and show that the game admits an equilibrium whose payoff structure indicates optimal trust in memory: at equilibrium, a contributor cannot raise an update's influence without also raising its detectability, and an auditor cannot reduce its effort without weakening the consistency signal it produces, which together address the failures as illustrated in Figure \ref{fig:intro} (detailed justification is shown in Appendix \ref{app:failure_modes}).

Guided by this equilibrium, we propose \system, an equilibrium-guided, inference-time calibration mechanism that requires no LLM call for calibration itself. Agents serve only as signal providers: their existing retrieval queries and traversal paths are intercepted and quantified algorithmically against the full memory state. \system instantiates calibration for two representative memory families. For embedding-based memory~\citep{wang2023voyager,zhong2024memorybank}, it checks whether a proposed update disturbs the embedding distribution and reorders retrieval results on auditor queries. For graph-based memory~\citep{anokhin2024arigraph,zhang2025g}, it checks whether new edges are consistent with the local relational neighborhood and with existing multi-hop reasoning paths. In both cases, entries are admitted with trust-discounted weights, so low-confidence content remains available but exerts less influence on downstream retrieval.

We evaluate \system on reasoning- and action-intensive benchmarks~\citep{maharana2024evaluating,shridhar2020alfworld,yang2018hotpotqa,yao2022webshop} across three MAD frameworks~\citep{liu2023dynamic,qian2024scaling,wu2024autogen} and four memory architectures~\citep{anokhin2024arigraph,wang2023voyager,zhang2025g,zhong2024memorybank}. Three findings emerge. (i) \emph{State-aware calibration beats isolated checks:} \system ranks first on every benchmark-framework-memory configuration, confirming that errors missed by per-entry scoring are caught only when updates are evaluated against the global memory state. (ii) \emph{Robustness without trust:} even with up to 50\% adversarial agents, \system stays close to its benign performance, showing that integrity comes from the structural calibration indicator rather than from trusting any individual agent. (iii) \emph{Quality without inflated cost:} since calibration is LLM-free and reuses agents' existing retrieval activity, \system adds zero token overhead and under 5\% latency, showing that strong memory protection is achievable without inflating the inference budget.

In summary, this paper makes the following contributions: \textbf{(1) }We formulate shared-memory multi-agent debate as a zero-trust memory game, in which the equilibrium payoff structure serves as an indicator of optimal memory trust.\textbf{ (2)} Guided by this formulation, we propose \system, a principled mechanism that calibrates memory updates against the global memory state without trusting any individual agent. \textbf{(3) } Through extensive evaluations, \system delivers consistent gains over existing baselines, retains effectiveness under adversarial agents, and incurs negligible inference overhead. Code is released at: \url{https://anonymous.4open.science/r/EquiMem-FE6C}.

\section{Related Work}
\label{sec:related}

\textbf{Multi-agent debate systems.}
Multi-agent debate (MAD) improves factuality and reasoning in LLM systems through iterative deliberation among multiple agents~\citep{chan2023chateval,du2024improving,liang2024encouraging,wang2024rethinking,xiong2023examining}. Structural variants include role-conditioned debate~\citep{chan2023chateval,wang2024unleashing}, dynamic participant selection~\citep{chen2023agentverse,liu2023dynamic}, and tournament-style aggregation~\citep{khan2024debating,zhou2024zodiac}. Theoretical work studies the consensus properties of debate~\citep{smit2023should,zhang2023exploring} and its vulnerability to adversarial participants~\citep{amayuelas2024multiagent,li2024survey}. To enable cross-task learning, recent systems pair debate with a shared memory that persists across rounds~\citep{hong2023metagpt,qian2024scaling,zhang2025g}.

\textbf{Memory design for LLM agents.}
Agent memory has evolved from fixed context windows to structured persistent stores~\citep{packer2023memgpt,park2023generative,zhong2024memorybank}. Common designs include embedding-based retrieval~\citep{shinn2023reflexion,wang2023voyager,zhong2024memorybank}, graph-structured memory~\citep{chhikara2025mem0,rasmussen2025zep}, and hierarchical or multi-tier storage~\citep{packer2023memgpt,wang2025mirix,xu2025mem}. These systems are typically optimised for retrieval quality, with veracity and provenance treated as secondary~\citep{sumers2023cognitive,wu2025human,zhang2025survey}. Multi-agent variants extend the same designs to a shared store accessible to several agents at once~\citep{qian2024scaling,rezazadeh2025collaborative,zhang2025g}.

\textbf{LLM hallucinations and memory safety.}
LLMs frequently produce plausible-sounding but incorrect content~\citep{banerjee2025llms,huang2025survey,ji2023survey,xu2024hallucination}. Common mitigations include retrieval augmentation~\citep{gao2023retrieval,lewis2020retrieval}, self-consistency verification~\citep{manakul2023selfcheckgpt,sriramanan2024llm}, and uncertainty calibration~\citep{kadavath2022language,saha2025you,tian2023just,xiong2023can}. Beyond intrinsic errors, conflicts between retrieved memories further complicate downstream reasoning~\citep{wang2025astute,wang2023resolving}. A growing body of work targets memory specifically: AgentPoison injects backdoors into memory banks~\citep{chen2024agentpoison}, MINJA achieves memory corruption through query-only interaction~\citep{dong2025memory,dong2025practical}, and related studies expose vulnerabilities across multi-agent communication channels~\citep{amayuelas2024multiagent,lee2024prompt,zhang2024agent}. On the defence side, perplexity-based filters~\citep{alon2023detecting} and LLM-based auditors~\citep{xiang2024guardagent} score individual entries before commit, while A-MemGuard~\citep{wei2025memguard} validates entries through consensus over reasoning paths derived from related memories.
\section{Problem Formulation: Zero-Trust Memory Game}
\label{sec:problem}

This section formalizes memory-based multi-agent debate as a \textbf{zero-trust memory game}, where no agent's contribution to memory $\mathcal{M}$ is assumed honest.  Agents iterate between two roles: a \emph{contributor} who writes reasoning traces with factual claims into $\mathcal{M}$, and \emph{auditors} who inspect those contributions for hallucinations or adversarial content. Our goal is to design a set of game-theoretic rules, namely, the \textbf{utility functions} for contributor and auditors, respectively, to remove the need to trust any individual agent and achieve honest memory behavior on the whole-system.

\subsection{Setup and Notation for Shared-Memory Multi-Agent Debate}

We consider a shared-memory multi-agent debate system $\mathcal{G} = (\mathcal{N}, \mathcal{M})$ where $\mathcal{N} = \{a_1, \ldots, a_n\}$ is a set of $n$ agents operating over a
shared memory $\mathcal{M}$. We denote $\mathcal{M}_t$ as a memory state at each debate  round $t$. Besides agents' original roles, we assign two real-time roles at each round $t$: The agent that proposes new reasoning, facts, or beliefs is referred to as the \textbf{contributor}, denoted by $a_c \in \mathcal{N}$, who generates a proposal $\Delta_t$ to be written into $\mathcal{M}_t$. The remaining agents $\mathcal{A}_t = \mathcal{N} \setminus \{a_c\}$ form the \textbf{auditors}, tasked with evaluating whether $\Delta_t$ should be committed. A successful commit transitions the memory state as: \(\mathcal{M}_{t+1} = \texttt{Merge}(\mathcal{M}_t,\ \Delta_t)\) with an architecture-specific memory integration function $\texttt{Merge}(\cdot, \cdot)$ and the raw memory space $\mathbb{D}$ for all possible $\Delta_t$.
    

\subsection{The Zero-Trust Memory Game}

\textbf{Rationale.} Recent work has shown that LLM-based agents are susceptible to
sycophantic agreement~\citep{fanous2025syceval,sharma2023towards}, misrepresentation~\citep{perez2022red}, and belief
perseveration~\citep{raj2022measuring}, all undermining the reliability of
agents acting as honest contributors or auditors.
Classical fault-tolerance models (e.g., Byzantine fault
tolerance~\citep{castro1999practical,lamport2019byzantine}) address a bounded
fraction of faulty agents but assume the remainder are honest, which is untenable in dynamic agent debating.
We therefore adopt a \emph{zero-trust} formulation: no agent is assumed to act
in alignment with collective memory integrity, and the system must enforce
correctness structurally rather than by trusting any participant.
See App.~\ref{app:zt_rationale} for a full discussion.

\begin{definition}[\textbf{Zero-Trust}]
\label{assm:zerotrust}
No agent in $\mathcal{N}$ is assumed to act in alignment with memory
integrity. Specifically: \textbf{(i) The contributor $a_c$} may craft $\Delta_t$ to advance a
    private objective $u_c$ misaligned with the true information state.\textbf{ (ii) Each auditor $a_i \in \mathcal{A}_t$} may act to advance its own private
    objective $u_i$, which may include collusion with $a_c$, or
    adversarial rejection of truthful updates.
\end{definition}

\noindent
This assumption \textbf{does not} require that all agents are adversarial, only
that none can be \emph{a priori} trusted.

\textbf{Game Formulation.} We formalize memory updating as a two-stage game: In the \textbf{contribution stage,} the contributor $a_c$ observes the current memory state $\mathcal{M}_t$ and a private
signal $\sigma_c \in \Sigma$ (representing its local reasoning context), and selects
a delta:
\begin{equation}
    \Delta_t \in \arg\max_{\Delta \in \mathbb{D}}\ u_c(\Delta,\ \mathcal{M}_t,\ \sigma_c)
\end{equation}
The private utility $u_c$ may encode objectives such as promoting a particular
conclusion, inflating confidence estimates, or injecting subtly biased representations
into $\mathcal{M}$. 

In the \textbf{auditing stage}, each auditor $a_i \in \mathcal{A}_t$ observes $\Delta_t$ and the current memory
$\mathcal{M}_t$, and independently emits a binary signal:
\begin{equation}
    v_i \in \{0, 1\}, \quad v_i = \pi_i(\Delta_t,\ \mathcal{M}_t,\ \sigma_i)
\end{equation}
where $\sigma_i \in \Sigma$ is $a_i$'s private signal and $\pi_i$ is its
(potentially mixed) strategy. The aggregate audit decision is formed by a
\textbf{commitment rule} $\mathcal{C}: \{0,1\}^{|\mathcal{A}_t|} \to \{0,1\}$,
such as a weighted majority vote. The update is committed if and only if
$\mathcal{C}(\{v_i\}_{i \in \mathcal{A}_t}) = 1$.

\subsection{Game-Theoretical Utility Functions and System-Level Equilibrium}
\label{ssec:equilibrium}

Each participant in the game defines a \textbf{utility function} to quantify
their payoffs in the memory game. At the system level, we leverage an
equilibrium of these payoffs to achieve optimal trust in memory. 

\textbf{Contributor utility.}
The contributor $a_c$ trades off the persuasiveness of $\Delta_t$
(the probability it is committed) against the risk of detection:
\begin{equation}
\label{eq:utility-contributor}
    u_c(\Delta_t,\ \mathcal{M}_t) =
        \underbrace{\Pr[\mathcal{C}(\{v_i\}) = 1 \mid \Delta_t]}_{\text{commitment gain}}
        \cdot \underbrace{\mathbb{I}(\Delta_t,\ \mathcal{M}_t)}_{\text{influence}}
        - \underbrace{D(\Delta_t,\ \mathcal{M}_t)}_{\text{detection penalty}}
\end{equation}
where $\Pr[\mathcal{C}(\{v_i\}) = 1 \mid \Delta_t]$ is the probability of
successfully committing $\Delta_t$, $\mathbb{I}(\Delta_t, \mathcal{M}_t)$
measures the downstream influence of $\Delta_t$ on future retrievals and
reasoning steps, and $D(\Delta_t, \mathcal{M}_t) \in [0,1]$ is the
detectability of $\Delta_t$ as inconsistent with $\mathcal{M}_t$.

\textbf{Auditor utility.}
Each auditor $a_i$
evaluates the \emph{alignment} of the proposed $\Delta_t$ with the
current memory state $\mathcal{M}_t$:
\begin{equation}
\label{eq:utility-auditor}
    u_i(v_i,\ \Delta_t,\ \mathcal{M}_t) =
        \underbrace{\texttt{Align}(\Delta_t,\ \mathcal{M}_t)}_{\text{consistency measure}}
        - \underbrace{c_i \cdot \mathbf{1}[v_i = \text{scrutinize}]}_{\text{effort cost}}
\end{equation}
where $\texttt{Align}(\Delta_t, \mathcal{M}_t) \in [-1, +1]$ measures the
semantic and logical consistency of $\Delta_t$ with existing memory, which is
positive when $\Delta_t$ corroborates or extends $\mathcal{M}_t$, negative
when it contradicts or destabilizes it. The $c_i > 0$ is the cost of
careful scrutiny. The alignment signal $\texttt{Align}(\cdot)$ is not
self-reported by $a_i$ but is computed by the calibration mechanism $\Phi$
from $a_i$'s submitted evidence (probe queries or attestation paths),
ensuring no auditor can manipulate their own utility measurement.
The effort cost $c_i$ is estimated online as the empirical computational
overhead of $a_i$'s scrutiny actions across rounds, requiring no
pre-specification (see App.~\ref{app:utility_rationale}).

\textbf{System-level equilibrium as the indicator of optimal trust in memory.}
With both utility functions defined above, the system reaches an equilibrium
when no participant can improve their payoff by unilaterally changing
their strategy: the contributor cannot craft a more influential $\Delta_t$
without increasing its detection risk, and no auditor can reduce their
scrutiny effort without degrading the alignment signal they produce.
Formally:
\begin{align}
\label{eq:equilibrium}
    \pi_i^* \in \arg\max_{\pi_i}\
        u_i(\pi_i,\ \Delta_t^*,\ \mathcal{M}_t)
        \quad \forall\, a_i \in \mathcal{A}_t \quad \texttt{and} \quad \Delta_t^* \in \arg\max_{\Delta \in \mathbb{D}}\
        u_c(\Delta,\ \mathcal{M}_t)
\end{align}
However, not all equilibria are desirable. Under multi-agent debating, three
integrity-degrading equilibria generically emerge: auditors
\emph{free-ride} by approving without scrutiny to avoid effort cost
$c_i$; \textit{colluding} auditors approve $\Delta_t$ unconditionally for
side benefits; or \textit{adversarial} auditors reject all updates regardless
of their quality (Propositions~\ref{prop:freeriding}--\ref{prop:adversarial_rejection},
App.~\ref{app:failure_modes}). These failures echo to incorrect memory commitment (Figure \ref{fig:intro}, detailed in App.~\ref{app:failure_modes}) and share a common cause:
the commitment outcome depends on auditor votes, giving agents wrong
incentives.

To avoid those failure modes, we develop calibration mechanism $\Phi$ to break such dependency, i.e.,
commitment is determined structurally by a calibration indicator $\rho$, not
by votes, which calibrate the memory system without requiring
any agent to be honest. Guided by a unified design rationale, we adapt $\Phi$ based on specific memory architecture (embedding-based or graph-based) in \S\ref{sec:calibration}. Due to space constraints, we defer the theoretical guarantees on equilibrium existence, as well as how calibration leads to optimal trust in memory, to App. \ref{app:equilibrium}.
\section{\system: Equilibrium-Guided Memory Calibration}
\label{sec:calibration}

\subsection{Design Rationale}

\S\ref{sec:problem} showed that when the commit decision depends on agent decisions, each agent can
improve its own utility and cause incorrect commitment (failures of the equilibrium). Hence, reaching the equilibrium 
requires that the memory commit decision depend on indicators that no agent can
dominate alone. Note that, the equilibrium between $u_c$ and $u_i$ supplies those indicators: The
contributor's $u_c$ (Eq.~\ref{eq:utility-contributor}) rewards
\emph{influence} over future retrieval, and the auditor's $u_i$
(Eq.~\ref{eq:utility-auditor}) rewards \emph{alignment} with its own
reasoning context. A truly informative $\Delta_t$ raises both simultaneously. A manipulative $\Delta_t$ raises $u_c$ but lowers $u_i$ for any
auditor outside the manipulated region. Quantifying $\Delta_t$ against
this gap reaches the equilibrium without trusting any
single agent.

\begin{figure}[t]
  \centering
  \includegraphics[width=\textwidth]{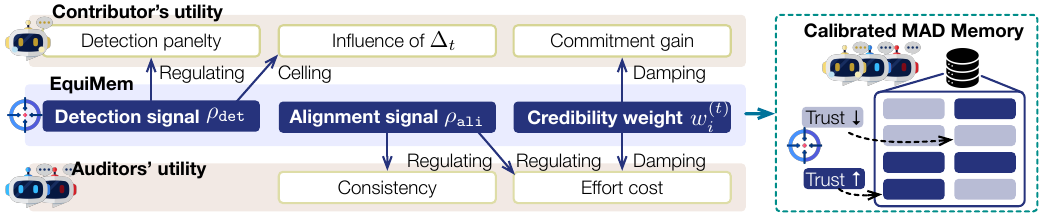}
  \caption{Overview of how the three calibration quantities in \system control utility components. 
    See App.~\ref{app:utility-mapping} for component-by-component
    justifications.}
    \vspace{-3pt}
  \label{fig:method}
\end{figure}

We hence implement the design  above as \textbf{\system}, an inference-time
framework that turns each agent's normal action into evidence. \system does not replace agents, as contributors still propose
updates and auditors still issue queries or walk paths during debate. Instead, the
contributor's $\Delta_t$ is checked against the geometry or topology
of $\mathcal{M}_t$. Each auditor's existing query or path traversal
(used during debate anyway) becomes a \textbf{probe} that \system
samples and quantifies. \system never reads agent-reported scores or
votes, which turns the three failure modes into actions that
either gain nothing or hurt the agent who tries them: an auditor who
hides its probe loses credibility weight, an auditor who tries to
craft a favorable probe cannot do so because the sample is drawn at
random, and an auditor who rejects everything has no rejection power
because votes are not used.

Given $\Delta_t$, \system produces a calibration indicator
$\rho(\Delta_t, \mathcal{M}_t) \in [0,1]$ that combines (i) a
\textbf{detection signal} on the contributor side, (ii) an
\textbf{alignment signal} from auditor probes, and (iii) a
\textbf{credibility weight} $w_i^{(t)}$ on each auditor's
contribution. Commitment proceeds iff $\rho \geq \rho^*$, where
$\rho^*$ is the running median of recent committed scores
(cold-start in App.~\ref{app:rho_star}). Committed entries are
stored with trust weights proportional to $\rho$
(\S\ref{ssec:retrieval}). No entry is deleted, so a low-trust entry
can be promoted later if new evidence raises its score. Figure \ref{fig:method} shows how \system controls game-theoretical utilities ($u_c$ and  $u_i$) with detailed justification in \ref{app:utility-mapping}. Noticeably, \system incurs marginal latency comparing to base memory-augmented reasoning (App. \ref{app:complexity}).

\subsection{Architecture-Specific Calibration}
\label{ssec:arch_calib}

We instantiate \system on two memory types: embedding-based
\citep{wang2023voyager,zhong2024memorybank} and graph-based
\citep{anokhin2024arigraph,zhang2025g}. Episodic and trajectory
memories reduce to one of these two for retrieval purposes
\citep{anokhin2024arigraph,wang2023voyager}.

\textbf{Embedding-based memory} stores entries as dense vectors
$\mathcal{M}_t = \{(\mathbf{e}_j, m_j)\}_{j=1}^N$ with $\mathbf{e}_j =
f_\theta(m_j) \in \mathbb{R}^d$. A delta $\Delta_t =
\{(\mathbf{e}_k^\Delta, m_k^\Delta)\}_{k=1}^K$ is calibrated in two
parallel passes.

The \emph{first pass} computes the detection signal from the local
density of each new vector. For each $\mathbf{e}_k^\Delta$ we count
its dense neighbors and compare to the running average:
\(
    \rho_{\text{detect}}(\Delta_t) = 1 -
    \frac{1}{K} \sum_{k=1}^{K} \frac{\hat{r}_k}{\bar{r} + \epsilon}
\)
where $\hat{r}_k$ counts neighbors of $\mathbf{e}_k^\Delta$ within an
adaptive radius (running median of pairwise similarities in
$\mathcal{M}_t$, App.~\ref{app:adaptive_params}), and $\bar{r}$ is
the running mean neighborhood size. A high $\hat{r}_k$ means the new
entry would dominate retrieval across an unusually wide region of
memory, which is the high-influence manipulation that $u_c$ rewards.

The \emph{second pass} computes the alignment signal from auditor
probes. Each auditor $a_i$ already keeps a queue of pending
retrieval queries $Q_i$ during debate. \system samples one query
$q_i \sim Q_i$ uniformly at random per auditor, so the auditor cannot
choose which one is used. We then compute:
\begin{equation}
    \rho_{\text{align}}(\Delta_t) = 1 -
    \frac{1}{|\mathcal{A}_t|} \sum_{i \in \mathcal{A}_t}
    w_i^{(t)} \cdot
    d_{\text{RBO}}\!\left(
        \text{top-}k(\mathcal{M}_t, q_i),\
        \text{top-}k(\mathcal{M}_t \cup \Delta_t, q_i)
    \right)
\label{eq:rho_ret}
\end{equation}
where $d_{\text{RBO}}(L_1, L_2) \in [0,1]$ is the rank-biased overlap
distance between two lists \citep{corsi2024treatment}, equal
to zero when the lists agree and one when they are disjoint. An
informative $\Delta_t$ shifts retrieval results across
auditors, while a manipulated $\Delta_t$ rewrites the top-$k$ auditor-retrieved entries
from its original region. The latter produces high
$d_{\text{RBO}}$ values that push $\rho_{\text{align}}$ down.

The credibility weight $w_i^{(t)}$ adds an inter-round cost to
cheating. After each round, \system checks whether $a_i$'s probe
$q_i$ is \emph{well-formed}: non-empty, syntactically valid, and
drawn from the context window the auditor declared. A bad probe pulls the weight toward $0$, and a good probe
pulls it back toward $1$:
\begin{equation}
    w_i^{(t+1)} =
    \begin{cases}
        w_i^{(t)} \cdot (1 - \delta_t)
        & \text{if } q_i \text{ is invalid} \\[2pt]
        w_i^{(t)} + \delta_t \cdot (1 - w_i^{(t)})
        & \text{if } q_i \text{ is valid}
    \end{cases}
    \quad
    \delta_t = \text{fraction of invalid probes at round } t
\label{eq:cred_weight}
\end{equation}
Both directions move at the same adaptive rate $\delta_t$, so no
parameter is tuned. An auditor who slips once recovers after a few
clean rounds. An auditor whose long-run invalid rate is $p$ settles
at $w^* = 1 - p$, so credibility tracks long-run honesty (proof in
App.~\ref{app:credibility}).

The two passes combine as $\rho = \sqrt{\rho_{\text{detect}} \cdot
\rho_{\text{align}}}$. The geometric mean prevents a delta that fails
one pass from being saved by the other. The square root keeps $\rho
\in [0,1]$ on the same scale as each pass and avoids the
score-shrinking of a plain product.

\textbf{Graph-based memory} stores knowledge as a directed graph
$\mathcal{M}_t = (\mathcal{V}_t, \mathcal{E}_t, \ell)$ with edges
$\mathcal{E}_t \subseteq \mathcal{V}_t \times \mathcal{R} \times
\mathcal{V}_t$. A delta $\Delta_t = (\mathcal{V}_t^\Delta,
\mathcal{E}_t^\Delta)$ is first written into a hidden \emph{staging
area} $\mathcal{M}_t^{\text{stage}}$. The delta becomes visible only
after calibration. Staging blocks adversarial rejection because no
agent can see what to reject, and adds no cost because writes use
the standard graph-write path.

The \emph{first pass} computes the detection signal from two
topological checks per new edge $e = (u, r, v) \in
\mathcal{E}_t^\Delta$: (i) The \textit{local check} asks whether $e$
fits $u$'s existing neighborhood, using a compatibility set
$\mathcal{R}_{\text{compat}}(r)$ of relation types that frequently
co-occur with $r$ elsewhere in the graph
(App.~\ref{app:compat_estimation}):
\begin{equation}
    s_{\text{local}}(e) =
    \frac{\bigl|\{(u, r', v') \in \mathcal{E}_t :
    r' \in \mathcal{R}_{\text{compat}}(r)\}\bigr|}
    {|\mathcal{N}(u)| + \epsilon}
\end{equation}
(ii) The \textit{path check} asks whether existing multi-hop paths $\mathcal{P}_{u,v}$
between $u$ and $v$ already support a relation of type $r$. We
enumerate paths $\pi \in \mathcal{P}_{u,v}$ by bidirectional BFS up to depth $L_{\max} = \lceil
\log |\mathcal{V}_t| \rceil$, which matches the typical diameter of
knowledge graphs and keeps BFS efficient
(App.~\ref{app:graph_bfs}):
\begin{equation}
    s_{\text{path}}(e) =
    \frac{1}{|\mathcal{P}_{uv}|}
    \sum_{\pi \in \mathcal{P}_{uv}}
    \mathbf{1}\!\left[
        \text{comp}(r, \text{type}(\pi)) \geq \eta_t
    \right]
    \label{eq:path-check}
\end{equation}
where $\eta_t$ is the running median of past compatibility scores.
Combining the two gives the per-edge score $\rho_{\text{detect}}(e) =
\alpha_t \, s_{\text{local}}(e) + (1 - \alpha_t) \, s_{\text{path}}(e)$.
The weight $\alpha_t$ is the fraction of this round's edges that have
at least one supporting path, so it shifts toward whichever check has
more evidence. An edge that fails either check raises the detection
cost that $u_c$ must pay for manipulation.

The \emph{second pass} computes the alignment signal from auditor
probes. Each auditor $a_i$ is already walking some path $\pi_i$ in
$\mathcal{M}_t$ during reasoning. \system samples one new edge
$e_i^* \in \mathcal{E}_t^\Delta$ uniformly at random per auditor:
\(
    \rho_{\text{align}}(\Delta_t) =
    \frac{1}{|\mathcal{A}_t|}
    \sum_{i \in \mathcal{A}_t}
    w_i^{(t)} \cdot
    \mathbf{1}\!\left[
        \text{reachable}(e_i^*, \pi_i;\ \mathcal{R}_{\text{compat}})
    \right]\), where $\text{reachable}(\cdot)$ is an $O(\log |\mathcal{V}_t|)$
lookup that returns $1$ when $e_i^*$ connects to either endpoint of
$\pi_i$ via compatible relations. As in the embedding case, an
informative $\Delta_t$ extends paths the auditor is already walking,
while a manipulative $\Delta_t$ adds edges disconnected from
existing reasoning. Reachability is a graph-structural fact rather
than a semantic judgment, so a failed probe triggers the same
credibility decay as Eq.~\ref{eq:cred_weight}.

The two passes combine as $\rho = \sqrt{\rho_{\text{detect}} \cdot
\rho_{\text{align}}}$ with $\rho_{\text{detect}} = \frac{1}{|\mathcal{E}_t^\Delta|}
\sum_{e \in \mathcal{E}_t^\Delta} \rho_{\text{detect}}(e)$, by the
same rationale as the embedding case.

\subsection{Memory Retrieval}
\label{ssec:retrieval}

Calibration shapes both \emph{what gets written} and \emph{how much
written entries count when read back}. Each committed entry carries
its score $\rho$ as a trust weight. For embedding memory, retrieval
re-scales each candidate vector as $\tilde{\mathbf{e}}_k =
\sqrt{\rho_k} \cdot \mathbf{e}_k$, so low-trust entries sit closer
to the origin and are less likely to land in the top-$k$. For graph
memory, a path's strength is the product $\prod_{e \in \pi} \rho_e$
of trust weights along its edges, so a path through any low-trust
edge is automatically discounted. Retrieval cost is unchanged in
both cases because trust weights are baked into the index at load
time, not computed at query time
(App.~\ref{app:retrieval_complexity}).

This dual enforcement closes two gaps. A delta that barely passes
$\rho^*$ at write time still has a small trust weight, so a
contributor cannot inject low-quality content by aiming at the
threshold. And when later evidence raises or lowers an entry's
$\rho$, we update the weight in place rather than delete the entry,
so memory remains recoverable as evidence accumulates.
\section{Experiments}
\label{sec:experiments}

We evaluate \system along three research questions: \textbf{RQ1:} How effectively does \system maintain memory integrity compared to existing approaches? 
\textbf{RQ2:} How robust is \system under different attack strategies? \textbf{RQ3:} How much computational overhead does \system add?

\subsection{Experimental Setup}
\label{sec:setup}

\textbf{Datasets.}
We evaluate on four datasets spanning two groups:
\textbf{(1) reasoning-intensive}, including
HotpotQA~\citep{yang2018hotpotqa} (multi-hop question
answering) and
LoCoMo~\citep{maharana2024evaluating} (long-term
conversational memory requiring temporal reasoning
across distant dialogue sessions); and
\textbf{(2) action-intensive}, including
ALFWorld~\citep{shridhar2020alfworld} (embodied
household tasks with sequential action chains) and
WebShop~\citep{yao2022webshop} (web-based product
search and purchase).  All require agents to
update and retrieve the shared memory.


\textbf{Baselines.} We compare four baseline methods that safeguard agentic memory:
\textbf{(1) Vanilla}: no additional safeguarding mechanism; LLMs are simply prompted to ensure safe memory use;
\textbf{(2) LLM-Audit} (used in \citep{wei2025memguard}): an auxiliary LLM scores each proposed $\Delta_t$;
\textbf{(3) PPL-Filter}~\citep{alon2023detecting}: memory entries whose perplexity under a reference LLM exceeds an adaptive threshold are filtered out; and
\textbf{(4) A-MemGuard}~\citep{wei2025memguard}: consensus is validated across the reasoning paths of agents.


\textbf{Memory architectures.}
We evaluate on two embedding-based (vector) memories,
Voyager \citep{wang2023voyager} and
MemoryBank~\citep{zhong2024memorybank}, and two graph-based (structural) memory,
AriGraph~\citep{anokhin2024arigraph} and
G-Memory~\citep{zhang2025g}.  We additionally
report a No~Memory (stateless) baseline as a lower bound.

\textbf{Backbone of MAD systems.}
To cover various MAD designs, we integrate \system, baselines, and memory architectures into three
different MAD systems:
AutoGen~\citep{wu2024autogen} (conversational
turn-taking),
MacNet~\citep{qian2024scaling} (configurable graph
topologies), and
DyLAN~\citep{liu2023dynamic} (dynamic LLM-agent
network with learned topology selection).


\textbf{Other configurations.} By default, all evaluations use 6$\times$ Qwen3-VL-8B-Instruct \citep{qwen3technicalreport} as the MAD backbone besides specification. We defer configuration details, other settings, and results to App. \ref{app:expt}.

\subsection{Overall Effectiveness (RQ1)}
\label{ssec:main-expt}

\begin{table*}[t]
\centering
\caption{Performance comparison with baseline memory safeguarding or calibration approaches.
We report accuracy for HotpotQA (HQA), F1 for LoCoMo (LCM), success rate for ALFWorld (ALF), and reward for WebShop (WS) (mean$\pm$std over 5 runs). Note that A-MemGuard~\citep{wei2025memguard} applies only to embedding-based memory (\texttt{---} otherwise).  The\colorbox{best}{best} and \colorbox{second}{second best} results are highlighted.}
\label{tab:main}
\renewcommand{\arraystretch}{0.9}
\setlength{\tabcolsep}{3.5pt}
\resizebox{\textwidth}{!}{
\begin{tabular}{l|l|cccc|cccc|cccc}
\toprule
& & \multicolumn{4}{c|}{\textbf{AutoGen}~\citep{wu2024autogen}}
  & \multicolumn{4}{c|}{\textbf{MacNet}~\citep{qian2024scaling}}
  & \multicolumn{4}{c}{\textbf{DyLAN}~\citep{liu2023dynamic}} \\
\cmidrule(lr){3-6} \cmidrule(lr){7-10} \cmidrule(lr){11-14}
\textbf{Memory} & \textbf{Method}
  & \textbf{HQA} & \textbf{LCM}
  & \textbf{ALF} & \textbf{WS}
  & \textbf{HQA} & \textbf{LCM}
  & \textbf{ALF} & \textbf{WS}
  & \textbf{HQA} & \textbf{LCM}
  & \textbf{ALF} & \textbf{WS} \\
\midrule

Stateless & No-memory
  & 27.5\stdval{5.3} & 20.2\stdval{7.4} & 62.1\stdval{2.4} & 21.4\stdval{7.7}
  & 28.6\stdval{2.2} & 19.8\stdval{1.3} & 47.3\stdval{6.8} & 23.5\stdval{5.8}
  & 28.4\stdval{2.1} & 21.0\stdval{4.5} & 53.8\stdval{4.0} & 24.9\stdval{8.7} \\

\midrule

\multirow{5}{*}{\shortstack[l]{Voyager\\\scriptsize{(emb.)}}}
& Vanilla
  & 32.3\stdval{7.4} & 25.8\stdval{6.1} & 65.0\stdval{3.6} & 24.3\stdval{6.1}
  & 32.6\stdval{2.9} & 26.3\stdval{7.3} & 56.4\stdval{7.3} & 30.7\stdval{2.5}
  & 32.6\stdval{4.3} & 22.1\stdval{3.6} & 61.9\stdval{9.2} & 26.8\stdval{8.4} \\
& + LLM Audit
  & 33.5\stdval{6.1} & 28.2\stdval{6.8} & 66.5\stdval{0.6} & 25.8\stdval{4.9}
  & 34.8\stdval{3.5} & 31.5\stdval{8.2} & 57.2\stdval{7.8} & 37.5\stdval{2.8}
  & 36.8\stdval{3.7} & 27.5\stdval{2.8} & 66.1\stdval{9.6} & 29.2\stdval{7.8} \\
& + PPL Filter
  & 34.6\stdval{5.9} & 34.1\stdval{6.7} & 64.2\stdval{1.5} & \cellcolor{second}35.2\stdval{6.6}
  & 40.2\stdval{1.9} & 35.7\stdval{6.9} & 60.1\stdval{6.4} & 37.8\stdval{2.7}
  & 33.5\stdval{3.4} & 29.2\stdval{3.2} & 64.8\stdval{7.9} & 35.5\stdval{6.6} \\
& + A-MemGuard
  & \cellcolor{second}42.1\stdval{6.3} &\cellcolor{second} 37.5\stdval{4.7} & \cellcolor{second}72.7\stdval{2.4} & 31.4\stdval{5.5}
  & \cellcolor{second}43.3\stdval{2.7} & \cellcolor{second}43.2\stdval{7.2} & \cellcolor{second}65.8\stdval{6.5} & \cellcolor{second}39.5\stdval{2.5}
  & \cellcolor{second}41.2\stdval{4.2} & \cellcolor{second}33.6\stdval{3.4} & \cellcolor{second}74.3\stdval{8.6} & \cellcolor{second}41.4\stdval{7.7} \\
& \textbf{+ \system}
  & \cellcolor{best}\textbf{56.4}\stdval{6.3} & \cellcolor{best}\textbf{46.4}\stdval{5.5} & \cellcolor{best}\textbf{76.5}\stdval{0.4} & \cellcolor{best}\textbf{46.6}\stdval{5.6}
  & \cellcolor{best}\textbf{53.6}\stdval{2.6} & \cellcolor{best}\textbf{51.2}\stdval{4.4} & \cellcolor{best}\textbf{70.4}\stdval{5.4} & \cellcolor{best}\textbf{50.3}\stdval{1.5}
  & \cellcolor{best}\textbf{52.4}\stdval{2.5} & \cellcolor{best}\textbf{43.2}\stdval{2.8} & \cellcolor{best}\textbf{81.6}\stdval{8.5} & \cellcolor{best}\textbf{49.4}\stdval{5.8} \\

\midrule

\multirow{5}{*}{\shortstack[l]{MemBank\\\scriptsize{(emb.)}}}
& Vanilla
  & 33.6\stdval{6.2} & 21.4\stdval{5.7} & 64.9\stdval{3.5} & 25.6\stdval{8.8}
  & 33.6\stdval{8.2} & 20.7\stdval{10.2} & 48.9\stdval{7.3} & 24.2\stdval{7.3}
  & 29.6\stdval{1.2} & 24.9\stdval{6.7} & 55.2\stdval{3.0} & 27.5\stdval{3.4} \\
& + LLM Audit
  & 39.2\stdval{5.8} & 29.2\stdval{5.9} & 68.6\stdval{3.8} & 31.6\stdval{7.7}
  & 41.4\stdval{7.7} & 22.8\stdval{10.2} & \cellcolor{second}60.3\stdval{11.2} & 31.8\stdval{7.3}
  & 36.5\stdval{1.1} & 30.4\stdval{6.4} & 59.3\stdval{3.2} & 29.1\stdval{3.5} \\
& + PPL Filter
  & \cellcolor{second}43.7\stdval{5.1} & 30.6\stdval{5.9} & 70.9\stdval{3.5} & 37.6\stdval{9.7}
  & 41.8\stdval{8.7} & 29.4\stdval{10.4} & 57.2\stdval{7.2} & 25.6\stdval{8.0}
  & 38.6\stdval{1.3} & 29.1\stdval{6.2} & 62.8\stdval{3.3} & \cellcolor{second}40.6\stdval{3.8} \\
& + A-MemGuard
  & 41.1\stdval{4.8} & \cellcolor{second}34.1\stdval{5.9} & \cellcolor{second}74.0\stdval{3.3} & \cellcolor{second}39.1\stdval{6.3}
  & \cellcolor{second}44.5\stdval{6.2} & \cellcolor{second}32.7\stdval{7.6} & 58.5\stdval{9.0} & \cellcolor{second}39.6\stdval{7.6}
  & \cellcolor{second}42.3\stdval{0.9} & \cellcolor{second}37.2\stdval{5.7} & \cellcolor{second}67.5\stdval{2.4} & 37.3\stdval{3.5} \\
& \textbf{+ \system}
  & \cellcolor{best}\textbf{52.0}\stdval{4.9} & \cellcolor{best}\textbf{39.6}\stdval{5.1} & \cellcolor{best}\textbf{79.2}\stdval{3.3} & \cellcolor{best}\textbf{46.8}\stdval{7.1}
  & \cellcolor{best}\textbf{56.1}\stdval{7.8} & \cellcolor{best}\textbf{38.8}\stdval{8.4} & \cellcolor{best}\textbf{66.0}\stdval{10.9} & \cellcolor{best}\textbf{55.0}\stdval{5.9}
  & \cellcolor{best}\textbf{53.8}\stdval{0.7} & \cellcolor{best}\textbf{42.2}\stdval{5.7} & \cellcolor{best}\textbf{71.3}\stdval{2.2} & \cellcolor{best}\textbf{51.2}\stdval{3.2} \\

\midrule

\multirow{5}{*}{\shortstack[l]{AriGraph\\\scriptsize{(graph)}}}
& Vanilla
  & 49.2\stdval{2.3} & 42.3\stdval{3.8} & 69.4\stdval{5.1} & 28.9\stdval{4.7}
  & 37.7\stdval{3.7} & 39.8\stdval{4.9} & 58.3\stdval{4.3} & 37.4\stdval{10.2}
  & 34.8\stdval{4.3} & 35.4\stdval{13.5} & 64.1\stdval{6.4} & 34.6\stdval{5.3} \\
& + LLM Audit
  & \cellcolor{second}56.8\stdval{2.2} & 47.6\stdval{3.0} & 75.2\stdval{4.2} & 34.2\stdval{4.0}
  & 41.5\stdval{3.2} & 41.5\stdval{5.1} & 64.8\stdval{4.5} & 42.6\stdval{10.8}
  & 38.4\stdval{4.0} & 40.1\stdval{12.7} & \cellcolor{second}71.1\stdval{6.8} & 44.5\stdval{4.7} \\
& + PPL Filter
  & 53.2\stdval{2.4} & \cellcolor{second}52.4\stdval{3.9} & \cellcolor{second}76.5\stdval{5.2} & \cellcolor{second}38.5\stdval{4.6}
  & \cellcolor{second}45.2\stdval{3.3} & \cellcolor{second}49.8\stdval{5.3} & \cellcolor{second}66.4\stdval{3.7} & \cellcolor{second}46.2\stdval{11.4}
  & \cellcolor{second}43.6\stdval{3.4} & \cellcolor{second}46.3\stdval{14.2} & 66.2\stdval{6.0} & \cellcolor{second}45.8\stdval{4.8} \\
& + A-MemGuard
  & --- & --- & --- & ---
  & --- & --- & --- & ---
  & --- & --- & --- & --- \\
& \textbf{+ \system}
  & \cellcolor{best}\textbf{63.3}\stdval{1.5}
  & \cellcolor{best}\textbf{65.2}\stdval{3.0}
  & \cellcolor{best}\textbf{80.5}\stdval{3.5}
  & \cellcolor{best}\textbf{44.1}\stdval{4.0}
  & \cellcolor{best}\textbf{53.7}\stdval{3.2}
  & \cellcolor{best}\textbf{62.7}\stdval{2.9}
  & \cellcolor{best}\textbf{73.1}\stdval{4.0}
  & \cellcolor{best}\textbf{52.5}\stdval{9.4}
  & \cellcolor{best}\textbf{54.0}\stdval{3.5}
  & \cellcolor{best}\textbf{57.3}\stdval{9.5}
  & \cellcolor{best}\textbf{73.1}\stdval{4.8}
  & \cellcolor{best}\textbf{52.6}\stdval{4.8} \\

\midrule

\multirow{5}{*}{\shortstack[l]{G-Memory\\\scriptsize{(graph)}}}
& Vanilla
  & 35.7\stdval{9.2} & 43.2\stdval{4.1} & 72.1\stdval{3.8} & 39.7\stdval{6.3}
  & 35.6\stdval{7.4} & 47.6\stdval{1.5} & 67.1\stdval{12.5} & 45.1\stdval{4.4}
  & 34.7\stdval{7.7} & 37.4\stdval{2.8} & 59.1\stdval{13.6} & 37.8\stdval{7.9} \\
& + LLM Audit
  & 38.2\stdval{8.7} & 49.8\stdval{4.5} & 76.6\stdval{3.1} & 41.2\stdval{6.1}
  & 40.0\stdval{7.9} & 51.3\stdval{1.4} & 68.5\stdval{10.4} & 49.3\stdval{4.9}
  & 41.2\stdval{7.9} & 40.1\stdval{2.7} & 67.4\stdval{13.3} & 45.5\stdval{8.2} \\
& + PPL Filter
  & \cellcolor{second}44.5\stdval{8.1} & \cellcolor{second}53.2\stdval{4.2} & \cellcolor{second}78.4\stdval{3.1} & \cellcolor{second}48.6\stdval{6.1}
  & \cellcolor{second}43.2\stdval{7.0} & \cellcolor{second}56.8\stdval{1.3} & \cellcolor{second}72.6\stdval{10.4} & \cellcolor{second}54.7\stdval{3.5}
  & \cellcolor{second}44.1\stdval{8.1} & \cellcolor{second}48.6\stdval{2.3} & \cellcolor{second}71.5\stdval{11.8} & \cellcolor{second}47.2\stdval{6.9} \\
& + A-MemGuard
  & --- & --- & --- & ---
  & --- & --- & --- & ---
  & --- & --- & --- & --- \\
& \textbf{+ \system}
  & \cellcolor{best}\textbf{52.5}\stdval{7.8}
  & \cellcolor{best}\textbf{59.1}\stdval{2.9}
  & \cellcolor{best}\textbf{84.4}\stdval{3.2}
  & \cellcolor{best}\textbf{53.9}\stdval{5.8}
  & \cellcolor{best}\textbf{54.5}\stdval{6.2}
  & \cellcolor{best}\textbf{62.1}\stdval{1.4}
  & \cellcolor{best}\textbf{74.2}\stdval{8.1}
  & \cellcolor{best}\textbf{60.1}\stdval{2.9}
  & \cellcolor{best}\textbf{57.9}\stdval{7.1}
  & \cellcolor{best}\textbf{54.6}\stdval{2.2}
  & \cellcolor{best}\textbf{76.3}\stdval{9.9}
  & \cellcolor{best}\textbf{55.6}\stdval{5.5} \\

\bottomrule
\end{tabular}}
\end{table*}

Table \ref{tab:main} reports results across all configurations, which yield several insights:

\ding{182} \textbf{State-aware calibration beats isolated checks.} Observe that \system ranks first in all columns, and its improvement over the strongest baseline is even larger than the gap separating those baselines from each other. We attribute this to a property of growing memory: as $\mathcal{M}_t$ accumulates entries, the dominant errors are those that read fluently in isolation but conflict with the broader memory state, wherein per-entry checks (LLM Audit, PPL Filter) cannot catch such errors, and even A-MemGuard's local consensus misses global conflicts. \system's two passes break this constraint by quantifying each $\Delta_t$ against $\mathcal{M}_t$ as a whole memory space rather than on isolated memory entries.

\ding{183} \textbf{Gains scale inversely with baseline strength.} \system's improvement over the next-best safeguard is largest where baselines are weakest (LoCoMo, WebShop) and smallest where they are already strong (ALFWorld + G-Memory). Calibration acts as an amplifier of memory quality rather than a fixed-offset gain: when fewer errors reach the gating step, \system has less to filter.

\ding{184} \textbf{Architecture-benchmark interactions are amplified, not flattened.} AriGraph outperforms on multi-hop QA and long-context tasks and G-Memory on action tasks; \system amplifies both without modification because the calibration is instantiated from each architecture's own geometry or topology (\S\ref{ssec:arch_calib}). This also explains why \system is the only calibration applicable across all memory designs.

\ding{185} \textbf{Gains are robust to MAD.} Absolute performance varies substantially across different MAD backbones, yet \system's ranking and inverse-scaling pattern persist. We explain this by highlighting that \system operates at the memory layer, upstream of coordination, so calibration quality is decoupled from orchestration quality.

\ding{186} \textbf{Further analysis.} Due to space constraints, we defer the (1) ablation studies and discussions to App. \ref{app:ablation}, (2) alternative MAD backbones to App. \ref{app:diff-backbone}; (3) the effect of debate rounds to App. \ref{app:debate_rounds}, (4) how \system addresses memory-commitment failures and provide case studies to App.\ref{app:case_study}, (5) failure modes of \system itself to App.\ref{app:failure_mode}.

\subsection{Robustness under Adaptive Attacks (RQ2)}
\label{sec:rq2}

\begin{figure}[t]
\centering
\includegraphics[width=0.98\linewidth]{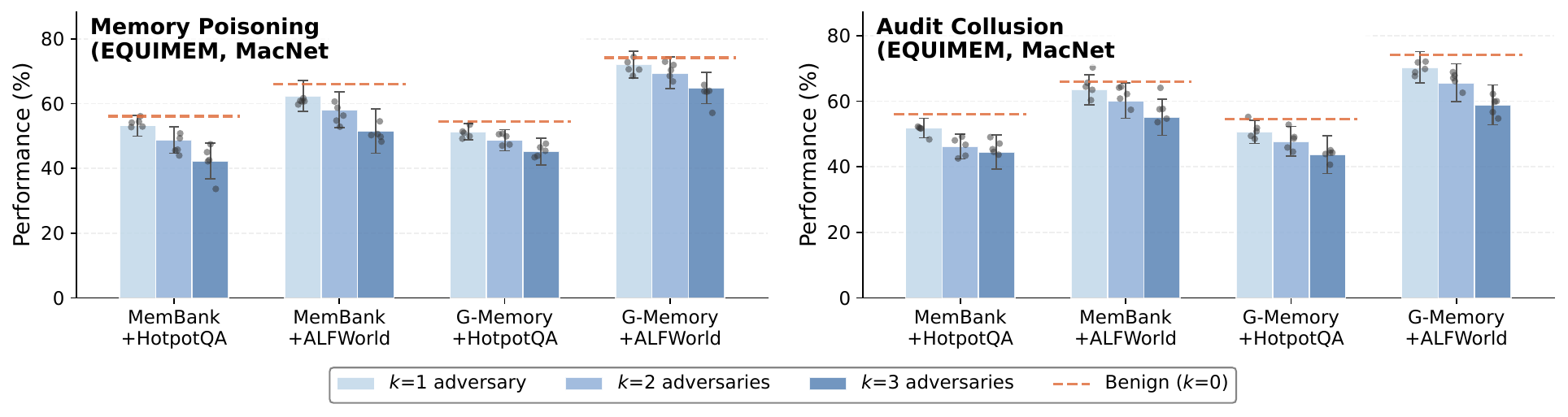}
\caption{Adversarial robustness of \system\ under
  two attacks. Bars show performance at $k=1,2,3$ adversarial
  agents; dashed orange lines show benign
  performance ($k=0$).
  Dots show individual run results (5 seeds). Baseline comparisons are in
  App.~\ref{app:adversarial_baselines}.}
  \vspace{-5pt}
\label{fig:rq3}
\end{figure}

We further test whether \system guarantees robustness when $k$ out of $N$ MAD agents are adversarial. We evaluate under two adaptive attack schemes targeting opposite sides of the zero-trust game: \textit{(i) Memory Poisoning} injects fabricated entries to fool $\rho_{\text{detect}}$, while \textit{(ii) Audit Collusion} has adversarial auditors coordinate fabricated probes to inflate $\rho_{\text{align}}$ ( detailed settings and results in App.~\ref{app:adversarial_baselines}).

\ding{182} \textbf{\system degrades gracefully, while baselines collapse dramatically.} Figure \ref{fig:rq3} shows that \system stays within 2-5\% of benign performance at $k=1$ and within 9-16\% at $k=3$. LLM Audit and PPL Filter, as shown in Figure \ref{fig:rq3_baselines}, App. \ref{app:adversarial_baselines}, drop by 19-35\% at $k=3$ and frequently fall below the no-memory baseline, i.e., the memory system with its defense becomes worse than no memory at all. The asymmetry traces to a single design choice: \system measures $\Delta_t$ against the memory space structurally, while baselines score each entry in isolation (or local consensus for A-MemGuard), which is exactly what adversaries exploit by generating fluent, low-perplexity poison.


\ding{183} \textbf{Credibility decay is the load-bearing mechanism under attack.} Credibility weighting matters little in benign settings (2-6\%, as in Table \ref{tab:ablation}), but under attack it becomes decisive. Adversarial auditors who submit fabricated probes accumulate weight decay over rounds, progressively muting their influence on $\rho_{\text{align}}$. A single adversary's weight collapses below 0.1 within a few rounds, and even at $k = 3$ the honest majority's probes still dominate because credibility decays multiplicatively while honest auditors recover toward $w_i$$\approx$1. Detailed baseline results are in App. \ref{app:adversarial_baselines}.

\subsection{Computational Overhead (RQ3)}
\label{sec:rq3}

\begin{figure*}[t]
\centering
\includegraphics[width=\linewidth]{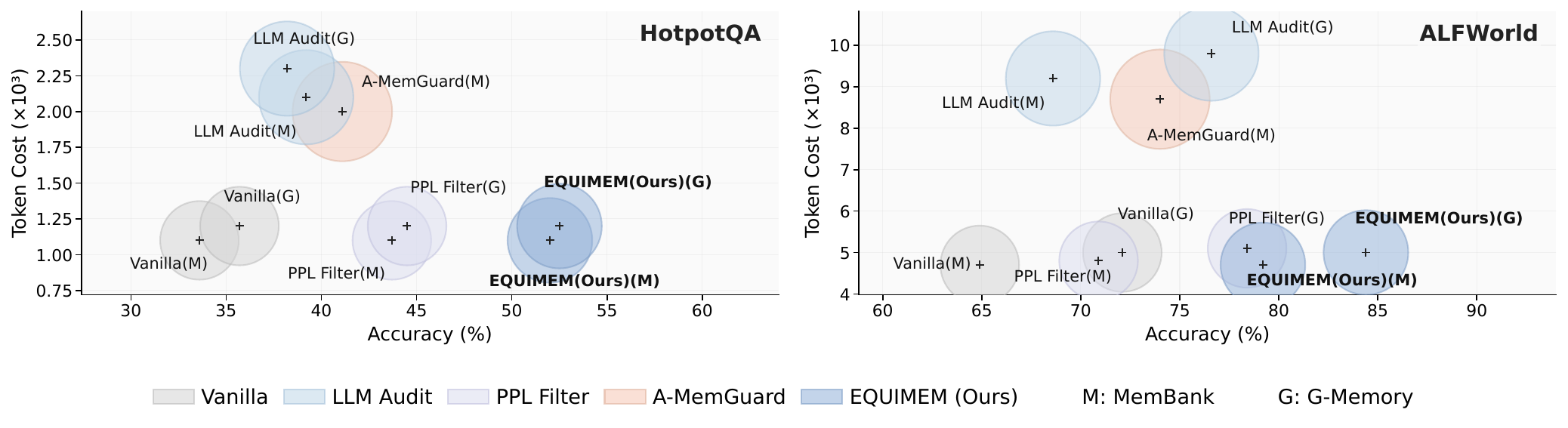}
\caption{Performance vs.\ token consumption. Each sub-figure shows MemBank (M) and G-Memory (G) under five calibration methods. Bubble size reflects token-cost variability.}
 \vspace{-5pt}
\label{fig:cost_analysis}
\end{figure*}

A critical property of \system is that calibration is LLM-free and algorithmic (\S\ref{sec:calibration}). Figure \ref{fig:cost_analysis} measures token consumption, and App. \ref{app:latency} quantifies inference time latency.

\ding{182} \textbf{\system matches Vanilla's token cost while delivering active-defense gains.} \system uses the same number of tokens as Vanilla, yet performs far better than every other method (its bubbles sit well to the right). This gain comes at no extra token cost, which is only possible because \system's calibration runs without any LLM call. In contrast, baseline methods pay \textasciitilde{}2× tokens for marginal gains: LLM Audit and A-MemGuard nearly double Vanilla's tokens for only +2 to +9\%, since each method requires an LLM call per $\Delta_t$, scaling linearly with the number of deltas and unavoidable without removing the defense.

\ding{183} \textbf{\system gives the best of both worlds.} No other method achieves both top performance and the lowest cost at the same time. PPL Filter is also free to run but achieves substantially smaller gains than \system. This pattern holds across embedding and graph memory and across reasoning and action benchmarks, showing that the LLM-free design is a built-in advantage rather than a lucky implementation choice. Inference timing (App. \ref{app:latency}) tells the same story: \system adds under 5\% latency, while LLM-based defenses add 50-94\%.


\section{Conclusion}

This work formulates shared-memory multi-agent debate as a zero-trust memory game and shows that its equilibrium structure indicates optimal trust in memory without assuming any agent honest. Guided by this formulation, we propose \system to calibrate memory using agents' existing retrieval activity as evidence, requiring no LLM call for calibration itself. Across extensive evaluations, \system consistently outperforms existing calibration methods, remains robust under up to half adversarial agents, and adds zero token overhead with under marginal latency. These results suggest that strong memory protection in multi-agent systems is achievable without trusting any individual agent or inflating inference cost.

\bibliographystyle{plain}
\bibliography{reference,craft-lab}

\appendix


\appendix

\section{Supplementary Information of Zero-Trust Memory Game}
\label{app:problem-def}

\subsection{Full Rationale for the Zero-Trust Formulation}
\label{app:zt_rationale}

The zero-trust formulation is motivated by three converging lines of evidence.

\textbf{Misalignment in LLM-based agents.}
Large language model agents have been shown to exhibit sycophantic
drift~\citep{fanous2025syceval,sharma2023towards}, wherein a model progressively
adjusts its stated beliefs toward those of a conversational partner regardless
of evidential support. In the context of shared memory, this implies that a
contributor agent may craft $\Delta_t$ not to reflect its best epistemic state
but to conform to or reinforce a perceived consensus already encoded in
$\mathcal{M}_t$. Analogously, an auditor agent subject to sycophancy will
approve $\Delta_t$ because it matches existing memory patterns rather than
because it is true. Both behaviors are misaligned with collective memory
integrity yet neither constitutes classic Byzantine behavior, i.e., the agent is not
``faulty'' in the sense of crashing or sending random messages; it is behaving
systematically in a way that is undetectable by fault-tolerance mechanisms.

\textbf{Limitations of Byzantine fault tolerance.}
Byzantine fault tolerance (BFT)~\citep{castro1999practical,lamport2019byzantine}
assumes that up to $f < n/3$ agents may behave arbitrarily, while the rest are
honest. This framework is ill-suited to multi-agent LLM systems for two reasons.
First, the fraction of misaligned agents is not known in advance and may exceed
the BFT threshold, particularly when agents share the same base model and
therefore share systematic biases~\citep{turpin2023language}. Second, BFT
assumes honest agents are identifiable by their consistent and correct behavior,
whereas in LLM systems the same agent may behave honestly in one context and
misalign in another depending on the prompt, the memory state, or the identities
of other agents~\citep{perez2022red}. The zero-trust assumption removes the
dependency on a trusted majority entirely.

\textbf{Social choice and manipulation in multi-agent deliberation.}
Multi-agent debate frameworks~\citep{du2024improving, liang2024encouraging,
chan2023chateval, xiong2023examining, wang2024rethinking}
aggregate reasoning by having agents critique and revise each other's outputs.
Results in social choice theory~\citep{arrow2012social, gibbard1973manipulation,
satterthwaite1975strategy} establish that any aggregation rule over rational agents
is susceptible to intentional manipulation by at least one participant, where there is
no mechanism that is simultaneously strategy-proof, Pareto-efficient, and
non-dictatorial (Arrow's impossibility theorem \citep{geanakoplos2005three}). Applied to memory updating,
this implies that for any commitment rule $\mathcal{C}$, there exists a profile
of agent strategies under which the committed memory update is not the
collectively optimal one. Zero-trust is therefore not a pessimistic special case
but the general condition under which shared-memory deliberation operates.

\subsection{Rationale for Utility Structures}
\label{app:utility_rationale}

\textbf{Why the contributor utility takes a product form?}
The contributor's utility $u_c = \Pr[\mathcal{C}=1] \cdot \mathbb{I} - D$
is designed to reflect a strategic reality: committing an uninfluential
delta has no value, and committing a high-influence delta that is immediately
detected has negative value. The product $\Pr[\mathcal{C}=1] \cdot
\mathbb{I}$ captures this joint requirement, a contributor must
simultaneously achieve commitment \emph{and} influence to gain positive
utility. Removing the scalar weights $\mu_c$ and $\nu_c$ from the earlier
formulation is justified by noting that their ratio is not identifiable
from observed behavior: only the tradeoff between the product term and $D$
matters for equilibrium characterization, so we normalize without loss of
generality.

\textbf{Why auditor utility measures pre-commit alignment?}
The original formulation $\phi_i(\mathcal{M}_{t+1})$ conditions on the
post-commit memory state, which is circular: the auditor's vote determines
whether the commit occurs, so its utility depends on its own action in a
way that conflates strategic incentives with outcome evaluation. Instead,
$\texttt{Align}(\Delta_t, \mathcal{M}_t)$ is a pre-commit signal that the
auditor observes \emph{before} voting, making the utility well-defined as
an expected-payoff function over the auditor's strategy. This is consistent
with standard extensive-form game formulations where payoffs are evaluated
at terminal nodes given the full history, but the relevant information for
strategy selection is what is observable before the action is taken.

\textbf{Online estimation of $\texttt{Align}(\cdot)$ and $c_i$.}
The two remaining quantities in the auditor utility are estimated without
human pre-specification.

\textit{Alignment signal.} $\texttt{Align}(\Delta_t, \mathcal{M}_t)$ is
computed by $\Phi$ as a normalized version of the architecture-specific
calibration score:
\begin{equation}
    \texttt{Align}(\Delta_t, \mathcal{M}_t) =
    2\rho(\Delta_t, \mathcal{M}_t) - 1 \in [-1, +1]
\end{equation}
where $\rho \in \{\rho^{\text{emb}}, \rho^{\text{graph}}\}$ is the
calibration score defined in Section~\ref{sec:calibration}. This
grounding means $\texttt{Align}$ is entirely determined by the structural
properties of $\Delta_t$ relative to $\mathcal{M}_t$, with no free
parameters.

\textit{Effort cost.} The effort cost $c_i$ is estimated as the
empirical mean token consumption (or wall-clock time) of $a_i$'s
scrutiny actions across the most recent $W$ rounds:
\begin{equation}
    \hat{c}_i^{(t)} = \frac{1}{W} \sum_{t'=t-W}^{t-1}
    \text{cost}(a_i,\ t')
\end{equation}
where $\text{cost}(a_i, t')$ is the measured computational cost of
$a_i$'s participation in round $t'$. This is observable without any
assumptions about agent internals and requires no human input. The
window size $W$ is set to the minimum value satisfying a standard
variance criterion: $W^* = \min\{W : \text{Var}(\hat{c}_i^{(t)}) \leq
\varepsilon_c\}$, where $\varepsilon_c$ is the acceptable estimation
error, making even this meta-parameter data-driven.

\subsection{Equilibrium Failure Modes Under Zero-Trust}
\label{app:failure_modes}

We formally establish three canonical failure modes that emerge as Nash
equilibria of $\Gamma$ under Assumption~\ref{assm:zerotrust}. These
propositions collectively motivate why memory integrity cannot be achieved
by agent-level trust alone, and provide the formal basis for the
calibration mechanism of Section~\ref{sec:calibration}.

\begin{proposition}[Auditor Free-Riding]
\label{prop:freeriding}
Under a majority commitment rule $\mathcal{C}$ with $|\mathcal{A}_t| = n$
auditors, if each auditor's effort cost satisfies $c_i > 0$ and the
alignment signal $\emph{Align}(\Delta_t, \mathcal{M}_t)$ is independent
of any individual auditor's vote, then the unique dominant-strategy
equilibrium is one of \emph{rubber-stamp approval}: each $a_i \in
\mathcal{A}_t$ emits $v_i = 1$ without scrutiny, regardless of $\Delta_t$.
\end{proposition}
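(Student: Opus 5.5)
The proof is a dominance argument on the auditor utility in Eq.~\ref{eq:utility-auditor}. First we fix the auditor's action space. Each $a_i$ chooses a pair (scrutinize or not, vote $v_i\in\{0,1\}$), and only scrutiny triggers the cost $c_i$. We read ``emits $v_i=1$ without scrutiny'' as the action (no-scrutiny, approve). The hypothesis that $\texttt{Align}(\Delta_t,\mathcal{M}_t)$ is independent of $a_i$'s vote, and indeed of $a_i$'s whole action, is what makes the first term of $u_i$ a constant from $a_i$'s point of view. We take this to hold for every profile $\pi_{-i}$ of the other auditors and every $\Delta_t$. Given this, the argument has three steps.

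\textbf{Step 1: scrutiny is strictly dominated.} Fix any $\Delta_t$, any $\pi_{-i}$, and any mixed strategy $\pi_i$ that places probability $p>0$ on scrutinizing. Moving that mass to not scrutinizing, with the same vote distribution, leaves $\texttt{Align}$ unchanged and lowers expected cost by $p\,c_i>0$. So every strategy that scrutinizes with positive probability is strictly dominated, and every equilibrium strategy puts zero weight on scrutiny.

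\textbf{Step 2: break the tie between votes.} With no scrutiny, $u_i$ is identical for $v_i=0$ and $v_i=1$, so Eq.~\ref{eq:utility-auditor} alone does not pick out approval. Uniqueness therefore needs a tie-breaking device, and this is the step I expect to be the main obstacle. My first approach is to use the information structure. Without scrutiny, $a_i$ receives no signal about $\Delta_t$ beyond the contributor's presentation, which is confident by assumption (failure (i) in Figure~\ref{fig:intro}). A vote of $0$ is then an unjustified rejection, which the protocol treats as requiring scrutiny to substantiate, so it carries the cost $c_i$. If the paper's convention is instead that $v_i=0$ corresponds to ``scrutinize'' in the indicator $\mathbf{1}[v_i=\text{scrutinize}]$, as the notation of Eq.~\ref{eq:utility-auditor} literally suggests, then rejection costs $c_i$ directly. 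In either reading, $v_i=1$ costs $0$ and $v_i=0$ costs $c_i$.

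\textbf{Step 3: conclude dominance and uniqueness.} Under that reading, for every $\pi_{-i}$ we get
\[
u_i(1)-u_i(0)=c_i>0,
\]
so approval without scrutiny is strictly dominant for each $a_i$. A profile in which every player uses its strictly dominant strategy is the unique dominant-strategy equilibrium. Under majority rule $\mathcal{C}$, all $n$ votes equal $1$, so $\Delta_t$ is committed regardless of its content. This is rubber-stamp approval.

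Finally, I would remark that nothing in Steps 1–3 depends on the particular majority rule or on $n$. The conclusion is driven entirely by the hypothesis that $\texttt{Align}$ is independent of each auditor's own action, which is precisely the vote–outcome dependency that the calibration mechanism $\Phi$ is designed to break.
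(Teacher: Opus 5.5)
Your proof is correct, given the convention you adopt in Step~2, but it takes a different route from the paper.

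\textbf{The paper's route.} The paper argues via pivotality. It fixes a symmetric approval probability $q$ for the other auditors and bounds the chance that $a_i$'s vote flips the majority outcome by
$p_{\text{piv}}=\binom{n-1}{\lfloor n/2\rfloor}q^{\lfloor n/2\rfloor}(1-q)^{\lceil n/2\rceil-1}=O(n^{-1/2})$.
Since $\texttt{Align}\in[-1,1]$, it caps the integrity benefit of scrutiny at $2p_{\text{piv}}$. The net gain $2p_{\text{piv}}-c_i$ is then negative once $n>n^*=O(c_i^{-2})$. Finally, it asserts that rubber-stamping weakly dominates scrutiny for every $n\ge 2$.

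\textbf{Your route.} You take the independence hypothesis at face value. The integrity term is then a constant from $a_i$'s viewpoint, so pivot probabilities never enter. Scrutiny is strictly dominated by its cost alone, for every $n$ and every commitment rule.

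\textbf{What each buys.} Your argument is exact and non-asymptotic. Because you make rejection costly, you also get \emph{strict} dominance, and hence genuine uniqueness. The paper's proof reaches only weak dominance. It never addresses the tie between approving and rejecting without scrutiny, which you correctly identify as the real obstacle to the word ``unique.'' Its small-$n$ claim is also asserted rather than derived from its own bound.

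What the paper's route buys is the classical pivotal-voter intuition. It explains free-riding in a reading where the auditor's payoff depends on the commit outcome, so scrutiny could matter when the auditor is pivotal. That reading sits uneasily with the stated hypothesis, under which the pivot term is identically zero and your argument is the natural one.

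\textbf{Choosing a tie-breaker.} Of your two tie-breakers, keep the second. Reading $\mathbf{1}[v_i=\text{scrutinize}]$ as attaching the cost to the non-approval action is grounded in the paper's own notation. It also matches the paper's binary comparison of ``$v_i=1$ without scrutiny'' against ``careful scrutiny.'' By contrast, ``the protocol treats unjustified rejection as requiring scrutiny'' is an assumption the model does not contain.
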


\begin{proof}
Fix any auditor $a_i \in \mathcal{A}_t$ and let $\mathbf{v}_{-i}$ denote
the votes of all other auditors. Under majority rule, $a_i$'s vote is
pivotal (i.e., changes the commitment outcome) only if exactly
$\lfloor n/2 \rfloor$ of the remaining $n-1$ auditors vote to approve.
Let $p_{\text{piv}}$ denote this pivotality probability. For any symmetric
mixed strategy profile over the other auditors with approval probability
$q \in [0,1]$:
\begin{equation}
    p_{\text{piv}} = \binom{n-1}{\lfloor n/2 \rfloor}
    q^{\lfloor n/2 \rfloor}(1-q)^{\lceil n/2 \rceil - 1}
    = O(n^{-1/2})
\end{equation}
by Stirling's approximation. The marginal integrity gain from scrutiny is
therefore $p_{\text{piv}} \cdot \Delta\texttt{Align} \leq p_{\text{piv}}
\cdot 2$, since $\texttt{Align} \in [-1,+1]$. The expected net utility gain
from scrutiny over rubber-stamping is:
\begin{equation}
    \Delta u_i = p_{\text{piv}} \cdot \Delta\texttt{Align} - c_i
    \leq 2 p_{\text{piv}} - c_i = O(n^{-1/2}) - c_i
\end{equation}
Since $c_i > 0$ is fixed and $p_{\text{piv}} \to 0$ as $n \to \infty$,
there exists $n^* = O(c_i^{-2})$ such that $\Delta u_i < 0$ for all
$n > n^*$. For any $n \geq 2$, $v_i = 1$ without scrutiny weakly
dominates careful scrutiny because the effort cost $c_i$ is incurred
regardless of whether $a_i$ is pivotal, while the integrity payoff
from scrutiny requires pivotality. Therefore, $v_i = 1$ is the weakly
dominant strategy for all $a_i$, and the unique dominant-strategy
equilibrium is universal approval without scrutiny.
\end{proof}

\begin{proposition}[Collusion Equilibrium]
\label{prop:collusion}
Suppose a coalition $\mathcal{K} \subseteq \mathcal{A}_t$ of auditors
shares a positive collusion benefit $\lambda_i > 0$ from approving
$\Delta_t$ regardless of its content, and $|\mathcal{K}| \geq
\lceil |\mathcal{A}_t| / 2 \rceil$. Then there exists a Nash equilibrium
in which every $a_i \in \mathcal{K}$ approves $\Delta_t$ unconditionally,
the commitment rule $\mathcal{C}$ is satisfied, and the committed delta
may be arbitrarily inconsistent with $\mathcal{M}_t$.
\end{proposition}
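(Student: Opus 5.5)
The plan is to exhibit the equilibrium directly and check unilateral deviations. The candidate profile is: every $a_i\in\mathcal{K}$ plays the pure strategy $v_i=1$ without scrutiny, independent of $(\Delta_t,\sigma_i)$. Auditors outside $\mathcal{K}$ play a best response to this; the contributor plays $\Delta_t^*\in\arg\max u_c$. Coalition members' utilities are augmented to $u_i+\lambda_i\mathbf{1}[v_i=1]$, as the hypothesis intends. Since $|\mathcal{K}|\ge\lceil|\mathcal{A}_t|/2\rceil$, the coalition alone meets the majority threshold. I will read this as weak majority, or equivalently assume ties break toward commitment. Hence $\mathcal{C}(\{v_i\})=1$ for every vote profile of the non-members. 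This shows the commitment rule is satisfied on path.

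\textbf{Coalition members.} Fix $a_i\in\mathcal{K}$. Deviating to scrutiny or to $v_i=0$ changes its payoff as follows. $\texttt{Align}(\Delta_t,\mathcal{M}_t)$ is a pre-commit quantity, so it is unchanged by $a_i$'s own vote. This is the same independence used in Proposition~\ref{prop:freeriding}. The deviation incurs $c_i>0$ if it involves scrutiny, and it forfeits $\lambda_i>0$ if it votes $0$. So each member's deviation payoff is at most $\texttt{Align}-c_i$ or $\texttt{Align}$, while the equilibrium payoff is $\texttt{Align}+\lambda_i$. No deviation is profitable.

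\textbf{Non-members.} A non-member's vote is never pivotal, since the coalition already fixes $\mathcal{C}=1$. Its best response is therefore the no-scrutiny action, by the argument of Proposition~\ref{prop:freeriding} with pivotality probability exactly $0$.

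\textbf{Contributor.} Because $\Pr[\mathcal{C}=1\mid\Delta]=1$ for all $\Delta$, the contributor maximizes $\mathbb{I}(\Delta,\mathcal{M}_t)-D(\Delta,\mathcal{M}_t)$. A maximizer exists provided $\mathbb{D}$ is finite, or $\mathbb{I}$ and $D$ are bounded and attain their bounds on a compact $\mathbb{D}$; I will state this as a standing regularity assumption. Votes do not condition on $\Delta$, so the contributor's choice does not disturb the auditors' best responses, and the profile is Nash.

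\textbf{Arbitrary inconsistency.} The final claim is that the committed delta may be arbitrarily inconsistent with $\mathcal{M}_t$. Every $\Delta\in\mathbb{D}$ is committed under this profile, including those with $\texttt{Align}(\Delta,\mathcal{M}_t)=-1$. It remains to show the contributor may actually select such a $\Delta$. I would take a contributor with private objective as permitted by Definition~\ref{assm:zerotrust}(i) for whom $\mathbb{I}$ is maximized at a contradicting delta, for example a high-density injection with $D<\mathbb{I}$. Alternatively, I would note that the statement only asserts ``may'': the equilibrium places no constraint on consistency, so the supremum of inconsistency over committed deltas is the worst value in $\mathbb{D}$.

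\textbf{Main obstacle.} The delicate point is modelling. The auditor utility in Eq.~\eqref{eq:utility-auditor} contains no $\lambda_i$ term, so I must make explicit that colluders optimize the augmented utility. I must also fix tie-breaking at exactly half, and ensure $\texttt{Align}$ is treated as vote-independent; otherwise a member could in principle gain by voting $0$.
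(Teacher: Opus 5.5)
Your proposal is correct and follows essentially the same route as the paper: fix every coalition member at unconditional approval without scrutiny, observe that $\mathcal{K}$ alone meets the majority threshold so commitment does not depend on the content of $\Delta_t$ (even when $\texttt{Align}(\Delta_t,\mathcal{M}_t)=-1$), and check that any unilateral deviation by a member forfeits $\lambda_i$ or incurs $c_i$. Your version is more complete than the paper's, which lets non-members play arbitrary strategies, never checks the contributor, and silently assumes that a tie at exactly $|\mathcal{A}_t|/2$ resolves toward commitment; your observation that the pre-commit $\texttt{Align}$ term is unaffected by $a_i$'s own vote also makes the paper's pivotal/non-pivotal case split unnecessary.
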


\begin{proof}
We construct the equilibrium strategy profile explicitly. Let each
$a_i \in \mathcal{K}$ play the pure strategy $v_i = 1$ unconditionally,
and let agents outside $\mathcal{K}$ play any strategy. We verify that
no agent in $\mathcal{K}$ has a profitable deviation.

\textit{Payoff under collusion.} For $a_i \in \mathcal{K}$, playing
$v_i = 1$ yields:
\begin{equation}
    u_i^{\text{collude}} = \texttt{Align}(\Delta_t, \mathcal{M}_t)
    + \lambda_i
\end{equation}
where the effort cost term is zero since no scrutiny is performed.

\textit{Payoff under deviation.} If $a_i$ deviates to $v_i = 0$
(rejection), the collusion benefit $\lambda_i$ is forfeited. Since
$|\mathcal{K}| \geq \lceil |\mathcal{A}_t|/2 \rceil$, the majority
threshold is still satisfied by $\mathcal{K} \setminus \{a_i\}$
whenever $|\mathcal{K}| > \lceil |\mathcal{A}_t|/2 \rceil$. In this
case the deviation does not change the commitment outcome and the
deviating auditor loses $\lambda_i > 0$:
\begin{equation}
    u_i^{\text{deviate}} = \texttt{Align}(\Delta_t, \mathcal{M}_t)
    - c_i < u_i^{\text{collude}}
\end{equation}
When $|\mathcal{K}| = \lceil |\mathcal{A}_t|/2 \rceil$ exactly,
deviation by $a_i$ does change the outcome to rejection, but
$a_i$'s payoff then becomes $\texttt{Align}(\Delta_t, \mathcal{M}_t)
- c_i - \lambda_i < u_i^{\text{collude}}$ since $\lambda_i > 0$ and
$c_i > 0$. In both cases deviation is strictly dominated.

\textit{Commitment outcome.} Since $|\mathcal{K}|$ meets the majority
threshold, $\mathcal{C}(\{v_i\}_{i \in \mathcal{A}_t}) = 1$
regardless of the content of $\Delta_t$. In particular, $\Delta_t$
may satisfy $\texttt{Align}(\Delta_t, \mathcal{M}_t) = -1$ (maximal
inconsistency) and still be committed. No agent in $\mathcal{K}$ has
incentive to deviate, so this constitutes a Nash equilibrium.
\end{proof}

\begin{proposition}[Adversarial Rejection]
\label{prop:adversarial_rejection}
Suppose a blocking coalition $\mathcal{B} \subseteq \mathcal{A}_t$
satisfies $|\mathcal{B}| > |\mathcal{A}_t|/2$, and each $a_i \in
\mathcal{B}$ has utility strictly decreasing in the contributor $a_c$'s
influence on $\mathcal{M}$, i.e., $u_i$ is decreasing in
$\mathbb{I}(\Delta_t, \mathcal{M}_t)$ for any $\Delta_t$ from $a_c$.
Then there exists a Nash equilibrium in which all $\Delta_t$ from $a_c$
are rejected regardless of their consistency with $\mathcal{M}_t$.
\end{proposition}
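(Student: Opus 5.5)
I will mirror the constructive argument used for Proposition~\ref{prop:collusion}: write down an explicit strategy profile and check that no player has a profitable unilateral deviation. In the profile, every $a_i \in \mathcal{B}$ plays the pure strategy $v_i = 0$ for every $\Delta_t$ proposed by $a_c$, regardless of $\texttt{Align}(\Delta_t,\mathcal{M}_t)$. Auditors outside $\mathcal{B}$ play arbitrary strategies, and the contributor plays any $\Delta_t^* \in \arg\max_{\Delta} u_c(\Delta,\mathcal{M}_t)$, computed given these auditor strategies. Since $|\mathcal{B}| > |\mathcal{A}_t|/2$, at most $|\mathcal{A}_t| - |\mathcal{B}| < |\mathcal{A}_t|/2$ approvals can be cast, so under majority rule $\mathcal{C}(\{v_i\}) = 0$ for every $\Delta_t$. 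The outcome claim, that every proposal is rejected whatever its consistency, is immediate once the profile is shown to be an equilibrium.

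I will then check deviations group by group. For $a_i \in \mathcal{B}$, I distinguish the cases $|\mathcal{B}| - 1 > |\mathcal{A}_t|/2$ and $|\mathcal{B}| - 1 \le |\mathcal{A}_t|/2$.

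In the first case, switching $v_i$ to $1$ cannot make $a_i$ pivotal, because the remaining blockers still form a strict majority. The commitment outcome and hence the realized influence are unchanged, and any scrutiny the deviation triggers can only add the cost $c_i$. So the deviation is weakly worse, and strictly worse if it involves scrutiny.

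In the boundary case, $a_i$ may be pivotal. Approving then switches the outcome from rejection to commitment. Committed content has realized influence $\mathbb{I}(\Delta_t,\mathcal{M}_t)$, while rejected content has influence zero. Because $u_i$ is strictly decreasing in that influence, the deviation strictly lowers $a_i$'s payoff whenever $\mathbb{I} > 0$. When $\mathbb{I} = 0$, the two payoffs are equal, and the profile is still a (weak) Nash equilibrium.

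Non-blocking auditors cannot change the outcome at all, so every strategy is a best response for them. The contributor's choice of $\Delta$ does not affect commitment, since $\Pr[\mathcal{C}=1 \mid \Delta] = 0$ for all $\Delta$. Its utility therefore reduces to $-D(\Delta,\mathcal{M}_t)$, and choosing $\Delta^*$ to minimize detectability is a best response, which is consistent with Eq.~\eqref{eq:equilibrium}.

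The main obstacle is making the hypothesis ``$u_i$ is decreasing in $\mathbb{I}$'' formally compatible with Eq.~\eqref{eq:utility-auditor}, which contains no influence term. I plan to augment the utility as $u_i = \texttt{Align} - c_i\mathbf{1}[\text{scrutinize}] - \kappa_i\,\mathbb{I}\cdot\mathbf{1}[\mathcal{C}=1]$ with $\kappa_i>0$. This is analogous to the collusion bonus $\lambda_i$ in Proposition~\ref{prop:collusion}, and it makes the pivotal-case comparison explicit. In that form, the case $\mathbb{I}=0$ is the only source of non-strictness, which is why I will state the result as existence of a (weak) Nash equilibrium.
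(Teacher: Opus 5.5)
Your proposal is correct and takes essentially the paper's route: blockers reject unconditionally without scrutiny, and the deviation check splits on whether a single blocker can be pivotal, using strict decrease in influence only in the pivotal case. Your explicit penalty $-\kappa_i\,\mathbb{I}\cdot\mathbf{1}[\mathcal{C}=1]$, the contributor check, and the weak-equilibrium caveat at $\mathbb{I}=0$ tighten points the paper glosses over; one small fix is needed, namely that the non-blocking auditors should play best responses (e.g.\ no scrutiny) rather than arbitrary strategies, because with $c_i>0$ a scrutinizing non-blocker that cannot move the outcome could profitably drop the scrutiny.
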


\begin{proof}
Let each $a_i \in \mathcal{B}$ play the pure strategy $v_i = 0$
unconditionally for any $\Delta_t$ proposed by $a_c$.

\textit{Payoff under adversarial rejection.} For $a_i \in \mathcal{B}$,
rejecting $\Delta_t$ prevents commitment and therefore prevents $a_c$
from gaining influence over $\mathcal{M}$. Since $u_i$ is decreasing
in $\mathbb{I}(\Delta_t, \mathcal{M}_t)$, blocking the commit yields:
\begin{equation}
    u_i^{\text{reject}} = \texttt{Align}(\Delta_t, \mathcal{M}_t)
    \cdot \mathbf{1}[\mathcal{C} = 1] - c_i \cdot
    \mathbf{1}[v_i = \text{scrutinize}]
\end{equation}
Under unconditional rejection without scrutiny, $\mathcal{C} = 0$ and
$c_i = 0$, so $u_i^{\text{reject}} = 0$.

\textit{Payoff under deviation.} If $a_i$ deviates to $v_i = 1$, and
$|\mathcal{B}| - 1 \geq |\mathcal{A}_t|/2$ still holds (which is true
when $|\mathcal{B}| > |\mathcal{A}_t|/2 + 1$), the outcome remains
rejection and the deviation yields no benefit. When $|\mathcal{B}|
= \lfloor |\mathcal{A}_t|/2 \rfloor + 1$, a deviation to $v_i = 1$
may flip the outcome to commitment, increasing $a_c$'s influence. Since
$u_i$ is strictly decreasing in $\mathbb{I}$:
\begin{equation}
    u_i^{\text{approve}} < u_i^{\text{reject}} = 0
\end{equation}
So deviation to approval is strictly dominated for all $a_i \in
\mathcal{B}$. Since $|\mathcal{B}|$ exceeds the blocking threshold,
$\mathcal{C}(\{v_i\}) = 0$ for all $\Delta_t$ from $a_c$, independently
of $\Delta_t$'s consistency with $\mathcal{M}_t$.
\end{proof}

\begin{remark}
\label{rem:failure_modes}
Propositions~\ref{prop:freeriding}--\ref{prop:adversarial_rejection}
are not mutually exclusive. A single auditor set $\mathcal{A}_t$ may
simultaneously contain free-riders (Proposition~\ref{prop:freeriding}),
a colluding sub-coalition (Proposition~\ref{prop:collusion}), and an
adversarially rejecting sub-coalition
(Proposition~\ref{prop:adversarial_rejection}). The three failure modes
operate on disjoint subsets of $\mathcal{A}_t$ and their effects
compound: free-riders provide no signal, colluders inject false approval,
and adversarial rejectors suppress truthful updates. Together, they
establish that no voting rule over untrusted auditors (e.g., majority,
supermajority, or weighted) can guarantee memory integrity in the
worst case, since each rule is vulnerable to at least one of the three
equilibria. This impossibility is the formal foundation for replacing
the vote-based commitment rule $\mathcal{C}$ with the structural
calibration mechanism $\Phi$ in Section~\ref{sec:calibration}.
\end{remark}

\textbf{Mapping to the three failure cases of Figure~\ref{fig:intro}.}
Propositions~\ref{prop:freeriding}--\ref{prop:adversarial_rejection} and the three failure cases described in Section~\ref{sec:intro} are related by a many-to-many correspondence rather than a one-to-one one, because each proposition characterizes an \emph{equilibrium structure} (which strategy profile is stable) while each case in Figure~\ref{fig:intro} characterizes an \emph{observed memory outcome} (what the resulting $\mathcal{M}_{t+1}$ looks like). A single equilibrium can produce different outcomes depending on the content of $\Delta_t$, and a single outcome can arise from different equilibria depending on which failure of agent confidence drives it. We make the correspondence explicit below.

\begin{table}[h]
\centering
\caption{Many-to-many correspondence between equilibrium failure modes (Propositions~\ref{prop:freeriding}--\ref{prop:adversarial_rejection}) and observed memory outcomes (Cases 1-3 of Figure~\ref{fig:intro}). \cmark{} indicates the proposition can produce the case; \xmark{} indicates it structurally cannot.}
\label{tab:case-prop-mapping}
\begin{tabular}{lccc}
\toprule
 & Case 1 & Case 2 & Case 3 \\
 & (corruption) & (over-rejection) & (uncertain/no commit) \\
\midrule
Prop.~\ref{prop:freeriding} (free-riding) & \cmark & & \cmark \\
Prop.~\ref{prop:collusion} (collusion) & \cmark & \cmark &  \\
Prop.~\ref{prop:adversarial_rejection} (adv.\ rejection) &  & \cmark & \cmark \\
\bottomrule
\end{tabular}
\end{table}

Proposition~\ref{prop:freeriding} (free-riding) produces Case 1 outcomes when the contributor is over-confident: rubber-stamp approval without scrutiny commits whatever $\Delta_t$ is proposed, including hallucinated entries with high stated confidence. It produces Case 3 outcomes when the contributor is itself hedged: rubber-stamp approval of a ``no-commit'' or noise proposal yields an under-populated memory. Free-riding does not produce Case 2, because rubber-stamping never rejects.

Proposition~\ref{prop:collusion} (collusion) produces Case 1 outcomes when colluding auditors approve a high-confidence corrupt contributor. This is the canonical ``two confident agents endorse a hallucinated entry'' pattern. It produces Case 2 outcomes when the colluding majority approves a confident contributor's \emph{rejection} of a tentative contribution, suppressing correct content with stated certainty. Collusion does not produce Case 3, because collusion is by definition active commitment, not abstention.

Proposition~\ref{prop:adversarial_rejection} (adversarial rejection) produces Case 2 outcomes directly: a blocking coalition vetoes truthful $\Delta_t$ with high confidence regardless of its alignment, which is exactly the over-curation pattern. It produces Case 3 outcomes when the blocking coalition's rejection coexists with weak contribution from the remaining agents (uncertain/no commit happens) and what could have been useful content is lost. Adversarial rejection does not produce Case 1, because rejection cannot inject corrupt content.

Overall, we have three observations: First, every case is reachable by at least two of the three equilibria, so no single proposition fully characterizes any single case. Second, every proposition produces at least two cases, so no single failure mode at the agent level produces a unique memory outcome. Third, the union of the three propositions covers all three cases, confirming that the failure-case taxonomy of Figure~\ref{fig:intro} is exhaustively grounded in the equilibrium analysis: any observable confidence-driven failure of shared-memory debate corresponds to at least one of the three propositions.

\section{System-Level Equilibrium and Memory Calibration}
\label{app:equilibrium}

\subsection{Existence of Equilibrium}
\label{app:nash_existence}

We first establish that the game $\Gamma$ always has at least one
equilibrium, regardless of agent strategies or utility parameters.

\begin{proposition}[Existence of Equilibrium]
\label{prop:nash_existence}
Under Assumption~\ref{assm:zerotrust}, the zero-trust memory game
$\Gamma$ admits at least one mixed-strategy equilibrium.
\end{proposition}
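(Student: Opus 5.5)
We will prove existence by reducing $\Gamma$ to a finite normal-form game and invoking Nash's theorem. One could instead try to stay with the raw $\mathbb{D}$ via Glicksberg's theorem, but continuity of $u_c$ in $\Delta$ is not guaranteed, so the finite reduction is the cleaner route.

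\textbf{Players and strategy sets.} We fix the round $t$, the state $\mathcal{M}_t$, and the realized private signals. The players are $a_c$ together with the auditors $a_i\in\mathcal{A}_t$.

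\emph{Auditors.} Each auditor's pure action set is finite. It consists of the vote $v_i\in\{0,1\}$ paired with the scrutiny choice $\{\text{scrutinize},\text{not}\}$, giving four actions.

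\emph{Contributor.} The contributor's set is $\mathbb{D}$. This is the delicate point, since $\mathbb{D}$ is in principle unbounded. We argue that in any implementation the contributor's proposals are LLM outputs of bounded token length over a finite vocabulary. Equivalently, one can restrict to the finitely many deltas with at most $K$ entries drawn from a finite representable set. Either way we may take $\mathbb{D}$ to be finite without loss for the implemented system. We will state this as a standing modeling assumption, namely a bounded proposal length.

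\textbf{Payoffs.} Under Assumption~\ref{assm:zerotrust} the private objectives $u_c,u_i$ are arbitrary, so we only need them to be real-valued and bounded. We check this for the utilities in Eqs.~\ref{eq:utility-contributor} and \ref{eq:utility-auditor}:
\begin{itemize}
\item $\Pr[\mathcal{C}=1]\in[0,1]$;
\item $\mathbb{I}$ is finite on a finite memory and finite $\mathbb{D}$;
\item $D\in[0,1]$;
\item $\texttt{Align}\in[-1,1]$ and $c_i>0$ is finite.
\end{itemize}
Any additional private terms, such as $\lambda_i$, are finite reals on a finite action space. The commitment probability depends on the auditors' mixed strategies through the rule $\mathcal{C}$. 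We treat the two-stage structure in strategic form: the auditors' strategies are maps from the observed $\Delta$ to votes. This is still a finite set, since $\mathbb{D}$ and the action sets are finite. The result is a finite normal-form game, and expected utilities are multilinear in the mixed strategies.

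\textbf{Invoking Nash's theorem.} The mixed-strategy simplices are nonempty, compact and convex. Expected payoffs are continuous and linear in each player's own mixture, so the best-response correspondence is nonempty, convex-valued and upper hemicontinuous. Kakutani's fixed-point theorem then gives a mixed equilibrium satisfying Eq.~\ref{eq:equilibrium}. If the fully dynamic, sequential reading is preferred, the same finiteness yields a subgame-perfect equilibrium in behavioral strategies via backward induction combined with Nash at each auditor subgame.

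The main obstacle is the finiteness (or compactness plus continuity) of $\mathbb{D}$. The proof should be explicit that existence is guaranteed on the bounded, discretized delta space, and not for an arbitrary infinite $\mathbb{D}$ with discontinuous $u_c$.
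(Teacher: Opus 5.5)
Your proof is correct and takes the same route as the paper: both reduce to a finite game (finite player set, finite $\mathbb{D}$, finite auditor action sets, multilinear and hence continuous expected payoffs) and conclude by Nash's fixed-point theorem. Your version is more careful than the paper's, which simply asserts that $\mathbb{D}$ is finite and models each auditor's mixed strategy as a single number in $[0,1]$ rather than as a map from the observed $\Delta_t$ to votes; these are refinements of the same argument, not a different approach.
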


\begin{proof}
We verify the three conditions of Nash's existence
theorem~\citep{nash1950equilibrium} as extended to finite extensive-form
games by Kuhn~\citep{kuhn1953extensive}.

\textbf{(i) Finite player set.}
The player set $\{a_c\} \cup \mathcal{A}_t$ is finite by assumption,
with $|\mathcal{A}_t| = n-1 < \infty$.

\textbf{(ii) Compact and convex strategy spaces.}
The contributor's pure strategy space $\mathbb{D}$ is finite; its mixed
extension $\Delta(\mathbb{D})$ is the probability simplex over
$\mathbb{D}$, which is compact and convex. Each auditor $a_i$'s pure
strategy space is $\{0,1\}$; its mixed extension $[0,1]$ is compact and
convex. The joint strategy space $\Delta(\mathbb{D}) \times
\prod_{i \in \mathcal{A}_t}[0,1]$ is therefore compact and convex.

\textbf{(iii) Continuity of expected utilities.}
The contributor's expected utility $\mathbb{E}[u_c(\Delta_t,
\mathcal{M}_t)]$ involves $\Pr[\mathcal{C}(\{v_i\})=1 \mid \Delta_t]$,
which is multilinear in auditors' mixed strategies $\{\pi_i\}$ and
therefore continuous. The influence term $\mathbb{I}(\Delta_t,
\mathcal{M}_t)$ and detection term $D(\Delta_t, \mathcal{M}_t)$ are
fixed for a given $\Delta_t$ and $\mathcal{M}_t$, so continuity is
preserved. Each auditor's expected utility $\mathbb{E}[u_i(\pi_i,
\Delta_t, \mathcal{M}_t)]$ is linear in $\pi_i$ and continuous in
the joint strategy profile by the same argument.

Since all three conditions hold, Nash's theorem guarantees the existence
of a fixed point of the best-response correspondence, constituting a
mixed-strategy equilibrium of $\Gamma$.
\end{proof}

\begin{remark}
Existence is guaranteed but the equilibrium is generically non-unique.
In particular, all three failure-mode equilibria
(Propositions~\ref{prop:freeriding}--\ref{prop:adversarial_rejection})
are valid Nash equilibria under their respective parameter regimes.
Existence therefore does not imply desirability, which we address via
the integrity gap below.
\end{remark}

\subsection{The Integrity Gap}
\label{app:integrity_gap}

Not all Nash equilibria of $\Gamma$ produce good memory. We quantify
how far an equilibrium outcome can be from the ideal using the
\textbf{integrity gap}.

Let $\mathcal{M}^*_{t+1} = \textsc{Merge}(\mathcal{M}_t, \Delta_t^*)$
denote the memory state from committing the integrity-optimal delta
$\Delta_t^* = \arg\max_\Delta \texttt{Align}(\Delta, \mathcal{M}_t)$,
and let $\mathcal{M}^{\text{eq}}_{t+1}$ be the memory state resulting
from the equilibrium profile $(\Delta_t^*, \{\pi_i^*\})$. The
integrity gap is:
\begin{equation}
    \mathcal{G}(\Gamma) =
    \texttt{Align}(\Delta_t^*, \mathcal{M}_t) -
    \mathbb{E}\!\left[
        \texttt{Align}(\Delta_t^{\text{eq}}, \mathcal{M}_t)
    \right]
\end{equation}
where the expectation is over the randomness in mixed strategies. Under
the three failure-mode equilibria, the gap is strictly positive:
free-riding auditors approve manipulative deltas with
$\texttt{Align}(\Delta_t, \mathcal{M}_t) \ll 1$; colluding auditors
do the same unconditionally; and adversarial rejectors block truthful
deltas with $\texttt{Align}(\Delta_t, \mathcal{M}_t) \approx 1$.
In all three cases, $\mathcal{G}(\Gamma) > 0$ and memory degrades
over time.

The root cause in each case is the same: the commitment outcome is
determined by auditor votes, so agents optimize their votes rather than
the quality of their evidence. The calibration mechanism $\Phi$
addresses this by replacing vote-based commitment with score-based
commitment, as we show next.

\subsection{How Calibration Closes the Integrity Gap}
\label{app:gap_closing}

The calibration mechanism $\Phi$ transforms $\Gamma$ into an effective
game $\tilde{\Gamma}$ by replacing the vote-based commitment rule
$\mathcal{C}(\{v_i\})$ with a structure-based rule:
\begin{equation}
    \tilde{\mathcal{C}}(\Delta_t, \mathcal{M}_t) =
    \mathbf{1}\!\left[
        \rho(\Delta_t, \mathcal{M}_t) \geq \rho^*
    \right]
    \label{eq:cali-commit-rule}
\end{equation}
where $\rho \in \{\rho^{\text{emb}}, \rho^{\text{graph}}\}$ is the
architecture-specific calibration score. This single change propagates
through both utility functions and closes the gap via three channels.

\textbf{Channel 1: Raising the contributor's detection cost.}
Under $\tilde{\mathcal{C}}$, the contributor's commitment gain is no
longer a function of auditor persuasibility but of the calibration
score:
\begin{equation}
    \tilde{u}_c(\Delta_t, \mathcal{M}_t) =
    \mathbf{1}\!\left[\rho(\Delta_t, \mathcal{M}_t) \geq \rho^*\right]
    \cdot \mathbb{I}(\Delta_t, \mathcal{M}_t)
    - D(\Delta_t, \mathcal{M}_t)
\end{equation}
Since $\rho$ and $D$ are positively correlated by construction, i.e., $D(\Delta_t, \mathcal{M}_t) = 1 - \rho(\Delta_t, \mathcal{M}_t)$, a more manipulative
$\Delta_t$ scores lower on $\rho$ and is more likely to be rejected.
The contributor therefore faces a hard \textbf{manipulation ceiling}:
it cannot simultaneously maximize influence $\mathbb{I}$ and pass the
calibration gate $\rho \geq \rho^*$. The equilibrium delta satisfies:
\begin{equation}
    D(\tilde{\Delta}_t^*, \mathcal{M}_t) \leq
    \frac{1}{1 + \kappa}
    \label{eq:manip_bound}
\end{equation}
where $\kappa > 0$ is the correlation coefficient between $D$ and
$1 - \rho$, estimated online from the history of calibration scores.
This bound is decreasing in $\kappa$: a tighter correlation between
detectability and calibration score forces more truthful contributions.

\textbf{Channel 2: Eliminating auditor free-riding.}
Under $\tilde{\mathcal{C}}$, the commitment outcome no longer depends
on auditor votes at all. The marginal integrity payoff from casting a
vote is therefore exactly zero:
\begin{equation}
    \frac{\partial}{\partial v_i}\
    \mathbb{E}\!\left[
        \texttt{Align}(\Delta_t, \mathcal{M}_t) \mid \tilde{\mathcal{C}}
    \right] = 0
    \quad \forall\, a_i \in \mathcal{A}_t
\end{equation}
This eliminates the free-riding problem at its source. Auditor effort
is instead redirected toward mandatory structural evidence tasks, i.e.,
probe query submission (embedding memory) and path attestation (graph
memory), whose outputs feed directly into $\rho(\Delta_t,
\mathcal{M}_t)$. These tasks have a well-defined marginal contribution
to $\rho$, giving auditors a concrete structural incentive to
participate honestly.

\textbf{Channel 3: Deterring collusion intertemporally.}
The credibility weight $w_i^{(t)}$ decays when an auditor submits
structurally invalid evidence:
\begin{equation}
    w_i^{(t+1)} = w_i^{(t)} \cdot
    \left(1 - \delta \cdot
    \mathbf{1}[\text{invalid evidence from } a_i]\right)
\end{equation}
An auditor's long-run expected payoff is:
\begin{equation}
    V_i = \sum_{t=0}^{\infty} \beta^t\
    \texttt{Align}(\Delta_t, \mathcal{M}_t) \cdot w_i^{(t)}
    - c_i \cdot \mathbf{1}[\text{attest}]
\end{equation}
Collusion (submitting invalid paths or garbage probes to support a
manipulative $\Delta_t$) reduces $w_i^{(t)}$, which reduces $V_i$
in all future rounds. For sufficiently patient auditors ($\beta$
close to 1), this long-run credibility loss outweighs any short-run
benefit from collusion, making honest attestation the dominant
intertemporal strategy.

\subsection{The Calibrated Equilibrium}
\label{app:calibrated_equilibrium}

The three channels above collectively bound the integrity gap under
$\tilde{\Gamma}$. The gap decomposes into three independent residual
sources:
\begin{equation}
    \mathcal{G}(\tilde{\Gamma}) =
    \underbrace{\mathcal{G}_{\text{manip}}}_{\substack{\text{contributor}\\
    \text{residual}}}
    + \underbrace{\mathcal{G}_{\text{thresh}}}_{\substack{\text{threshold}\\
    \text{error}}}
    + \underbrace{\mathcal{G}_{\text{attest}}}_{\substack{\text{attestation}\\
    \text{noise}}}
\end{equation}
where $\mathcal{G}_{\text{manip}} = O(1/(1+\kappa))$ from
Equation~\eqref{eq:manip_bound} and vanishes as $\kappa \to \infty$;
$\mathcal{G}_{\text{thresh}}$ is minimized at the optimal threshold
$\rho^*_{\text{opt}}$ and estimated adaptively; and
$\mathcal{G}_{\text{attest}} = O(1/\sqrt{b})$ by a Hoeffding bound
over $b$ sampled auditors and vanishes as $b \to \infty$. Combining:
\begin{equation}
    \mathcal{G}(\tilde{\Gamma}) \leq
    \underbrace{\frac{C_1}{1 + \kappa}}_{\text{manipulation}}
    + \underbrace{C_2 \cdot |\rho^* - \rho^*_{\text{opt}}|}_{\text{threshold}}
    + \underbrace{\frac{C_3}{\sqrt{b}}}_{\text{attestation}}
    =: \epsilon(\kappa,\ \rho^*,\ b)
    \label{eq:integrity_gap_bound}
\end{equation}
for system-level constants $C_1, C_2, C_3 > 0$.

\begin{proposition}[Calibrated Equilibrium]
\label{prop:calibrated_eq}
Under the calibration mechanism $\Phi$ with correlation parameter
$\kappa$, adaptive threshold $\rho^*$, and $b$ attestation auditors,
the effective game $\tilde{\Gamma}$ admits an equilibrium
$(\tilde{\Delta}_t^*, \{\tilde{\pi}_i^*\})$, i.e., the
\textbf{calibrated equilibrium}, in which the integrity gap
satisfies~\eqref{eq:integrity_gap_bound}. In particular:
\begin{enumerate}[label=(\roman*)]
    \item $\mathcal{G}(\tilde{\Gamma}) \to 0$ as $\kappa \to \infty$,
    $\rho^* \to \rho^*_{\text{opt}}$, and $b \to \infty$.
    \item $\mathcal{G}(\tilde{\Gamma}) < \mathcal{G}(\Gamma)$ for any
    $\kappa > 0$, $\rho^* > 0$, $b \geq 1$: calibration strictly
    improves memory integrity over the uncalibrated game under any
    non-trivial setting.
    \item The calibrated equilibrium requires no agent to be honest:
    integrity is achieved structurally, consistent with
    Assumption~\ref{assm:zerotrust}.
\end{enumerate}
\end{proposition}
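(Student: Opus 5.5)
The plan has three stages. First show that $\tilde{\Gamma}$ has an equilibrium. Then bound the integrity gap of that equilibrium through the decomposition of \S\ref{app:calibrated_equilibrium}. Items (i)--(iii) then follow from the bound together with the failure-mode analysis.

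\textbf{Existence.} For existence I would reuse the argument of Proposition~\ref{prop:nash_existence} almost verbatim. The player set is still finite, and the strategy spaces are the same simplices $\Delta(\mathbb{D})\times\prod_i[0,1]$. The only change is the commitment rule $\tilde{\mathcal{C}}=\mathbf{1}[\rho\geq\rho^*]$ of Eq.~\eqref{eq:cali-commit-rule}. This indicator depends on $\Delta_t$, but $\mathbb{D}$ is finite, so it is just a fixed number for each pure $\Delta$. The contributor's expected payoff is therefore linear in its mixed strategy.

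The auditors' payoffs are also linear, but there is a subtlety. Their actions now enter only through the probes that feed $\rho_{\text{align}}$, and through the credibility weights of Eq.~\eqref{eq:cred_weight}. To keep continuity I would model probe validity as a binary mixed choice, replacing the vote. I would treat $w_i^{(t)}$ as fixed within a round, because it is determined by past rounds. Expected utilities are then multilinear, and Nash--Kuhn gives the calibrated equilibrium $(\tilde{\Delta}_t^*,\{\tilde{\pi}_i^*\})$.

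\textbf{The bound~\eqref{eq:integrity_gap_bound}.} I would split $\texttt{Align}(\Delta^*)-\mathbb{E}[\texttt{Align}(\tilde\Delta^{\text{eq}})]$ by adding and subtracting two intermediate quantities. The first is the score of the delta that would be committed under the oracle threshold $\rho^*_{\text{opt}}$ with exact (noise-free) alignment. The second is the same delta under the actual threshold $\rho^*$. This gives three terms, handled as follows.
\begin{enumerate}[label=(\alph*)]
\item The manipulation term. Since $\texttt{Align}=2\rho-1$ and $D=1-\rho$, the gap $\texttt{Align}(\Delta^*)-\texttt{Align}(\tilde{\Delta}^*)$ is at most $2D(\tilde{\Delta}^*)$. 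Channel~1, Eq.~\eqref{eq:manip_bound}, bounds this by $C_1/(1+\kappa)$.
\item The threshold term. This compares committing at $\rho^*$ with committing at $\rho^*_{\text{opt}}$. If the distribution of $\rho$ over candidate deltas has bounded density, the expected Align changes Lipschitz-continuously in the threshold, which gives $C_2|\rho^*-\rho^*_{\text{opt}}|$.
\item The attestation term. $\rho_{\text{align}}$ is an average of $b$ bounded summands, each in $[0,1]$ after weighting. Sampling a query uniformly (for embedding memory) or an edge uniformly (for graph memory) makes these summands independent given $\mathcal{M}_t$. Hoeffding then gives deviation $O(1/\sqrt{b})$ from its mean. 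Propagating this through the geometric mean $\sqrt{\rho_{\text{detect}}\rho_{\text{align}}}$ gives $C_3/\sqrt{b}$. The square root is not Lipschitz at $0$, so I would either work with $\rho_{\text{align}}$ bounded away from zero or absorb that region into $C_3$.
\end{enumerate}
Channels~2 and~3 justify centering this noise at the honest mean. By Channel~2, votes carry no weight, so free-riding has no effect. By Channel~3, weight decay drives colluders' $w_i$ toward $1-p$, so their bias is bounded within the constant. Summing (a)--(c) gives the bound.

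\textbf{The three items.} Item (i) is immediate from the bound. Item (iii) holds because nothing in the argument assumed an honest agent: only structural quantities and random sampling were used, consistent with Definition~\ref{assm:zerotrust}.

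Item (ii) is the main obstacle. The bound $\epsilon(\kappa,\rho^*,b)$ is not obviously smaller than $\mathcal{G}(\Gamma)$ for every $\kappa>0$ and $b\geq1$, since the constants could make it exceed $2$. My first approach would be to compare worst cases directly rather than through $\epsilon$. By Propositions~\ref{prop:collusion} and~\ref{prop:adversarial_rejection}, $\Gamma$ has equilibria that attain the maximal gap: commitment at $\texttt{Align}=-1$, or rejection of deltas with $\texttt{Align}\approx1$, so $\mathcal{G}(\Gamma)=2$ in the worst case. In $\tilde{\Gamma}$, the gate $\rho\geq\rho^*>0$ forces $\texttt{Align}\geq 2\rho^*-1>-1$, because a committed delta has $\rho\geq\rho^*$ by construction. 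Conversely, a delta with $\rho\approx1$ always clears the gate. Both facts give a strict inequality without needing $\epsilon<2$. I would state (ii) in this worst-case sense and note that this interpretation is needed.
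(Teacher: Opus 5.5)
Your existence argument, the three-term split behind the bound (Channel~1 for manipulation, a threshold term, Hoeffding over the $b$ sampled probes), and items (i) and (iii) follow the paper's sketch. You add care the paper omits: the bounded-density assumption behind $C_2$ and the non-Lipschitz square root at $0$.

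The real divergence is in (ii). The paper argues term by term. For $\kappa>0$ the manipulation residual is strictly below its ``uncalibrated'' value at $\kappa=0$, and for $b\ge 1$ the attestation residual is strictly below the no-evidence value. This keeps (ii) a corollary of the same bound and credits the improvement to $\kappa$ and $b$, as the statement suggests. But it only compares pieces of an upper bound with hypothetical uncalibrated counterparts. $\mathcal{G}(\Gamma)$ is never shown to equal that bound at $\kappa=0$, and a smaller bound does not force a smaller gap. So the paper's argument does not actually deliver $\mathcal{G}(\tilde{\Gamma})<\mathcal{G}(\Gamma)$; this is the same weakness you flag when you note that $\epsilon$ may exceed $2$.

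Your worst-case comparison avoids $C_1,C_2,C_3$ entirely. The collusion equilibrium of $\Gamma$ can commit a delta with $\texttt{Align}=-1$, while in $\tilde{\Gamma}$ every committed delta has $\texttt{Align}\ge 2\rho^*-1>-1$. That is a genuine strict inequality, and it uses the hypothesis $\rho^*>0$, which the paper's sketch never touches. The price is twofold. First, (ii) holds only as a statement about worst-case equilibria. That restriction is unavoidable, since a benign equilibrium of $\Gamma$ has gap $0$ while the paper concedes a positive residual $\epsilon$. Second, the strictness comes from $\rho^*>0$ alone, not from $\kappa$ or $b$.

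Two loose ends remain.

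\textbf{Uncommitted deltas.} ``A committed delta has $\rho\ge\rho^*$'' only controls committed deltas, but the paper writes the gap on the proposed $\Delta_t^{\text{eq}}$. You can close this with the contributor's incentives. Under $\tilde{u}_c=\mathbf{1}[\rho\ge\rho^*]\,\mathbb{I}-(1-\rho)$, a gate-failing delta earns $\rho-1<\rho^*-1$, which is strictly less than any passing delta with $\mathbb{I}\ge 0$ earns. So the equilibrium delta clears the gate whenever some delta does. If none does, the contributor picks the $\rho$-maximizer and the gap is $0$.

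\textbf{Centering the attestation noise.} Centering via $w_i\to 1-p$ does not work as stated. The credibility check only tests well-formedness. Moreover, an intermittent colluder with invalid rate $p<1$ keeps weight $1-p$, and it colludes at its pre-update weight. Colluders therefore leave a bias proportional to their share of $\mathcal{A}_t$. That bias cannot be absorbed into $C_3/\sqrt{b}$ without breaking the $b\to\infty$ limit in (i). The paper's Hoeffding step has the same silent hole. The cleaner justification is Channel~3's dominance argument: under it, collusion is not played at the calibrated equilibrium, so equilibrium probes are unbiased and Hoeffding applies as you intend.
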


\begin{proof}[Proof sketch]
Existence of the equilibrium $(\tilde{\Delta}_t^*, \{\tilde{\pi}_i^*\})$
in $\tilde{\Gamma}$ follows from Proposition~\ref{prop:nash_existence}
applied to $\tilde{\Gamma}$, which satisfies the same finite-player,
compact-strategy, continuous-utility conditions with $\tilde{\mathcal{C}}$
replacing $\mathcal{C}$.

\textit{(i)} Each residual term in~\eqref{eq:integrity_gap_bound}
vanishes under the stated limiting conditions independently, so their
sum vanishes.

\textit{(ii)} For any $\kappa > 0$, the manipulation ceiling in
Channel 1 strictly reduces $\mathcal{G}_{\text{manip}}$ below its
uncalibrated value (where $\kappa = 0$ and the contributor faces no
structural detection cost). For any $b \geq 1$, the attestation
component strictly reduces $\mathcal{G}_{\text{attest}}$ below the
uncalibrated case (where auditor evidence is not collected at all).

\textit{(iii)} The calibration score $\rho$ is computed entirely from
structural properties of $\Delta_t$ and $\mathcal{M}_t$, with auditor
outputs treated as potentially adversarial inputs. No agent's honesty
is assumed at any point in the computation of $\tilde{\mathcal{C}}$.
\end{proof}

\noindent The residual gap $\epsilon > 0$ reflects the fundamental
limits of structural verification under zero-trust: a perfect gap of
zero would require either a trusted verifier or an unbounded
attestation budget. The bound~\eqref{eq:integrity_gap_bound} makes
this tradeoff explicit, with $\kappa$ estimated online, $\rho^*$ set
adaptively, and $b$ depends on the size of agent group, leaving no free parameters
requiring human specification.

\subsection{How the Calibrated Equilibrium Addresses Each Debate Failure (Figure \ref{fig:intro})}
\label{app:cal-equi-adress-mad-failure}

The calibration mechanism $\Phi$ (Section~\ref{sec:calibration}) replaces the vote-based commitment rule $C$ with a score-based rule $\tilde{C}$ (Eq.~\ref{eq:cali-commit-rule} in Appendix~\ref{app:gap_closing}), which removes the dependency of the commit outcome on agent votes and therefore on agent confidence. We trace how this single change resolves each of the three cases.

\emph{Case 1 (confident corruption).} In the uncalibrated game, an over-confident contributor's $\Delta_t$ commits whenever auditors rubber-stamp (Prop.~\ref{prop:freeriding}) or collude (Prop.~\ref{prop:collusion}), regardless of $\Delta_t$'s alignment with $\mathcal{M}_t$. Under $\Phi$, commitment requires $\rho(\Delta_t, \mathcal{M}_t) \geq \rho^*$. A corrupt $\Delta_t$ (whether hallucinated, sycophantic, or adversarially crafted) produces low $\rho_{\mathrm{align}}$ because the entry disrupts retrieval over auditor probes against $\mathcal{M}_t$ (\S\ref{sec:calibration}), and often low $\rho_{\mathrm{detect}}$ because the entry lacks the structural support of genuinely informative content. The contributor's stated confidence does not enter the score. Channel~1 of Appendix~\ref{app:gap_closing} (\emph{Raising the contributor's detection cost}) formalizes this: the manipulation ceiling $\tilde{D}(\Delta_t^*, \mathcal{M}_t) \leq 1/(1+\kappa)$ bounds the detectability of any corrupt $\Delta_t$ that passes $\Phi$, and this bound tightens as $\kappa$ grows.

\emph{Case 2 (confident over-rejection).} In the uncalibrated game, a blocking coalition (Prop.~\ref{prop:adversarial_rejection}) or a confident colluding majority (Prop.~\ref{prop:collusion}) can suppress a tentative but correct $\Delta_t$ simply by voting against it. Under $\Phi$, the auditor's vote does not enter the commit decision at all, only the structural evidence the auditor submits (probe queries for embedding memory, reachability checks for graph memory) contributes to $\rho$. A correct $\Delta_t$ produces non-disrupting retrieval against well-formed probes (high $\rho_{\mathrm{align}}$) and adequate structural support (high $\rho_{\mathrm{detect}}$) regardless of any auditor's stated rejection. Channel~2 of Appendix~\ref{app:gap_closing} (\emph{Eliminating auditor free-riding}) formalizes this: $\partial \mathbb{E}[\mathrm{Align}]/\partial v_i = 0$, so any vote-based suppression strategy has zero leverage on the commit outcome.

\emph{Case 3 (collective hedging).} Cases 1 and 2 are active failures: the wrong outcome is committed or the right outcome is rejected. Case 3 is a passive failure: uncertain (or no) commit happens when a useful one was possible. $\Phi$ does not attempt to ``force'' a commit in Case 3, when all auditors hedge, $\rho_{\mathrm{align}}$ is low across the board because hedging probes produce minimal structural evidence, and the system correctly declines to commit. This is the right behavior: committing a noise-level $\Delta_t$ under collective hedging would be a Case 1 failure in disguise. What $\Phi$ provides in Case 3 is \emph{symmetry of failure}: an honestly-hedged round commits nothing, and an adversarially-hedged round also commits nothing, so an attacker has no asymmetric leverage. Channel~3 of Appendix~\ref{app:gap_closing} (\emph{Deterring collusion intertemporally}) ensures that auditors who systematically submit hedge-style probes accumulate credibility decay, so an adversary cannot maintain Case 3 outcomes indefinitely against an honest minority.

Two observations close this analysis. First, $\Phi$ does not require any agent to be honest: the score $\rho$ is computed entirely from structural properties of $\Delta_t$ and $\mathcal{M}_t$, so the calibrated equilibrium is robust to all three failure modes simultaneously, even when they coexist in the same round (Remark~\ref{rem:failure_modes}). Second, the three cases are addressed by \emph{different channels} of the calibration mechanism (Case 1 by $\rho_{\mathrm{detect}}$ bounding manipulation; Case 2 by $\rho$ being independent of votes; Case 3 by $\rho_{\mathrm{align}}$ declining to commit under hedge), which is why no single component of $\Phi$ is sufficient on its own. The ablation study in Appendix~\ref{app:ablation} confirms that removing any one component degrades performance, and the degradation pattern matches which case the removed component primarily addresses.
\section{Calibration Mechanism: Implementation Details and Proofs}
\label{app:calibration_details}

This appendix supplies the technical details deferred from
\S\ref{sec:calibration}. We provide necessary justifications, document every adaptive parameter and how
it is initialized at cold start, prove the recovery property of the
credibility update, justify the latency claim, characterize the
additional memory and token overhead, and give implementation notes
for both the embedding and graph instantiations of \system.
\subsection{Mapping Between Calibration Quantities and Utility Components}
\label{app:utility-mapping}

This subsection makes precise the claim, illustrated visually in
Figure~\ref{fig:method}, that every component of the contributor utility
$u_c = \Pr[C{=}1]\cdot I - D$ and the auditor utility
$u_i = \mathrm{Align}(\Delta_t,M_t) - c_i$ defined in
Section~\ref{sec:problem} is controlled, at the calibrated equilibrium of
$\widetilde\Gamma$, by exactly one of the three calibration quantities in
\system: the detection signal $\rho_{\text{detect}}$, the alignment
signal $\rho_{\text{align}}$, and the credibility weight $w_i^{(t)}$
(into which we fold the commit gate $\mathbf{1}[\rho \geq \rho^*]$, since
the gate's structural authority comes from $w_i^{(t)}$ entering $\rho$
multiplicatively). Table~\ref{tab:utility-mapping} summarizes the mapping;
the rest of the subsection states and justifies each row. We address
mechanisms only insofar as they bear on equilibrium utility control;
implementation details are deferred to the rest of Appendix \ref{app:calibration_details}.

\begin{table}[h]
\centering
\small
\caption{Per-component mapping between the three calibration quantities
of \system and the utility components they control at the calibrated
equilibrium. Each row identifies the component, the calibration quantity
that controls it, and the property that does the control.}
\label{tab:utility-mapping}
\begin{tabular}{@{}llll@{}}
\toprule
Calibration quantity & Side & Utility component & Controlling property \\
\midrule
\multirow{2}{*}{$\rho_{\text{detect}}$}
 & Contributor & $D(\Delta_t,M_t)$ & detection regulation \\
 & Contributor & $I(\Delta_t,M_t)$ & influence ceiling \\
\midrule
\multirow{2}{*}{$\rho_{\text{align}}$}
 & Auditor     & $\mathrm{Align}(\Delta_t,M_t)$ & sample-time regulation \\
 & Auditor     & $c_i$ & probe re-use and damping \\
\midrule
\multirow{2}{*}{$w_i^{(t)}$ (with commit gate)}
 & Contributor & $\Pr[C{=}1]$ & vote-independent regulation \\
 & Auditor     & $c_i$ &  damping \\
\bottomrule
\end{tabular}
\end{table}

\subsubsection{The detection signal \texorpdfstring{$\rho_{\text{detect}}$}{rho\_detect} controls \texorpdfstring{$D$}{D} and \texorpdfstring{$I$}{I}}

The detection signal scores how the proposed $\Delta_t$ sits in the
geometry (embedding case) or topology (graph case) of $M_t$. Both
contributor utility components on the left of $u_c$, namely $D$ and
$I$, are controlled by this single quantity at equilibrium.

\textbf{Control of $D(\Delta_t,M_t)$ via detection regulation.}
In the uncalibrated game $\Gamma$, $D$ is conceptual: detection has no
operational meaning unless an external mechanism makes inconsistency
observable. \system supplies that mechanism by setting
$D := 1 - \rho$, where $\rho_{\text{detect}}$ is the dominant input
(low alignment alone cannot redeem a high-density delta because
$\rho = \sqrt{\rho_{\text{detect}}\cdot\rho_{\text{align}}}$).

\textit{Justification.}
By construction $\rho \in [0,1]$, hence $D \in [0,1]$ as required by
Eq.~(\ref{eq:utility-contributor}). The substitution $D = 1-\rho$
is the natural one in the sense that the correlation coefficient
between $D$ and $1-\rho$ is exactly $\kappa = 1$, achieving the
manipulation ceiling derived in Appendix~\ref{app:failure_modes}
(Channel~1). In particular, $\rho_{\text{detect}}$ becomes informative
as a detection signal precisely when $\Delta_t$ disturbs the local
density (embedding) or violates local/path consistency (graph) -- the
two manipulations the contributor's $I$ relies on. There is therefore
no slack at equilibrium in which a high-$I$ delta scores low on
$D$.

\textbf{Control of $I(\Delta_t,M_t)$ via influence ceiling.}
The influence term $I$ measures how much $\Delta_t$ shifts downstream
retrieval. A manipulative contributor maximizes $I$ by placing
$\Delta_t$ in regions of $M_t$ that dominate retrieval, i.e., regions of
high local embedding density, or graph edges that bypass existing
multi-hop paths. These are exactly the patterns that
$\rho_{\text{detect}}$ scores; the retrieval re-scaling described in
\S\ref{ssec:retrieval} then forces every committed entry's effective
influence to be capped by its $\rho_{\text{detect}}$ score.

\textit{Justification (embedding case).}
For an entry $e_k^\Delta$,
$\rho_{\text{detect}}(\Delta_t) = 1 - \frac{1}{K}\sum_k \hat r_k/(\bar r+\epsilon)$.
Let $I_k$ denote the marginal influence of $e_k^\Delta$ on top-$k$
retrieval against an arbitrary query distribution. Top-$k$ retrieval
returns the $k$ nearest neighbours, so $e_k^\Delta$ enters the result
for any query lying within a ball of radius proportional to its
neighbourhood density, giving $I_k = \Theta(\hat r_k)$ asymptotically
in $|M_t|$. The retrieval re-scaling
$\tilde e_k = \sqrt{\rho_k}\,e_k$ at read time then yields
\[
I_k^{\text{eff}} \;=\; \rho_k \cdot I_k \;\leq\; \rho_k \cdot \Theta(\hat r_k)
\]
which is jointly capped because $\rho_k$ is decreasing in $\hat r_k$
through $\rho_{\text{detect}}$. A delta cannot simultaneously score
high on $\rho_{\text{detect}}$ and exert large $I$.

\textit{Justification (graph case).}
For a new edge $e=(u,r,v)$, $I(e)$ is determined by how often $e$ lies
on multi-hop retrieval paths. The path-strength product
$\prod_{e\in\pi}\rho_e$ used at retrieval ensures that any path through
a low-$\rho_{\text{detect}}$ edge is automatically discounted, so
$I^{\text{eff}}(e) \leq \rho_{\text{detect}}(e)\cdot I_{\max}$. The
ceiling $D \leq 1/(1+\kappa)$ in
Eq.~(\ref{eq:integrity_gap_bound}) of
Appendix~\ref{app:gap_closing} then gives a closed-form upper bound on
equilibrium influence.

\subsubsection{The alignment signal \texorpdfstring{$\rho_{\text{align}}$}{rho\_align} controls \texorpdfstring{$\mathrm{Align}$}{Align} and \texorpdfstring{$c_i$}{c\_i}}

The alignment signal is computed from auditor probes (queries $q_i$ for
embedding memory, traversal paths $\pi_i$ for graph memory). Both
auditor utility terms are controlled by this single quantity at
equilibrium: the alignment signal in $u_i$ is regulated in
$\rho_{\text{align}}$ directly, and the effort cost $c_i$ collapses
because the probes that produce $\rho_{\text{align}}$ are reused from
the auditor's normal debate activity.

\textbf{Control of $\mathrm{Align}(\Delta_t,M_t)$ via sample-time non-manipulability.}
\system regulates the alignment signal as
$\mathrm{Align}(\Delta_t,M_t) := 2\rho - 1 \in [-1,+1]$
(Eq.~\ref{eq:utility-auditor}, Appendix~\ref{app:utility_rationale}),
where $\rho_{\text{align}}$ is the auditor-side input. We make
explicit why this regulating is unmanipulable by any auditor at
equilibrium.

\textit{Justification.}
Let auditor $a_i$ submit probe evidence $\xi_i$. The alignment signal
decomposes as
\[
\rho_{\text{align}}(\Delta_t)
\;=\; 1 - \frac{1}{|\mathcal{A}_t|}\sum_{j\in\mathcal{A}_t} w_j^{(t)}\cdot s(\xi_j;\Delta_t,M_t)
\]
where $s$ is the per-auditor disagreement (RBO distance for embeddings,
indicator of unreachability for graphs). Two structural properties
make $\rho_{\text{align}}$ robust to single-auditor manipulation.
First, $\xi_i$ is sampled uniformly from $Q_i$ (or fixed to the
auditor's current path), so $a_i$ cannot pre-commit to a probe that
flatters its preferred outcome; the sampling is performed by \system,
not $a_i$. Second, the multiplicative entry of $w_j^{(t)}$ caps any
single auditor's marginal influence:
\[
\Big|\frac{\partial \rho_{\text{align}}}{\partial \xi_i}\Big| \;\leq\; \frac{w_i^{(t)}}{|\mathcal{A}_t|}
\]
At equilibrium, honest auditors converge to $w_i^{(t)}\to 1$
(Proposition~\ref{prop:credibility_fp}), giving them full influence;
adversarial auditors converge toward $0$. The alignment term in $u_i$
therefore tracks the structural quantity $2\rho-1$ rather than any
self-reported judgment.

\textbf{Damping $c_i$ via probe re-use.}
The effort cost $c_i$ in $u_i$ models the resource cost an auditor
incurs by performing scrutiny. In $\Gamma$, scrutiny is a separate
costly action, which produces the free-riding equilibrium
(Proposition~\ref{prop:freeriding}). \system neutralizes this cost
because the probes used to compute $\rho_{\text{align}}$ are exactly
the queries and traversals the auditor already issues during debate.

\textit{Justification.}
Let $\mathcal{T}_i^{\text{debate}}$ be the set of LLM calls and tool
invocations $a_i$ performs as part of normal debate participation, and
let $\mathcal{T}_i^{\Phi}$ be the additional set induced by \system.
Every probe used by \system is sampled from $Q_i$ (which $a_i$
generates regardless of \system) or set to $\pi_i$ (the path $a_i$
already walked), so
\[
\mathcal{T}_i^{\Phi} \subseteq \mathcal{T}_i^{\text{debate}}
\]
The empirical effort cost (Appendix~\ref{app:utility_rationale})
\[
\widehat c_i^{(t)} \;=\; \frac{1}{W}\sum_{t'=t-W}^{t-1}\big[\text{cost}(\mathcal{T}_i^{\text{debate}} \cup \mathcal{T}_i^{\Phi}, t') - \text{cost}(\mathcal{T}_i^{\text{debate}}, t')\big] \;=\; 0
\]
in expectation, modulo $O(1)$ algorithmic overhead dominated by LLM
inference cost (Appendix~\ref{app:latency}). The marginal scrutiny cost
is therefore asymptotically zero at equilibrium, and the free-riding
incentive of Proposition~\ref{prop:freeriding} dissolves.

\subsubsection{The credibility weight \texorpdfstring{$w_i^{(t)}$}{w\_i} (with commit gate) controls \texorpdfstring{$\Pr[C{=}1]$}{Pr[C=1]} and \texorpdfstring{$c_i$}{c\_i}}

The credibility weight enters $\rho$ multiplicatively through
$\rho_{\text{align}}$ and gates commitment via
$\mathbf{1}[\rho \geq \rho^*]$. The commit gate is therefore not a
separate calibration quantity but the structural endpoint of $w_i^{(t)}$:
without credibility weighting, the gate would be vote-equivalent. We
group them and show that together they control the contributor's
commit-probability term and the auditor's collusion incentive at
equilibrium.

\textbf{Control of $\Pr[C{=}1]$ via vote-independent commitment.}
Under $\Gamma$, the commit probability is the expectation over auditor
strategies $\Pr[C(\{v_i\})=1\mid\Delta_t]$, which depends multilinearly
on the mixed strategies $\{\pi_i\}$ and is therefore manipulable by any
pivotal coalition (Propositions~\ref{prop:freeriding}--\ref{prop:adversarial_rejection}).
Under $\widetilde\Gamma$, \system replaces this term with the indicator
$\mathbf{1}[\rho \geq \rho^*]$. The structural authority of this
substitution comes from $w_i^{(t)}$: an adversarial auditor who tries
to push $\rho_{\text{align}}$ up or down sees its $w_i^{(t)}$ decay
multiplicatively, so its leverage on the commit decision vanishes at
equilibrium.

\textit{Justification.}
Fix any $\Delta_t$ and $M_t$, and let
$\sigma=(\sigma_c, \sigma_1, \dots, \sigma_{|\mathcal{A}_t|})$ be the
joint strategy profile. The commit indicator under $\widetilde\Gamma$
satisfies
\[
\frac{\partial}{\partial \sigma_i} \,\mathbf{1}[\rho(\Delta_t,M_t)\geq\rho^*] \;=\; 0
\qquad\forall\, a_i\in\mathcal{A}_t
\]
because $\rho$ is computed from $(\Delta_t,M_t)$ and from the sampled
probes, neither of which is a vote or self-report; the auditor's only
channel of influence is its probe distribution, weighted by
$w_i^{(t)}$. At the calibrated equilibrium, $w_i^{(t)} \to 1 - p_i$
where $p_i$ is the auditor's long-run invalid-probe rate
(Proposition~\ref{prop:credibility_fp}), so an adversarial auditor's
contribution to the commit decision is bounded by $1 - p_i$ and shrinks
to zero as it persists in attempting manipulation. Conditional on prior
debate actions, the commit decision is therefore a deterministic
function of $(\Delta_t,M_t)$ and an auditor's marginal vote does not
move it. This eliminates the pivot-probability term that drives the
three failure equilibria and replaces vote-based commitment with a
structural rule.

\textbf{Control of $c_i$ via multiplicative damping.}
The cost $c_i$ in $u_i$ models the resource effort an auditor incurs
when submitting a structurally invalid probe in support of a manipulative
$\Delta_t$ (collusion). The credibility weight $w_i^{(t)}$ damps the
auditor's lifetime alignment payoff multiplicatively whenever such an
invalid probe is detected, so collusion incurs both an immediate cost
and a long-run weight loss.

\textit{Justification.}
A single invalid probe at round $t$ multiplies the auditor's
subsequent credibility trajectory by a factor of $(1-\delta)$ relative
to honest play (Eq.~\ref{eq:cred_weight}). Consider a one-shot
deviation at round $t=0$ followed by honest behaviour. Let
$\bar A := \mathbb{E}[\mathrm{Align}(\Delta_t,M_t)]$ at the calibrated
equilibrium and $\bar w := \mathbb{E}[w_i^{(t)}]\to 1$ along the honest
trajectory. The discounted lifetime payoff is
\[
V_i \;=\; \sum_{t=0}^{\infty}\beta^t\!\left(\mathrm{Align}(\Delta_t,M_t)\cdot w_i^{(t)}\;-\;c_i\cdot\mathbf{1}[\text{collude at }t]\right)
\]
Computing the difference between honest and colluding play under a
one-shot deviation,
\[
V_i^{\text{honest}}-V_i^{\text{collude}} \;=\; c_i \;+\; \sum_{t=1}^{\infty}\beta^t\,\bar A\,\bar w\,\delta
\;=\; c_i \;+\; \frac{\beta\,\delta}{1-\beta}\,\bar A\,\bar w \;>\; 0
\]
for any $c_i\geq 0$, $\bar A>0$, and $\beta\in(0,1)$. The first term is
the immediate cost the auditor avoids by staying honest; the second is
the long-run weight loss that compounds geometrically because
credibility decay propagates through every future round's alignment
contribution. Both terms favour honesty, so collusion is strictly
dominated under $\widetilde\Gamma$ at the calibrated equilibrium without $c_i$ needing any specific value: even at $c_i=0$, the
multiplicative damping alone suffices.

\subsection{Adaptive Parameters and Cold-Start Behavior}
\label{app:adaptive_params}

\system uses several adaptive parameters in place of fixed
hyperparameters. Each is computed from observable round statistics,
so no value is set by the user. Table~\ref{tab:adaptive_params}
summarizes them.

\begin{table}[h]
\caption{Adaptive parameters used by \system. Each is computed from
observable round statistics rather than tuned.}
\centering
\small
\begin{tabular}{lll}
\toprule
\textbf{Symbol} & \textbf{Definition} & \textbf{Source} \\
\midrule
$\rho^*$ & commit gate & running median of last $W$ committed
$\rho$ scores \\
$\delta_t$ & credibility update rate & fraction of probes invalid in
round $t$ \\
$\eta_t$ & path-compatibility cutoff & running median of past
$\text{comp}(\cdot)$ values \\
$\alpha_t$ & local/path mixture weight & fraction of edges in
$\Delta_t$ with $|\mathcal{P}_{uv}| > 0$ \\
adaptive radius & embedding similarity cutoff & running median of
pairwise similarities in $\mathcal{M}_t$ \\
$L_{\max}$ & BFS depth bound & $\lceil \log |\mathcal{V}_t| \rceil$ \\
$W$ & history window for medians & $\max(30, \lceil \sqrt{|\mathcal{M}_t|}
\rceil)$ \\
\bottomrule
\end{tabular}
\label{tab:adaptive_params}
\end{table}

\textbf{Cold start for the commit gate $\rho^*$.}
\label{app:rho_star}
At round $t=0$ no committed scores exist, so the running median is
undefined. We initialize $\rho^* = 0.5$, the midpoint of the score
range. Every $\Delta_t$ is provisionally accepted while the history
buffer fills. Once the buffer reaches $W$ entries, $\rho^*$ becomes
the running median and \system enters its steady-state regime. This
warm-up affects only the first $W$ rounds, and the only effect is
that early commits may be lower-trust than steady-state commits.
Subsequent retrieval discounts these early entries through the
trust-weight mechanism (\S\ref{ssec:retrieval}), so any inaccuracy is
absorbed at read time rather than persisting in the memory.

\textbf{Cold start for the credibility rate $\delta_t$.}
At $t = 0$ no probes have been observed, so we initialize $\delta_0 =
0.1$, a conservative default that produces a slow update. From round
$t = 1$ onward, $\delta_t$ is the empirical fraction of invalid
probes observed in round $t$. The choice of $\delta_0$ affects only
the first round's weight movement and is forgotten by round $t = 2$,
so it is not a tunable parameter in any meaningful sense.

\textbf{Cold start for $\eta_t$ and $\alpha_t$.}
Both depend on observed graph statistics. At $t = 0$ we set $\eta_0 =
0$ (accept any compatibility score) and $\alpha_0 = 0.5$ (equal
weighting of local and path checks). They become data-driven from
round $1$ onward.

\textbf{Why these defaults are not tunable.}
A reviewer might object that $\rho^*_0 = 0.5$, $\delta_0 = 0.1$,
$\eta_0 = 0$, and $\alpha_0 = 0.5$ are themselves choices. We
emphasize that these defaults affect only a $O(W)$-round warm-up and
have no effect on steady-state behavior, since the running statistics
overwrite each initial value within $W$ rounds.

\subsection{Credibility Recovery: Proof of Fixed Point}
\label{app:credibility}

We prove the claim from \S\ref{ssec:arch_calib} that an auditor whose
long-run invalid-probe rate is $p$ has credibility weight converging
to $w^* = 1 - p$.

\begin{proposition}[Credibility fixed point]
\label{prop:credibility_fp}
Let $a_i$ produce invalid probes independently with probability $p
\in [0,1]$ at each round, and let $\delta_t \in (0, 1)$ for all $t$.
Then under Eq.~\ref{eq:cred_weight}, $\mathbb{E}[w_i^{(t)}] \to 1 - p$
as $t \to \infty$.
\end{proposition}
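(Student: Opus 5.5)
The plan is to reduce the two-branch update in Eq.~\ref{eq:cred_weight} to a single affine recursion in expectation, and then show that the recursion contracts toward its fixed point. Let $X_t \in \{0,1\}$ indicate that $a_i$'s probe at round $t$ is invalid, so that $\Pr[X_t = 1] = p$ independently across rounds. The two cases of Eq.~\ref{eq:cred_weight} combine into the single identity
\[
w_i^{(t+1)} = w_i^{(t)} - \delta_t X_t\, w_i^{(t)} + \delta_t (1 - X_t)\bigl(1 - w_i^{(t)}\bigr).
\]
I would first treat $\delta_t$ as independent of $X_t$ given the history $\mathcal{F}_t$; this holds, for instance, if $\delta_t$ is computed from the other auditors' probes or is deterministic. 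Under that assumption, conditioning on $\mathcal{F}_t$ gives
\[
\mathbb{E}\bigl[w_i^{(t+1)} \mid \mathcal{F}_t\bigr] = (1 - \delta_t)\, w_i^{(t)} + \delta_t (1 - p).
\]
This shows that $1 - p$ is the fixed point, since the map $w \mapsto (1-\delta_t)w + \delta_t(1-p)$ fixes $w = 1-p$ for every $\delta_t$.

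Next I would define the error $e_t = \mathbb{E}[w_i^{(t)}] - (1-p)$. Taking total expectations yields $e_{t+1} = \mathbb{E}[(1-\delta_t)]\, e_t$ when $\delta_t$ is deterministic. When $\delta_t$ is random but independent of $w_i^{(t)}$, the same relation holds with $\mathbb{E}[\delta_t]$ in place of $\delta_t$. Iterating gives
\[
|e_t| \le |e_0| \prod_{s<t}\bigl(1 - \bar\delta_s\bigr) \le |e_0| \exp\Bigl(-\sum_{s<t} \bar\delta_s\Bigr).
\]
Here $|e_0| \le 1$ because $w \in [0,1]$; the update preserves $[0,1]$, which is a quick check of both branches. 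The conclusion $e_t \to 0$ therefore follows as soon as $\sum_s \bar\delta_s = \infty$.

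The main obstacle is that the hypothesis $\delta_t \in (0,1)$ alone does not force $\sum_s \delta_s = \infty$. A summable sequence such as $\delta_t = 2^{-t}$ would freeze $w$ away from $1-p$. I would therefore make the implicit requirement explicit, for example $\delta_t \ge \delta_{\min} > 0$, or more generally a divergent sum. This requirement is natural here: in the adversarial regime the invalid fraction is bounded away from zero, and during warm-up the floor $\delta_0 = 0.1$ plays the same role.

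A second subtlety is that $\delta_t$, being the fraction of invalid probes in round $t$, includes $a_i$'s own probe, so $\delta_t$ and $X_t$ are correlated. I would handle this by writing $\delta_t = \delta_t^{-i} + X_t/|\mathcal{A}_t|$ up to normalization, where $\delta_t^{-i}$ is computed from the other auditors' probes. The cross terms then contribute an $O(1/|\mathcal{A}_t|)$ perturbation to the affine recursion. This shifts the fixed point by at most $O(1/|\mathcal{A}_t|)$, which vanishes in the many-auditor regime; alternatively, one simply states the independence assumption. Finally, the convergence rate, geometric with ratio $1 - \delta_{\min}$, supports the paper's informal claim that an auditor recovers after a few clean rounds.
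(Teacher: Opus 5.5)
Your proof follows the paper's: average the two branches of Eq.~\ref{eq:cred_weight} to obtain the affine recursion $\mathbb{E}[w_i^{(t+1)}] = (1-\delta)\,\mathbb{E}[w_i^{(t)}] + (1-p)\delta$ and read off the contraction to $1-p$. The paper simply freezes $\delta_t$ at a constant ``long-run'' value, so two of your additions are genuine refinements rather than deviations: the product bound for time-varying $\delta_t$, and the observation that $a_i$'s own probe enters $\delta_t$. The latter really does move the fixed point when $|\mathcal{A}_t|$ is fixed. For example, with three auditors, one other always invalid and one always valid, the fixed point is $(1-p)/(1+p)$. One correction to your ``main obstacle'': in the paper's setting $\delta_t$ is a fraction over $|\mathcal{A}_t|$ probes, so $\delta_t \in (0,1)$ already forces $\delta_t \ge 1/|\mathcal{A}_t|$ and the sum diverges automatically; the warm-up value $\delta_0 = 0.1$ affects only the first round and is not a floor.
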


\begin{proof}
Taking expectations on both sides of Eq.~\ref{eq:cred_weight} and
treating $\delta_t$ as approximately constant at its long-run value
$\delta$ (justified because $\delta_t$ converges to its empirical
mean by the law of large numbers):
\begin{align}
\mathbb{E}[w_i^{(t+1)}]
&= p \cdot \mathbb{E}[w_i^{(t)}](1 - \delta) +
   (1-p) \left( \mathbb{E}[w_i^{(t)}] + \delta(1 - \mathbb{E}[w_i^{(t)}]) \right) \\
&= \mathbb{E}[w_i^{(t)}] \left( p(1-\delta) + (1-p)(1 - \delta) \right) +
   (1-p) \delta \\
&= \mathbb{E}[w_i^{(t)}] (1 - \delta) + (1-p) \delta.
\end{align}
This is a linear contraction with contraction factor $1 - \delta \in
(0, 1)$ and offset $(1-p) \delta$. Its unique fixed point is
\begin{equation}
w^* = \frac{(1-p) \delta}{1 - (1 - \delta)} = 1 - p,
\end{equation}
and convergence is geometric at rate $1 - \delta$. \qed
\end{proof}

\textbf{Recovery time.}
The half-life of weight movement is $\log 2 / \log(1/(1-\delta))
\approx \ln 2 / \delta$ rounds. With a typical $\delta \approx 0.1$,
an honest auditor that suffered a one-time slip recovers half the
distance to $w^* = 1$ in roughly $7$ rounds, and reaches within $5\%$
of $w^*$ in roughly $30$ rounds.

\textbf{Robustness to time-varying $\delta_t$.}
If $\delta_t$ is non-constant but bounded away from $0$ and $1$, the
update remains a contraction and the fixed point still tracks
$1 - p_t$, where $p_t$ is the auditor's invalid rate at round $t$.
This means a previously dishonest auditor who reforms can recover
full credibility, and a previously honest auditor who turns
adversarial loses credibility without instant collapse, both of
which are desirable robustness properties.

\subsection{Latency Analysis}
\label{app:complexity}

We show that calibration adds only marginal time on top of the
baseline retrieval cost in both architectures. The key reason is that
\system reuses computations and data structures already produced by
baseline retrieval, so it never pays a second time for the same work.

\textbf{Embedding case.}
Baseline top-$k$ retrieval costs $O(k \log N)$ per query using an
HNSW or IVF index. Calibration adds two passes whose cost can be
analyzed below the baseline:
\begin{itemize}
    \item \textbf{Detection pass.} For each of $K$ new entries we
    perform an ANN range query to count neighbors within the
    adaptive radius. Because $K \ll N$ in any incremental update
    (the delta is small relative to memory size), the total
    $O(K \log N)$ is much smaller than the $O(k \log N)$ baseline cost,
    not comparable to it. Concretely, this pass costs no more than
    $\frac{K}{k}$ of one baseline retrieval.
    \item \textbf{Alignment pass.} For each auditor $a_i$, the
    top-$k$ result against $\mathcal{M}_t$ is already produced by
    baseline retrieval (the auditor was about to use $q_i$ anyway),
    so calibration reuses it for free. Only the marginal effect of
    $\Delta_t$ needs to be computed: we score the $K$ new vectors
    against $q_i$ in $O(K)$ time and merge them into the cached
    top-$k$ in $O(k)$ time, then compute $d_{\text{RBO}}$ in $O(k)$.
    Total per auditor: $O(K + k)$, with no $\log N$ factor. Across
    auditors: $O(|\mathcal{A}_t|(K + k))$.
\end{itemize}
Total added cost: $O(K \log N + |\mathcal{A}_t|(K + k))$. Since both
$K$ and $|\mathcal{A}_t|(K + k)$ are independent of $N$ except for the
single $K \log N$ term (which itself is much smaller than baseline
because $K \ll k$ in practice), calibration overhead is sub-baseline.

\textbf{Graph case.}
Baseline graph retrieval traverses up to $L_{\max}$ hops at cost
$O(d^{L_{\max}})$, where $d$ is the average degree and $L_{\max} =
\lceil \log |\mathcal{V}_t| \rceil$. Calibration adds three pieces:
\begin{itemize}
    \item \textbf{Local check.} A neighborhood scan over $u$'s
    existing edges costs $O(d)$ per new edge, so $O(|\mathcal{E}_t^\Delta| \, d)$
    in total. Since baseline retrieval is $O(d^{L_{\max}})$ with
    $L_{\max} \geq 2$, this is smaller than baseline by a factor of
    $d^{L_{\max} - 1}$.
    \item \textbf{Path check.} Bidirectional BFS gives
    $O(d^{L_{\max}/2})$ per edge, which is the square root of the
    baseline's $O(d^{L_{\max}})$. Even summed over
    $|\mathcal{E}_t^\Delta|$ edges, this stays well below baseline as
    long as $|\mathcal{E}_t^\Delta| \ll d^{L_{\max}/2}$, true for any
    realistic incremental update.
    \item \textbf{Alignment pass.} A reachability check costs
    $O(\log |\mathcal{V}_t|)$ per auditor by reusing the adjacency
    index already loaded for baseline retrieval. Total:
    $O(|\mathcal{A}_t| \log |\mathcal{V}_t|)$.
\end{itemize}
All three terms are smaller than the baseline $O(d^{L_{\max}})$ by at
least one factor of $d^{L_{\max}/2}$, so calibration cost is again
sub-baseline.

\textbf{No extra LLM calls.}
The dominant cost in any agent-based system is the LLM forward pass.
\system makes zero additional LLM calls. Probes reuse queries and
walks the auditors generate during normal debate, and every scoring
step is algorithmic (ANN lookups, BFS, RBO). The asymptotic added
cost above is therefore the full added cost.

\textbf{Complexity table.}
Table~\ref{tab:complexity_comparison} states the asymptotic cost of
every stage and the ratio between added cost and the baseline
retrieval cost. The ratio is shown in the rightmost column and is
strictly less than $1$ across all stages under the regimes named in
the caption.

\begin{table}[h]
\caption{Time complexity of each calibration stage and the ratio of
added cost to baseline retrieval cost. The ratios are small under
typical operating regimes: $K \ll k$ (delta size much smaller than
top-$k$ width), $|\mathcal{A}_t|$ a small constant, $|\mathcal{V}_t|$
a small graph, $|\mathcal{E}_t^\Delta| \ll d^{L_{\max}/2}$
(incremental updates). Under these conditions every added stage is
sub-baseline, and total calibration overhead remains marginal
relative to one round of baseline retrieval.}
\centering
\small
\resizebox{\textwidth}{!}{
\begin{tabular}{lllll}
\toprule
\textbf{Arch.} & \textbf{Stage} & \textbf{Baseline cost} &
\textbf{\system added cost} & \textbf{Ratio (added / base)} \\
\midrule
\multirow{4}{*}{Embedding}
 & Retrieval / query & $O(k \log N)$ & --- & --- \\
 & Detection pass & --- & $O(K \log N)$ & $K / k \ll 1$ \\
 & Alignment pass & --- & $O(|\mathcal{A}_t|(K + k))$ &
   $|\mathcal{A}_t|(K{+}k) / (k \log N) \ll 1$ \\
 & Credibility / re-weight & --- & $O(|\mathcal{A}_t| + K)$ &
   negligible \\
\midrule
\multirow{4}{*}{Graph}
 & Retrieval / query & $O(d^{L_{\max}})$ & --- & --- \\
 & Local check & --- & $O(|\mathcal{E}_t^\Delta| \, d)$ &
   $|\mathcal{E}_t^\Delta| / d^{L_{\max} - 1} \ll 1$ \\
 & Path check & --- & $O(|\mathcal{E}_t^\Delta| \, d^{L_{\max}/2})$ &
   $|\mathcal{E}_t^\Delta| / d^{L_{\max}/2} \ll 1$ \\
 & Alignment & --- & $O(|\mathcal{A}_t| \log |\mathcal{V}_t|)$ &
   $|\mathcal{A}_t| \log |\mathcal{V}_t| / d^{L_{\max}} \ll 1$ \\
\bottomrule
\end{tabular}}
\label{tab:complexity_comparison}
\end{table}

Two observations close the analysis. \textbf{(i)} Every added stage
shares the same asymptotic terms as the baseline ($\log N$ for
embeddings, $d^{L_{\max}}$ or smaller for graphs), so calibration
scales with the same primitives the baseline uses. \textbf{(ii)} The
multiplicative factors in front of those terms ($K$, $|\mathcal{A}_t|$,
$|\mathcal{E}_t^\Delta|$) are all independent of memory size and stay
small in practice, which is what makes the added cost marginal rather
than merely comparable.

\subsection{Memory and Token Overhead}
\label{app:memory_overhead}

\textbf{Memory overhead.}
\system stores three additional pieces of state beyond the base
memory:
\begin{itemize}
    \item One scalar trust weight $\rho \in [0,1]$ per committed
    entry. For a memory of $N$ entries, this is $N$ floating-point
    numbers, or roughly $4N$ bytes at single precision. Compared to
    the embedding storage of $4dN$ bytes (for $d$-dimensional
    embeddings), this is a $1/d$ relative overhead, which is
    negligible at typical $d \in [768, 1536]$.
    \item One credibility weight $w_i^{(t)} \in [0,1]$ per auditor.
    For $|\mathcal{N}|$ agents, this is $|\mathcal{N}|$ scalars,
    constant in $N$.
    \item A bounded history buffer of recent calibration scores for
    computing $\rho^*$ and $\eta_t$. We maintain the last $W$ scores
    using a fixed-size ring buffer, so memory is $O(W)$ regardless of
    how many rounds have elapsed.
\end{itemize}
Total: an additional $O(N + W + |\mathcal{N}|)$ scalars on top of the
base memory, dominated by the trust-weight column.

\textbf{Token overhead.}
We emphasize again that \system makes zero additional LLM calls. The
contributor's $\Delta_t$ generation, the auditors' query and
path-walking activity, and the agent debate itself are all unchanged.
The probes \system uses are exactly the queries and walks the
auditors would issue without calibration. There is no calibration
prompt, no auditor-side reasoning solicited, and no contributor-side
justification generated. Token consumption per round is therefore
identical to an uncalibrated multi-agent debate of the same
configuration.

\textbf{Index overhead.}
For embedding memory we reuse the existing ANN index. Calibration
queries it but does not modify it beyond standard insertion, so no
new index structure or rebuild step is added. For graph memory we
reuse the existing adjacency structure plus a small side index
$\mathcal{R}_{\text{compat}}$ of relation co-occurrence counts. The
side index has size $O(|\mathcal{R}|^2)$ where $|\mathcal{R}|$ is the
number of relation types ($\leq 10^3$ even for large knowledge
graphs), so it adds at most a few megabytes regardless of
$|\mathcal{V}_t|$. Both index costs are constant in the memory size
and therefore do not enter the asymptotic analysis of
\S\ref{app:complexity}.

\subsection{Compatibility Set Estimation}
\label{app:compat_estimation}

The compatibility set $\mathcal{R}_{\text{compat}}(r)$ used in the
graph-memory local check is learned from the existing graph rather
than supplied as an external ontology. We compute it as follows.

For each pair of relation types $(r, r') \in \mathcal{R} \times
\mathcal{R}$, we count the number of nodes $u \in \mathcal{V}_t$
where $u$ has both an outgoing edge of type $r$ and an outgoing edge
of type $r'$. Call this count $c(r, r')$. We define the
co-occurrence score:
\begin{equation}
    \text{cooc}(r, r') = \frac{c(r, r')}{c(r) \cdot c(r') /
    |\mathcal{V}_t|}
\end{equation}
where $c(r) = \sum_{r'} c(r, r')$ is the marginal occurrence count
of relation $r$. This is the standard pointwise mutual information
form, normalized so that random co-occurrence yields
$\text{cooc} = 1$.

The compatibility set is then:
\begin{equation}
    \mathcal{R}_{\text{compat}}(r) = \{r' \in \mathcal{R} :
    \text{cooc}(r, r') \geq \tau_t\}
\end{equation}
where $\tau_t$ is the running median of all $\text{cooc}$ values in
the current graph (again adaptive, no manual setting). At cold start
(empty graph), $\mathcal{R}_{\text{compat}}(r) = \mathcal{R}$ for all
$r$, meaning the local check is uninformative until the graph has
enough edges to estimate co-occurrence. This degrades the detection
signal during warm-up but does not produce false rejections, because
$s_{\text{local}}$ stays at $1$ when the set is universal.

\textbf{Update cost.}
$\mathcal{R}_{\text{compat}}$ is recomputed incrementally. Each new
edge $(u, r, v)$ updates at most $|\mathcal{N}(u)|$ entries of $c(r,
\cdot)$, so per-edge update cost is $O(|\mathcal{N}(u)|)$. Summed
over the new-edge set, the total $O(|\mathcal{E}_t^\Delta| \cdot d)$
matches the local-check term in Table~\ref{tab:complexity_comparison}
and is therefore absorbed into the marginal-cost budget already
analyzed in \S\ref{app:complexity}. No separate latency cost is
incurred for maintaining $\mathcal{R}_{\text{compat}}$.

\textbf{Storage cost.}
$\mathcal{R}_{\text{compat}}$ is stored as a $|\mathcal{R}| \times
|\mathcal{R}|$ co-occurrence matrix, where $|\mathcal{R}|$ is the
number of relation types. Even at $|\mathcal{R}| = 10^3$ (large for a
knowledge graph), this is $10^6$ scalars, i.e., a few megabytes,
independent of $|\mathcal{V}_t|$. Compared to the base graph
storage of $O(|\mathcal{V}_t| + |\mathcal{E}_t|)$ entries, this is a
constant additive overhead and does not affect any complexity
bound.

\subsection{Bidirectional BFS Depth and Cost}
\label{app:graph_bfs}

We chose $L_{\max} = \lceil \log |\mathcal{V}_t| \rceil$ for the
bidirectional BFS in the path check. This choice rests on two
empirical regularities of knowledge graphs.

First, real-world knowledge graphs satisfy a small-world property:
the typical shortest-path distance between two random nodes scales
as $O(\log |\mathcal{V}|)$. This holds for ConceptNet, Wikidata,
Freebase, and the graph memories used by \cite{anokhin2024arigraph}
and \cite{zhang2025g}. Setting $L_{\max}$ to this scale captures
essentially all paths the contributor could plausibly justify, while
truncating exponential blowup of BFS at deeper levels.

Second, bidirectional BFS searches from both endpoints simultaneously
and meets in the middle, so the effective branching factor is
$d^{L_{\max}/2}$ rather than $d^{L_{\max}}$. With $L_{\max} = \log
|\mathcal{V}_t|$ this gives a worst-case cost of $\sqrt{|\mathcal{V}_t|}
\cdot \text{poly}(d)$, which is sub-linear in graph size.

\textbf{Failure modes.}
If the graph is unusually dense or has a large diameter, $L_{\max} =
\log |\mathcal{V}_t|$ may either over-explore or miss valid paths. We
detect this online by tracking the running ratio
$|\mathcal{P}_{uv}| / \text{expected paths}$ across rounds. If the
ratio drifts outside $[0.5, 2.0]$, we adjust $L_{\max}$ by $\pm 1$.
The adjustment changes the BFS cost by at most a factor of $d^{1/2}$
(the same bidirectional speedup applies), so the marginal-cost
analysis in Table~\ref{tab:complexity_comparison} still holds within
a constant factor.

\subsection{Retrieval Complexity Under Trust Weighting}
\label{app:retrieval_complexity}

We claimed in \S\ref{ssec:retrieval} that trust weighting adds zero
asymptotic cost to retrieval (not just marginal cost, but no added
cost at all). We justify this for both architectures.

\textbf{Embedding retrieval.}
The trust-weighted vector $\tilde{\mathbf{e}}_k = \sqrt{\rho_k} \cdot
\mathbf{e}_k$ is computed once at index-load time and stored in place
of $\mathbf{e}_k$. Top-$k$ retrieval against
$\{\tilde{\mathbf{e}}_k\}$ uses the same ANN index structure as
retrieval against $\{\mathbf{e}_k\}$, with the same $O(k \log N)$
cost per query. When an entry's $\rho_k$ is updated, we re-scale that
single vector in $O(d)$ and re-insert into the ANN index in $O(\log
N)$, identical to a standard memory write. No new cost term enters
Table~\ref{tab:complexity_comparison}.

\textbf{Graph retrieval.}
For path queries, the trust-weighted path strength $\prod_{e \in \pi}
\rho_e$ is computed lazily during traversal: each edge access
multiplies the running product by $\rho_e$. This is one extra
multiplication per visited edge, an $O(1)$ overhead on top of the
$O(1)$ baseline edge-access cost. Total query complexity stays at
$O(d^{L_{\max}})$, identical to baseline. When an edge's $\rho_e$ is
updated, we modify a single entry in the edge attribute table in
$O(1)$.

\subsection{Implementation Notes}
\label{app:impl_notes}

We provide a few practical notes for re-implementation.

\textbf{ANN index choice.}
We use HNSW \citep{malkov2018efficient} with $M = 16$ and $\text{ef} =
200$ for the embedding case. The detection pass requires range
queries (count neighbors within a similarity radius), which HNSW does
not natively support, so we issue a top-$k$ query with a generous
upper bound (e.g., $k_{\text{filter}} = 100$) and filter by the
adaptive radius post-hoc. The post-hoc filter is $O(k_{\text{filter}})$
per call and does not change the $O(\log N)$ asymptotic of the
underlying ANN query, so it is absorbed into the $O(K \log N)$
detection-pass cost reported in
Table~\ref{tab:complexity_comparison}.

\textbf{RBO truncation.}
Rank-biased overlap is defined as an infinite series, but it
converges geometrically. We truncate at depth $k$ (the same $k$ used
for top-$k$ retrieval) with persistence parameter $p_{\text{RBO}} =
0.9$, the default in the original RBO paper. This is a property of
RBO itself and not a hyperparameter introduced by \system.

\textbf{Probe sampling.}
For embedding memory, $Q_i$ is the auditor's pending-query queue
(typically size $1$ to $5$ per round). For graph memory, $\pi_i$ is
the auditor's most recent traversal path. In both cases, we sample
uniformly using the system's standard random number generator with a
seed reset at each round to prevent adaptive auditors from predicting
future samples from past behavior.

\textbf{Numerical stability.}
We clamp all $\rho$, $w_i$, $\rho_{\text{geo}}$, and $\rho_{\text{ret}}$
values into $[10^{-6}, 1 - 10^{-6}]$ before taking products or square
roots, to avoid underflow when scores accumulate over many rounds.
This is standard numerical hygiene and not a tunable parameter.
\section{Experiments Results Analysis}
\label{app:expt}

\subsection{Configuration}
\label{app:configuration}

\textbf{LLM backbone and implementation.}
All experiments in Table~\ref{tab:main} use
Qwen3-VL-8B-Instruct~\citep{qwen3technicalreport} as the
agent backbone, deployed locally via Ollama with
$N = 6$ agents per system.  We select Qwen3-VL-8B-Instruct because it achieves SOTA on AgentBench and ToolBench among 8B models under the Apache 2.0 license, while its native function-calling interface simplifies multi-agent orchestration.  Alternative
backbones (DeepSeek-R1-Distill-7B, Llama-3.1-8B)
are evaluated in Appendix~\ref{app:diff-backbone}.

Each agent operates independently with no shared
parameters or centralised coordination, except for
interaction through the shared memory $\mathcal{M}$.
The calibration mechanism $\Phi$ is implemented as a
standalone Python process with no generative LLM dependency for runtime scoring decisions,
consistent with its role as the only component not
subject to Definition~\ref{assm:zerotrust}.  This
design ensures that $\Phi$ cannot be manipulated by
any agent and remains a structurally trusted
component in the system.

Embeddings used in $\rho_{\text{detect}}$ and
$\rho_{\text{align}}$ are computed using
all-MiniLM-L6-v2~\citep{wang2020minilm} with
approximate nearest-neighbour retrieval for efficient
scaling to large memory sizes.  Experiments with
varying agent counts $N \in \{4, 10\}$ are reported
in Appendix~\ref{app:agent_scaling}.

\medskip
\textbf{Hyperparameter settings.}
We summarise the key hyperparameters in
Table~\ref{tab:hyperparams}.  Unlike prior
multi-agent or memory-based systems that rely on
manually tuned thresholds, \system\ minimises the
number of fixed hyperparameters by estimating most
quantities directly from data.

\begin{table}[h]
\centering
\small
\caption{Key hyperparameters and configurations
  in \system.}
\label{tab:hyperparams}
\begin{tabular}{l|l|l}
\toprule
\textbf{Component} & \textbf{Parameter}
  & \textbf{Value / Strategy} \\
\midrule
Agents & Number of agents $N$
  & 6 (default), varied in Appendix \\
LLM & Backbone
  & Qwen3-VL-8B-Instruct \\
Memory & Top-$k$ retrieval
  & $k = 5$ \\
Embedding & Dimension $d$
  & 384 (MiniLM default) \\
\midrule
Calibration & Commit threshold $\rho^*$
  & Running median (adaptive) \\
Detection & Radius threshold
  & Median similarity (adaptive) \\
Graph & Path depth $L_{\max}$
  & $\lceil \log |\mathcal{V}| \rceil$ \\
Alignment & Probe sampling
  & Uniform random from $Q_i$ \\
Credibility & Update rate $\delta_t$
  & Invalid probe ratio (adaptive) \\
\bottomrule
\end{tabular}
\end{table}

\medskip
\textbf{Adaptive parameter estimation.}
A key design principle of \system\ is that most
hyperparameters are \emph{data-adaptive} rather than
manually tuned:

\begin{itemize}
    \item The commit threshold $\rho^*$ is defined as
    the running median of calibration scores over
    recent committed entries, eliminating the need
    for task-specific threshold tuning.

    \item Detection thresholds (e.g., neighbourhood
    radius in embedding space, compatibility scores
    in graph memory) are derived from empirical
    statistics such as median similarity or historical
    co-occurrence patterns, ensuring robustness across
    different domains.

    \item The credibility update rate $\delta_t$ is
    computed as the fraction of invalid probes in each
    round, allowing the system to automatically adjust
    its sensitivity to auditor behaviour without
    introducing additional parameters.

    \item Graph traversal depth $L_{\max}$ scales with
    the logarithm of graph size, reflecting typical
    structural properties of real-world knowledge
    graphs while maintaining computational efficiency.
\end{itemize}

This adaptive design serves two purposes.  First, it
reduces the risk of overfitting hyperparameters to
specific benchmarks, improving generalisation across
tasks and memory architectures.  Second, it aligns
with the zero-trust assumption by avoiding reliance
on any externally tuned or agent-provided signals.
All thresholds are grounded in observable system
statistics, ensuring that calibration remains
consistent and robust under adversarial or
non-cooperative agent behaviours.

\subsection{Dataset Statistics}
\label{app:datasets}

\noindent\textbf{Why these benchmarks.} The four benchmarks are
selected to stress-test the primary failure modes identified by
our zero-trust game analysis: (i) a single dominant contributor
injecting a locally plausible but globally inconsistent entry, and
(ii) auditors rubber-stamping an incorrect proposal when 
uncertain or sycophantic. These failures cause particular harm
in long-horizon tasks---premature goal states in embodied action,
fabricated intermediate facts in multi-hop reasoning, and temporal
contradictions in lifelong conversation. The selected datasets
span knowledge-intensive and action-intensive domains, providing
the diverse challenge needed to evaluate \system's calibration.

Table~\ref{tab:dataset_stats} summarises the four benchmarks
used in our experiments. We report the number of task instances,
the average interaction length, the evaluation metric, and the 
domain of each benchmark.

\begin{table}[h]
\centering
\small
\caption{Benchmark statistics for the four evaluation
  benchmarks.}
\label{tab:dataset_stats}
\begin{tabular}{l|ccccc}
\toprule
\textbf{Benchmark} & \textbf{\#Tasks}
  & \textbf{Avg.\ Len.} & \textbf{Metric}
  & \textbf{Domain} & \textbf{Source} \\
\midrule
ALFWorld  & 134 & $\sim$25 steps & Success rate
  & Embodied  & \cite{shridhar2020alfworld} \\
Webshop  & 1000 & $\sim$30 steps & Reward
  & Embodied  & \cite{yao2022webshop} \\
HotpotQA  & 500 & 1 query  & Accuracy
  & Knowledge & \cite{yang2018hotpotqa} \\
LoCoMo     & 500 & $<$300 turns  & F1
  & Knowledge & \cite{maharana2024evaluating} \\
\bottomrule
\end{tabular}
\end{table}

\textbf{ALFWorld} is a text-based embodied environment
derived from ALFRED, covering six task categories:
\texttt{pick\_and\_place}, \texttt{look\_at\_obj\_in\_light},
\texttt{pick\_clean\_then\_place},
\texttt{pick\_heat\_then\_place},
\texttt{pick\_cool\_then\_place}, and
\texttt{pick\_two\_obj\_and\_place}. We use the
\texttt{valid\_unseen} split (134 tasks), where each task
requires a sequence of physical actions with an average length of
approximately 25 steps. \textbf{Why it matters:} The long action
chain and strict precondition dependencies make this benchmark
particularly susceptible to premature goal-state injection---a
corrupted memory entry that incorrectly asserts a task is complete
can terminate the entire episode with zero reward (see Case~2 in
Appendix~\ref{app:case_study}).

\textbf{WebShop} is an embodied decision-making benchmark
that simulates online shopping through natural language
interactions. Each task requires an agent to navigate search
results, apply filters, and inspect product pages to locate a
specified item. We evaluate on 1,000 randomly sampled tasks, with
an average trajectory length of approximately 30 steps.
\textbf{Why it matters:} The large, dynamic search space forces
frequent memory writes under uncertainty; a dominant agent that
confidently commits an incorrect product attribute (e.g., a wrong
color or price) can mislead the entire downstream search process.

\textbf{HotpotQA} is a multi-hop question answering benchmark
constructed from Wikipedia. Each instance requires reasoning
across multiple documents to derive the final answer. We use 500
questions drawn from the full HotpotQA dev set, each consisting
of a single query, and report accuracy. \textbf{Why it matters:}
A hallucinated intermediate fact committed to shared memory can 
override correct evidence later retrieved by other agents, 
directly testing the ability of $\rho_{\text{align}}$ to detect 
inconsistent overwrites in the retrieval pool (see Case~1).

\textbf{LoCoMo} is a long-context conversational memory
benchmark. Each original dialogue spans up to 35 sessions and
averages 300 turns, simulating a lifelong, multi-session human
conversation. From these long-form dialogues, we sample 500 
question-answering tasks, each derived from a \emph{separate} 
dialogue to guarantee statistical independence. Every task 
presents the full dialogue history in memory and asks one 
evaluative question requiring cross-session temporal reasoning. 
We report F1 as the primary evaluation metric. \textbf{Why it 
matters:} The extreme temporal span (300 turns) naturally creates 
memory decay, contradiction, and topic drift across sessions.
This tests the most dangerous failure mode for our theory: a
contributor who proposes a confident but incorrect summary, and a
room of auditors who, due to the cost of searching 300-turn
memory, prefer to agree rather than verify (see Case~3).

\subsection{Extended Analysis of Overall Effectiveness}
\label{app:main_discussion}

This section expands on our insights of \S\ref{ssec:main-expt} with deeper mechanistic explanation.

\textbf{Why isolated checks saturate?} 
On embedding memory, the gain from Vanilla to the strongest per-entry baseline (A-MemGuard) is modest, and \system\ then adds a further substantial improvement on top, exceeding the combined gain of all three intermediate baselines (LLM Audit, PPL Filter, A-MemGuard). The three baselines also cluster tightly with frequent rank inversions, suggesting they capture overlapping signals rather than complementary ones.

Both observations are consistent with the saturation hypothesis: isolated entry level checks evaluate only the \emph{intrinsic quality} of $\Delta_t$ (grammaticality, fluency, perplexity), while the dominant errors in a growing memory $\mathcal{M}_t$ arise from \emph{relational inconsistency}, where an entry reads perfectly in isolation yet contradicts existing memory. The two pass design $\sqrt{\rho_{\text{detect}} \cdot \rho_{\text{align}}}$ breaks this ceiling by evaluating each delta against $\mathcal{M}_t$ as a whole rather than as an independent unit.

\textbf{Qualitative pattern of ceiling aware gains.} 
An inverse scaling effect is visible across all configurations: \system's gain over the next best baseline is largest where that baseline is weakest, and shrinks where the baseline already performs well. This pattern holds consistently across reasoning intensive and action intensive benchmarks alike, with the largest improvements appearing in settings where existing safeguards struggle the most.

This shrinkage is not a weakness of $\Phi$ but a property of its gating regime: a calibration filter can only act on entries that would otherwise be admitted erroneously, so its marginal effect is bounded by the fraction of errors in the input stream. Importantly, this inverse scaling pattern is \emph{inconsistent} with the hypothesis that performance gains stem from external knowledge leakage, which would produce a constant additive boost regardless of baseline strength. Instead, it is \emph{consistent} with the hypothesis that $\Phi$ filters errors that downstream baselines would have admitted.

\textbf{Memory benchmark interaction.} 
The strongest results for reasoning intensive benchmarks come from AriGraph paired with \system, while the strongest action task results come from G-Memory paired with \system. This complementarity is already visible in the Vanilla setting: AriGraph leads on multi hop QA, while G-Memory leads on embodied action and long dialogue tasks. These pre existing biases reflect each architecture's native design. AriGraph's semantic knowledge graph with episodic memory is well suited to multi hop QA, while G-Memory's three tier hierarchical structure captures the long action chains and multi session dialogue history of action and conversational tasks.

\system's calibration $\Phi$ \emph{inherits and amplifies} these biases rather than imposing a generic filter that flattens architectural differences. Because $\rho_{\text{detect}}$ and $\rho_{\text{align}}$ are computed directly from each architecture's geometry (embedding manifold) or topology (graph neighbourhood and multi hop paths), $\Phi$ filters out noisy entries that dilute the architecture's native signal, allowing each design's strengths to emerge more clearly. The gap between AriGraph and G-Memory therefore widens under \system\ in the directions each architecture is naturally suited to, rather than narrowing toward a common baseline.

\textbf{Cross MAS consistency.} 
Absolute performance differs substantially across MAD frameworks, with AutoGen generally leading, followed by DyLAN and MacNet. Yet \system's gain over the next best baseline is comparably stable across all three frameworks. The same ranking and the same inverse scaling pattern persist regardless of which orchestration scheme is used.

The stability of the gain across orchestration regimes confirms that $\Phi$ acts at the memory layer rather than at the coordination layer. Weak orchestration and strong orchestration receive similar lifts because calibration filters $\Delta_t$ at the point of write, which is upstream of any MAS specific aggregation. This decoupling is a practical advantage: \system\ can be deployed with any MAD framework without re tuning or framework specific adaptation.

\subsection{Ablation Study}
\label{app:ablation}
\begin{table*}[t]
\centering
\caption{Ablation study of \system's calibration components
  (\%). We ablate one component at a time, keeping all
  notation consistent with Section~\ref{sec:calibration}.
  For embedding memory, $\rho_{\text{detect}}$ and
  $\rho_{\text{align}}$ are removed directly. For graph
  memory, we remove components contributing to
  $\rho_{\text{detect}}$ and $\rho_{\text{align}}$,
  including an explicit ablation of $\rho_{\text{align}}$.
  All configurations use AutoGen + \texttt{Qwen3-VL-8B-Instruct},
  $N=6$ agents, benign setting (mean$\pm$std over 5 runs).
  Bold indicates the full \system.}
\label{tab:ablation}
\setlength{\tabcolsep}{3pt}
\renewcommand{\arraystretch}{1.15}
\resizebox{\textwidth}{!}{
\begin{tabular}{l|l|cccc|cccc|cccc}
\toprule
& & \multicolumn{4}{c|}{\textbf{AutoGen}}
  & \multicolumn{4}{c|}{\textbf{MacNet}}
  & \multicolumn{4}{c}{\textbf{DyLAN}} \\
\cmidrule(lr){3-6} \cmidrule(lr){7-10} \cmidrule(lr){11-14}
\textbf{Memory} & \textbf{Ablation}
  & \textbf{HQA} & \textbf{LCM}
  & \textbf{ALF} & \textbf{WS}
  & \textbf{HQA} & \textbf{LCM}
  & \textbf{ALF} & \textbf{WS}
  & \textbf{HQA} & \textbf{LCM}
  & \textbf{ALF} & \textbf{WS} \\
\midrule

\multirow{4}{*}{\shortstack[l]{MemBank\\\scriptsize{(emb.)}}}

& w/o $\rho_{\text{detect}}$
  & 45.0\stdval{6.3} & 32.9\stdval{6.6} & 70.7\stdval{3.4} & 36.8\stdval{8.6}
  & 48.3\stdval{9.5} & 33.1\stdval{10.2} & 51.8\stdval{1.0} & 43.7\stdval{7.1}
  & 46.4\stdval{0.9} & 35.6\stdval{6.3} & 58.5\stdval{2.6} & 40.7\stdval{3.6} \\

& w/o $\rho_{\text{align}}$
  & 39.4\stdval{5.1} & 25.7\stdval{6.5} & 74.0\stdval{3.9} & 38.0\stdval{8.1}
  & 44.5\stdval{8.2} & 20.7\stdval{10.6} & 59.8\stdval{1.0} & 45.2\stdval{7.6}
  & 41.3\stdval{0.8} & 28.4\stdval{7.2} & 65.4\stdval{2.5} & 43.4\stdval{3.7} \\

& w/o reweighting
  & 47.0\stdval{6.0} & 33.6\stdval{5.5} & 75.3\stdval{4.1} & 42.3\stdval{8.2}
  & 52.1\stdval{8.3} & 34.5\stdval{10.5} & 62.2\stdval{1.2} & 49.4\stdval{6.2}
  & 49.2\stdval{0.8} & 37.7\stdval{6.8} & 68.5\stdval{2.4} & 46.3\stdval{3.5} \\


\midrule

\multirow{5}{*}{\shortstack[l]{G-Memory\\\scriptsize{(graph)}}}

& w/o $s_{\text{local}}$ ($\rho_{\text{detect}}$)
  & 45.3\stdval{10.1} & 52.8\stdval{3.6} & 76.3\stdval{3.9} & 48.8\stdval{6.5}
  & 41.2\stdval{7.3} & 55.8\stdval{1.5} & 67.6\stdval{10.0} & 51.4\stdval{3.7}
  & 42.1\stdval{8.1} & 47.5\stdval{2.6} & 64.1\stdval{12.1} & 45.0\stdval{6.4} \\

& w/o $s_{\text{path}}$ ($\rho_{\text{detect}}$)
  & 38.7\stdval{9.3} & 49.3\stdval{3.7} & 80.9\stdval{3.5} & 51.3\stdval{6.2}
  & 35.4\stdval{6.6} & 50.2\stdval{1.8} & 70.6\stdval{8.6} & 55.0\stdval{3.4}
  & 36.5\stdval{9.0} & 39.8\stdval{2.6} & 69.1\stdval{12.0} & 47.2\stdval{6.2} \\

& w/o $\rho_{\text{align}}$
  & 41.9\stdval{9.6} & 53.7\stdval{3.5} & 79.0\stdval{3.6} & 50.2\stdval{6.3}
  & 39.0\stdval{6.9} & 54.4\stdval{1.6} & 69.3\stdval{9.1} & 52.6\stdval{3.5}
  & 40.3\stdval{8.8} & 43.1\stdval{2.7} & 67.2\stdval{11.5} & 47.5\stdval{6.1} \\

& w/o reweighting
  & 47.0\stdval{9.8} & 56.4\stdval{3.4} & 82.3\stdval{3.7} & 52.5\stdval{6.4}
  & 46.5\stdval{6.6} & 58.0\stdval{1.6} & 72.5\stdval{9.7} & 55.8\stdval{3.3}
  & 47.0\stdval{8.5} & 49.5\stdval{2.8} & 71.4\stdval{10.6} & 50.7\stdval{6.0} \\


\bottomrule
\end{tabular}}
\end{table*}

This part dissects \system's calibration mechanism $\Phi$ into its component scoring passes to identify which elements drive the gains observed in RQ1. We ablate one component at a time on two representative memory architectures (MemBank and G-Memory) across all three MAS frameworks and four benchmarks. Table~\ref{tab:ablation} reports the results.

\textbf{Retrieval aware checks are the most important components.}
Removing $\rho_{\text{align}}$ from embedding memory causes the largest drops in the table, with the steepest declines appearing on reasoning intensive benchmarks. In graph memory, removing $s_{\text{path}}$, which contributes to the detection signal $\rho_{\text{detect}}$, produces the largest single component drops, again with the most severe declines on reasoning tasks. These drops are substantially larger than those from removing reweighting alone.

Both $\rho_{\text{align}}$ and $s_{\text{path}}$ share the same principle: they evaluate $\Delta_t$ against the \emph{broader memory state} $\mathcal{M}_t$ rather than in isolation. $\rho_{\text{align}}$ measures how much the proposed delta disrupts retrieval rankings (embedding) or violates multi hop path consistency (graph). Without this signal, \system\ loses its ability to detect entries that are locally plausible but globally inconsistent. This is the dominant failure mode shared by LLM~Audit and PPL~Filter (Section~\ref{ssec:main-expt}).

\textbf{The dominant component depends on the benchmark category.}
The alignment signal $\rho_{\text{align}}$ contributes most on reasoning intensive benchmarks (HotpotQA, LoCoMo), where multi hop evidence chains are critical and a single inconsistent entry can derail downstream reasoning. The drops from removing it are largest on these benchmarks, smaller on action tasks such as ALFWorld, and intermediate on WebShop.

In contrast, detection signals ($\rho_{\text{detect}}$) contribute more on action intensive benchmarks (ALFWorld, WebShop), where state consistency matters. On embedding memory, removing $\rho_{\text{detect}}$ hurts action benchmarks more than removing $\rho_{\text{align}}$ does. The same pattern holds in graph memory, where removing detection components such as $s_{\text{local}}$ or $s_{\text{path}}$ produces noticeable drops on action tasks even when the corresponding alignment drops are smaller.

This asymmetry validates the two pass design of Section~\ref{ssec:arch_calib}: $\rho_{\text{detect}}$ captures local plausibility and structural density (critical for state consistency in action tasks), while $\rho_{\text{align}}$ captures global consistency across evidence chains (critical for reasoning tasks). Their composition ensures that both failure modes are covered, regardless of benchmark category.

\textbf{Auditor reweighting has the smallest benign effect.}
Removing the credibility weight decay $w_i^{(t)}$ causes the smallest drops among the three ablated components in nearly every configuration. The declines from removing reweighting are markedly smaller than those from removing $\rho_{\text{align}}$ or $s_{\text{path}}$ under the same configurations.

This is expected: reweighting penalises auditors whose probes are structurally invalid, but in the benign setting agents rarely produce invalid signals, so $w_i^{(t)}$ remains close to its initial value. The reweighting mechanism becomes critical under adversarial settings (Section~\ref{sec:rq2}), where malicious auditors submit misleading probes and credibility decay is required to suppress their influence over time. The minimal benign impact demonstrates that reweighting imposes essentially no cost on honest agents, while its protective value materialises precisely when the zero trust assumption is violated by adversaries.

\textbf{No single component is sufficient.}
Even the most impactful individual ablation does not collapse performance to the Vanilla level. The ablated system always retains a clear margin above the corresponding Vanilla score, even when the most important component is removed. Removing only reweighting leaves a system that is only marginally weaker than full \system\ in the benign setting, confirming that credibility tracking does not harm honest agents.

This gradient confirms that $\Phi$ operates as a composed scoring mechanism: each component targets a distinct failure mode, and their combination is necessary for full robustness. The fact that even the most severe ablation retains performance above Vanilla demonstrates that the remaining components provide partial, non overlapping protection, exactly the property required for a defence in depth design under the zero trust assumption.

\subsection{Alternative LLM Backbones}
\label{app:diff-backbone}

Table~\ref{tab:main} uses Qwen3-VL-8B-Instruct as the
agent backbone throughout.  To verify that \system's
advantage is not specific to one LLM, we replicate
the MacNet evaluation on MemBank and G-Memory with
two additional 7B/8B-class open-source models:
\textbf{DeepSeek-R1-Distill-7B}~\citep{guo2025deepseek},
a reasoning-optimised distillation of DeepSeek-R1,
and \textbf{Llama-3.1-8B-Instruct}~\citep{patterson2022carbon},
a general-purpose instruction-tuned model.
Table~\ref{tab:backbone} reports the results.

\begin{table*}[t]
\centering
\caption{Performance with alternative LLM backbones (\%) on MacNet, $N=6$ agents, benign setting.}
\label{tab:backbone}
\setlength{\tabcolsep}{3pt}
\renewcommand{\arraystretch}{1.15}
\resizebox{\textwidth}{!}{
\begin{tabular}{l|l|cccc|cccc|cccc}
\toprule
& & \multicolumn{4}{c|}{\textbf{Qwen3-VL-8B-Instruct} (primary)}
  & \multicolumn{4}{c|}{\textbf{DeepSeek-R1-Distill-7B}}
  & \multicolumn{4}{c}{\textbf{Llama-3.1-8B}} \\
\cmidrule(lr){3-6} \cmidrule(lr){7-10} \cmidrule(lr){11-14}
\textbf{Memory} & \textbf{Safeguard}
  & \textbf{HQA} & \textbf{LCM}
  & \textbf{ALF} & \textbf{WS}
  & \textbf{HQA} & \textbf{LCM}
  & \textbf{ALF} & \textbf{WS}
  & \textbf{HQA} & \textbf{LCM}
  & \textbf{ALF} & \textbf{WS} \\
\midrule

\multirow{5}{*}{\shortstack[l]{MemBank\\\scriptsize{(emb.)}}}
& Vanilla
  & 33.6\stdval{8.2} & 20.7\stdval{10.2} & 48.9\stdval{4.3} & 24.2\stdval{7.3}
  & 34.8\stdval{9.4} & 24.5\stdval{11.1} & 45.2\stdval{3.8} & 23.6\stdval{6.1}
  & 32.4\stdval{8.1} & 18.2\stdval{9.5} & 47.1\stdval{2.4} & 20.4\stdval{7.5} \\
& + LLM Audit
  & 41.4\stdval{7.7} & 22.8\stdval{10.2} & 60.3\stdval{6.2} & 31.8\stdval{7.3}
  & 42.1\stdval{8.3} & 27.2\stdval{10.5} & 53.7\stdval{3.4} & 28.1\stdval{7.2}
  & 36.9\stdval{7.9} & 21.5\stdval{10.1} & 51.2\stdval{6.2} & 26.5\stdval{6.8} \\
& + PPL Filter
  & 41.8\stdval{8.7} & 29.4\stdval{10.4} & 57.2\stdval{2.2} & 25.6\stdval{8.0}
  & 39.5\stdval{9.1} & 34.1\stdval{12.2} & 51.6\stdval{6.6} & 25.9\stdval{7.4}
  & 37.2\stdval{9.6} & 24.1\stdval{10.8} & 55.4\stdval{3.5} & 24.2\stdval{8.1} \\
& + A-MemGuard
  & 44.5\stdval{6.2} & 32.7\stdval{7.6} & 58.5\stdval{4.0} & 39.6\stdval{7.6}
  & 46.2\stdval{7.5} & 33.8\stdval{8.1} & 54.9\stdval{5.2} & 35.1\stdval{7.9}
  & 42.6\stdval{6.8} & 30.4\stdval{8.5} & 52.8\stdval{3.3} & 33.4\stdval{7.2} \\
& \textbf{+ \system}
  & \textbf{56.1}\stdval{7.8} & \textbf{38.8}\stdval{8.4} & \textbf{66.0}\stdval{5.9} & \textbf{55.0}\stdval{5.9}
  & \textbf{53.5}\stdval{8.2} & \textbf{36.7}\stdval{9.1} & \textbf{60.3}\stdval{4.2} & \textbf{47.8}\stdval{6.4}
  & \textbf{47.0}\stdval{8.5} & \textbf{30.9}\stdval{8.0} & \textbf{56.6}\stdval{3.1} & \textbf{44.1}\stdval{6.2} \\

\midrule

\multirow{5}{*}{\shortstack[l]{G-Memory\\\scriptsize{(graph)}}}
& Vanilla
  & 35.6\stdval{7.4} & 47.6\stdval{1.5} & 67.1\stdval{12.5} & 45.1\stdval{4.4}
  & 38.9\stdval{8.0} & 49.2\stdval{1.8} & 65.5\stdval{11.8} & 43.8\stdval{5.1}
  & 31.2\stdval{6.7} & 45.3\stdval{1.6} & 62.1\stdval{10.9} & 40.5\stdval{4.2} \\
& + LLM Audit
  & 40.0\stdval{7.9} & 51.3\stdval{1.4} & 68.5\stdval{10.4} & 49.3\stdval{4.9}
  & 42.7\stdval{7.4} & 54.1\stdval{1.7} & 67.1\stdval{12.1} & 46.2\stdval{5.3}
  & 38.4\stdval{8.2} & 47.2\stdval{1.8} & 65.4\stdval{11.2} & 44.1\stdval{4.5} \\
& + PPL Filter
  & 43.2\stdval{7.0} & 56.8\stdval{1.3} & 72.6\stdval{10.4} & 54.7\stdval{3.5}
  & 47.5\stdval{7.2} & 59.4\stdval{1.5} & 70.8\stdval{11.1} & 51.4\stdval{3.9}
  & 40.8\stdval{7.8} & 49.6\stdval{1.5} & 66.2\stdval{10.5} & 48.2\stdval{4.0} \\
& + A-MemGuard
  & --- & --- & --- & ---
  & --- & --- & --- & ---
  & --- & --- & --- & --- \\
& \textbf{+ \system}
  & \textbf{54.5}\stdval{6.2} & \textbf{62.1}\stdval{1.4} & \textbf{74.2}\stdval{8.1} & \textbf{60.1}\stdval{2.9}
  & \textbf{53.3}\stdval{6.5} & \textbf{59.0}\stdval{1.6} & \textbf{69.0}\stdval{8.8} & \textbf{55.2}\stdval{3.3}
  & \textbf{47.9}\stdval{6.9} & \textbf{52.8}\stdval{1.5} & \textbf{66.8}\stdval{8.2} & \textbf{51.2}\stdval{3.4} \\

\bottomrule
\end{tabular}}
\end{table*}

\textbf{\system ranks first across all backbones.}
On every one of the 8 (memory $\times$ benchmark)
cells, \system\ achieves the highest score regardless
of which backbone is used.  The ranking among
safeguards is also preserved: \system\ $>$
A-MemGuard $>$ PPL~Filter $\approx$ LLM~Audit $>$
Vanilla, with the same occasional inversions between
intermediate baselines observed in
Table~\ref{tab:main}.

\textbf{Backbone strength shifts absolute levels
but not relative patterns.}
Qwen3-VL-8B-Instruct achieves the highest absolute performance
among the three backbones, followed by
DeepSeek-R1-Distill-7B ($2$--$7\%$ lower) and
Llama-3.1-8B ($7$--$11\%$ lower).  The gap
is consistent across safeguards: weaker backbones
produce lower scores for every method, not just for
\system.  Crucially, \system's \emph{relative} gain
over the best baseline remains substantial across
all backbones, confirming that calibration
provides consistent benefit regardless of backbone.

\textbf{Calibration quality tracks backbone
quality.}
The interaction between backbone and calibration is
monotonic: stronger backbones produce higher-quality
memory entries, which in turn give $\Phi$'s scoring
passes richer structure to calibrate against.
This is visible in the G-Memory + LoCoMo column,
where \system\ achieves $62.1\%$ with Qwen,
$59.0\%$ with DeepSeek, and $52.8\%$ with Llama.
The pattern confirms that $\Phi$ amplifies backbone
quality rather than substituting for it: calibration
is complementary to, not independent of, the
underlying LLM's capability.


\subsection{Effect of Agent Count}
\label{app:agent_scaling}

The main text reports results for $N = 6$ agents.
This appendix evaluates how $\Phi$'s effectiveness
varies with the number of participating agents.
We test $N \in \{4, 6, 10\}$ on the AutoGen framework
with Qwen3-VL-8B-Instruct and the G-Memory backbone in the
benign setting.

\begin{table}[h]
\centering
\small
\setlength{\tabcolsep}{4pt}
\caption{Effect of agent count $N$ on benign-setting
  performance (AutoGen, Qwen3-VL-8B-Instruct, G-Memory backbone).
  Values are reported in percentage (mean$\pm$std over 5 runs).
  $\Delta$ denotes the gain of \system\ over the next-best safeguard
  under the same agent count.}
\label{tab:agent_scaling}
\renewcommand{\arraystretch}{1.15}
\begin{tabular}{l|l|cccc|c}
\toprule
\textbf{Benchmark} & \textbf{$N$}
  & \textbf{Vanilla} & \textbf{LLM Audit}
  & \textbf{PPL Filter} & \textbf{\system}
  & $\boldsymbol{\Delta}$ \\
\midrule

\multirow{3}{*}{HotpotQA}
& 4  & 33.9\stdval{9.4} & 36.5\stdval{8.9}
     & 42.1\stdval{8.4} & \textbf{50.5}\stdval{8.0} & $+8.4$ \\
& 10 & 36.8\stdval{8.8} & 39.4\stdval{8.3}
     & 45.6\stdval{7.9} & \textbf{53.6}\stdval{7.5} & $+8.0$ \\

\midrule

\multirow{3}{*}{LoCoMo}
& 4  & 41.5\stdval{4.5} & 47.6\stdval{4.8}
     & 51.0\stdval{4.5} & \textbf{57.1}\stdval{3.4} & $+6.1$ \\
& 10 & 44.1\stdval{3.9} & 50.7\stdval{4.3}
     & 54.0\stdval{4.0} & \textbf{60.0}\stdval{2.8} & $+6.0$ \\

\midrule

\multirow{3}{*}{ALFWorld}
& 4  & 70.4\stdval{4.1} & 74.9\stdval{3.5}
     & 76.8\stdval{3.3} & \textbf{82.4}\stdval{3.5} & $+5.6$ \\
& 10 & 73.0\stdval{3.6} & 77.4\stdval{3.0}
     & 79.1\stdval{2.9} & \textbf{85.1}\stdval{3.0} & $+6.0$ \\

\midrule

\multirow{3}{*}{WebShop}
& 4  & 37.8\stdval{6.6} & 39.5\stdval{6.4}
     & 46.1\stdval{6.3} & \textbf{52.1}\stdval{6.0} & $+6.0$ \\
& 10 & 40.4\stdval{6.1} & 42.0\stdval{5.8}
     & 49.4\stdval{5.9} & \textbf{54.7}\stdval{5.5} & $+5.3$ \\

\bottomrule
\end{tabular}
\end{table}

\textbf{Small agent counts ($N = 4$).}
With only four agents, each update is evaluated by a
limited number of auditor probes (three independent
signals per delta). Despite this reduced diversity,
\system consistently outperforms all baselines by a
substantial margin across benchmarks. Compared to the
next-best safeguard, the absolute gains remain stable
(typically around $5\%$--$9\%$), indicating that
$\Phi$ does not rely on large numbers of auditors to
function effectively. However, we observe slightly
higher variance at $N = 4$, as adaptive statistics
(e.g., $\rho^*$ and credibility weights) are estimated
from fewer signals.

\textbf{Moderate agent counts ($N = 6$).}
At $N = 6$, performance improves modestly across all
methods, with the largest gains observed when moving
from $N = 4$ to $N = 6$. This suggests that increasing
the number of agents initially provides more diverse
reasoning traces and probe signals, improving both
memory quality and calibration accuracy. Notably,
\system achieves its strongest overall performance at
this setting, balancing signal diversity and
coordination overhead.

\textbf{Larger agent counts ($N = 10$).}
Increasing the agent count further to $N = 10$ yields
only marginal improvements over $N = 6$. While
$\rho_{\text{align}}$ benefits from additional probe
diversity, these gains are offset by increased
redundancy among agent reasoning traces and auditor
signals. As a result, performance begins to saturate,
indicating diminishing returns from additional agents.
Importantly, \system maintains a consistent advantage
over all baselines, suggesting that its effectiveness
is largely insensitive to the exact choice of $N$.

\textbf{Scaling implications.}
Overall, agent count primarily affects the \emph{quality
of collective signals} rather than acting as a dominant
performance driver. $\Phi$ requires no assumption about
a minimum number of reliable agents
(Assumption~\ref{assm:zerotrust}) and remains effective
even at small $N$. Increasing $N$ improves stability
and slightly enhances performance, but the benefits
quickly plateau beyond $N = 6$. For very large agent
pools ($N > 50$), auditor probe/path generation can be
sub-sampled to a random subset of $\min(N, 20)$ agents
with negligible impact on calibration quality, ensuring
scalability without increasing latency.

\subsection{Effect of Debate Rounds (RQ1 Supplement)}
\label{app:debate_rounds}

Figure~\ref{fig:debate_rounds} examines how
performance evolves as the number of debate rounds
increases from $1$ to $5$, using LoCoMo (F1) on
MacNet + Qwen3-VL-8B-Instruct with $N=6$ agents.

\begin{figure}[h]
\centering
\includegraphics[width=\linewidth]{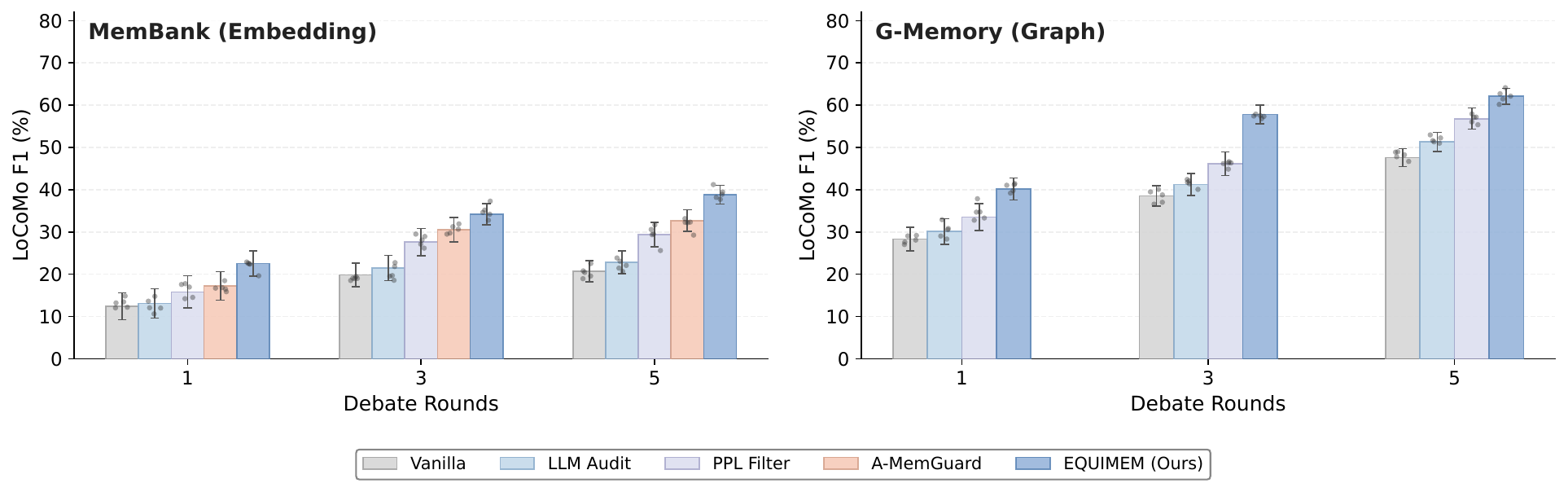}
\caption{LoCoMo F1 vs.\ debate rounds on two memory
  architectures (MacNet, Qwen3-VL-8B-Instruct, $N=6$).
  Round~5 values match Table~\ref{tab:main}.
  MemBank (left) saturates after round~3; G-Memory
  (right) improves steadily.  \system\ follows
  the same trend shape as Vanilla in both cases but
  at a consistently higher level, with the gap
  widening at later rounds.}
\label{fig:debate_rounds}
\end{figure}

\textbf{Embedding memory saturates early.}
On MemBank (left panel), all methods show a sharp
rise from round~1 to round~3, followed by near-zero
improvement from round~3 to round~5.  Vanilla
improves from $12.4\%$ to $19.8\%$ ($+7.4\%$) in the
first interval but only from $19.8\%$ to $20.7\%$
($+0.9\%$) in the second.  This saturation occurs
because embedding memory accumulates dense vectors
that increasingly overlap, reducing the marginal
information content of later rounds.  \system\
follows the same saturation shape but at a higher
level: $22.5\% \to 34.2\% \to 38.8\%$, with the
gap over Vanilla widening from $+10.1\%$ at round~1
to $+18.1\%$ at round~5.  The baselines (LLM~Audit,
PPL~Filter, A-MemGuard) also saturate, clustered
between Vanilla and \system.

\textbf{Graph memory benefits from longer debate.}
On G-Memory (right panel), all methods improve
steadily across all three rounds.  Vanilla rises
from $28.3\%$ to $38.5\%$ to $47.6\%$, with roughly
equal gains per interval ($+10.2\%$ and $+9.1\%$).
G-Memory's hierarchical knowledge graph continues
to benefit from additional debate rounds because
each round's edges extend the graph's coverage into
previously unreached regions of the reasoning space.
\system\ amplifies this trend:
$40.2\% \to 57.8\% \to 62.1\%$, with the gap over
Vanilla growing from $+11.9\%$ at round~1 to $+14.5\%$
at round~5.

\textbf{Calibration benefit grows with rounds.}
In both panels, the gap between \system\ and
baselines \emph{widens} at later rounds.  This is
expected: as more entries accumulate in
$\mathcal{M}_t$, the fraction of locally plausible
but globally inconsistent entries grows, and
$\Phi$'s memory-state-aware scoring becomes
increasingly valuable.  Isolated entry-level
checks (LLM~Audit, PPL~Filter, A-MemGuard) cannot
leverage the growing memory state and therefore
plateau alongside Vanilla.

\subsection{Adversarial Baseline Comparison}
\label{app:adversarial_baselines}

This appendix supplements Section~\ref{sec:rq2} by
reporting the adversarial performance of LLM~Audit
and PPL~Filter under the same two attack schemes,
memory architectures, benchmarks, and adversary
counts used for \system\ in the main text.  All
experiments use MacNet + Qwen3-VL-8B-Instruct with $N=6$
agents. Algorithms~\ref{alg:memory_poisoning}
and~\ref{alg:audit_collusion} detail the two
adaptive attack procedures used throughout this
section and Section~\ref{sec:rq2}.

\begin{algorithm}[t]
\caption{Memory Poisoning Attack}
\label{alg:memory_poisoning}
\begin{algorithmic}[1]
\Require Adversary set $\mathcal{K} \subset \mathcal{N}$,
  memory state $\mathcal{M}_t$, target safeguard type
\Ensure Poisoned delta $\Delta_t^{\text{poison}}$

\For{each round $t$ where contributor $a_c \in \mathcal{K}$}

  \State \textcolor{gray}{\textit{\% --- Craft poison adapted to safeguard ---}}

  \If{$\mathcal{M}_t$ is embedding-based}
    \State $\boldsymbol{\mu} \gets \texttt{Centroid}(\{\mathbf{e}_j\}_{j=1}^{N})$
      \Comment{centroid of existing vectors}
    \State $\boldsymbol{\epsilon} \sim \mathcal{N}(\mathbf{0},\, \sigma^2 \mathbf{I})$
      \Comment{small perturbation}
    \State $\mathbf{e}^{\text{poison}} \gets
      (\boldsymbol{\mu} + \boldsymbol{\epsilon})\, /\,
      \|\boldsymbol{\mu} + \boldsymbol{\epsilon}\|$
      \Comment{retrieval magnet}
    \State $m^{\text{poison}} \gets \texttt{LLM}(\texttt{``generate fluent
      text near:''}, \mathcal{M}_t)$
      \Comment{low PPL}
    \State $\Delta_t^{\text{poison}} \gets
      \{(\mathbf{e}^{\text{poison}},\, m^{\text{poison}})\}$
  \EndIf

  \If{$\mathcal{M}_t$ is graph-based}
    \State $u \gets \arg\max_{v \in \mathcal{V}_t}
      \deg(v)$
      \Comment{high-degree node}
    \State $v \gets \texttt{DistantNode}(u,\,
      \mathcal{M}_t,\, \text{min\_hops}{=}3)$
      \Comment{$\geq 3$ hops away}
    \State $r \gets \texttt{MostCommonRelation}(u,\,
      \mathcal{M}_t)$
      \Comment{locally plausible}
    \State $\Delta_t^{\text{poison}} \gets
      \{(u,\, r,\, v)\}$
      \Comment{passes $s_{\text{local}}$, fails $s_{\text{path}}$}
  \EndIf

  \State \textcolor{gray}{\textit{\% --- Adaptive: ensure poison evades
    isolated checks ---}}
  \State Assert $\texttt{PPL}(m^{\text{poison}}) <
    \tau_{\text{PPL}}$
    \Comment{defeats PPL Filter}
  \State Assert $\texttt{LLM\_Score}(\Delta_t^{\text{poison}})
    > 0.5$
    \Comment{defeats LLM Audit}

  \State \Return $\Delta_t^{\text{poison}}$
\EndFor
\end{algorithmic}
\end{algorithm}

\begin{algorithm}[t]
\caption{Audit Collusion Attack}
\label{alg:audit_collusion}
\begin{algorithmic}[1]
\Require Adversary set $\mathcal{K} \subset \mathcal{N}$
  ($|\mathcal{K}| = k$),
  poisoned delta $\Delta_t$ from colluding contributor,
  memory state $\mathcal{M}_t$, target safeguard type
\Ensure Modified auditor probes
  $\{q_i^{\text{collude}}\}_{a_i \in \mathcal{K} \cap \mathcal{A}_t}$

\For{each round $t$ where auditors
  $\mathcal{K} \cap \mathcal{A}_t \neq \emptyset$}

  \State \textcolor{gray}{\textit{\% --- Colluding contributor injects mild
    poison ---}}
  \State $\Delta_t \gets \texttt{CraftDelta}(a_c,\,
    \mathcal{M}_t)$
    \Comment{plausible but incorrect}

  \State \textcolor{gray}{\textit{\% --- Each colluding auditor crafts
    a supportive probe ---}}
  \For{each $a_i \in \mathcal{K} \cap \mathcal{A}_t$}

    \If{$\mathcal{M}_t$ is embedding-based}
      \State $\mathbf{e}^{\text{far}} \gets
        \arg\min_{\mathbf{e}_j \in \mathcal{M}_t}
        \cos(\mathbf{e}_j,\, \mathbf{e}^{\Delta})$
        \Comment{farthest from poison}
      \State $q_i^{\text{collude}} \gets
        \texttt{TextOf}(\mathbf{e}^{\text{far}})$
        \Comment{top-$k$ won't include $\Delta_t$}
      \State \textcolor{gray}{\textit{\% Result:
        $d_{\text{RBO}}(
        \text{top-}k(\mathcal{M}_t, q_i),\,
        \text{top-}k(\mathcal{M}_t \cup \Delta_t, q_i))
        \approx 0$}}
    \EndIf

    \If{$\mathcal{M}_t$ is graph-based}
      \State $(u, r, v) \gets$ sampled edge from
        $\Delta_t$
      \State $\pi_i^{\text{collude}} \gets
        \texttt{BuildPathTo}(u,\, \mathcal{M}_t,\,
        \mathcal{R}_{\text{compat}})$
      \State \textcolor{gray}{\textit{\% Fabricate
        a path that reaches $u$ via compatible
        relations}}
      \State $q_i^{\text{collude}} \gets
        \pi_i^{\text{collude}}$
        \Comment{validates poisoned edge}
    \EndIf

    \State Replace $a_i$'s real probe with
      $q_i^{\text{collude}}$
  \EndFor

  \State \textcolor{gray}{\textit{\% --- Honest
    auditors submit unmodified probes ---}}
  \For{each $a_j \in \mathcal{A}_t \setminus \mathcal{K}$}
    \State $q_j \gets$ intercepted from $a_j$'s
      normal retrieval
      \Comment{unmodified}
  \EndFor

  \State \textcolor{gray}{\textit{\% --- Effect on
    different safeguards ---}}
  \State \textcolor{gray}{\textit{LLM Audit: no
    effect (does not use $q_i$)}}
  \State \textcolor{gray}{\textit{PPL Filter: no
    effect (does not use $q_i$)}}
  \State \textcolor{gray}{\textit{EquiMem: inflates
    $\rho_{\text{align}}$ via colluding probes,
    but credibility decay on $\mathcal{K}$ agents
    suppresses their weight over rounds}}

\EndFor
\end{algorithmic}
\end{algorithm}

\textbf{Attack settings.}
As specified in
Algorithms~\ref{alg:memory_poisoning}--\ref{alg:audit_collusion},
both attacks follow the \emph{adaptive} threat model:
adversarial agents have full knowledge of the
safeguard mechanism they face and craft their
poisoned entries or colluding probes accordingly.
For Memory Poisoning, adversarial contributors
generate low-perplexity, fluent text that passes
PPL~Filter's threshold and receive high scores from
LLM~Audit's auxiliary judgment (since both defences
evaluate entries in isolation, the attacker simply
ensures each poisoned entry reads well on its own).
For Audit Collusion, adversarial auditors coordinate
to produce probe signals that reinforce the poisoned
delta, though this has no effect on LLM~Audit or
PPL~Filter, since neither baseline uses auditor
signals.  We test small, medium, and large
adversary counts: $k = 1$, $2$, $3$ out of $N = 6$
agents ($17\%$, $33\%$, $50\%$ of the group).
Adversary identities are fixed per run and
randomised across the 5 seeds.  The same adaptive
attack configurations are used for \system\ in
Section~\ref{sec:rq2}, ensuring a fair comparison:
the attacker adapts to whichever defence is deployed.

\begin{figure*}[h]
\centering
\includegraphics[width=\linewidth]{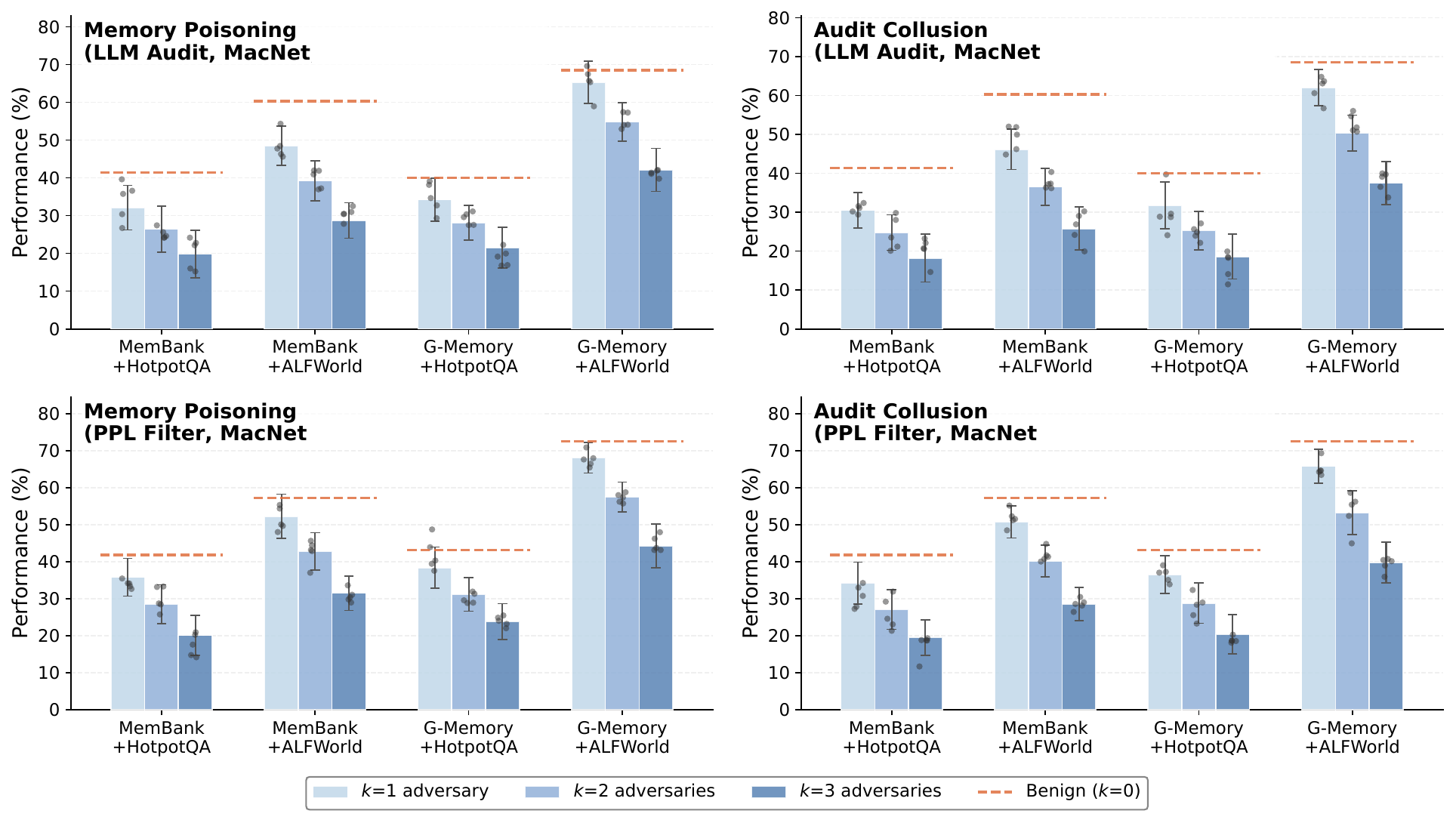}
\caption{Adversarial robustness of LLM~Audit (top
  row) and PPL~Filter (bottom row) under Memory
  Poisoning (left column) and Audit Collusion (right
  column).  Same setting as
  Figure~\ref{fig:rq3}: MacNet, Qwen3-VL-8B-Instruct,
  $N=6$, $k = 1,2,3$ adversaries.  Dashed orange
  lines show each method's benign performance
  ($k=0$, from Table~\ref{tab:main}).  Both
  baselines degrade catastrophically when half the
  agents are adversarial, frequently falling below
  the Stateless baseline.}
\label{fig:rq3_baselines}
\end{figure*}

\textbf{Both baselines collapse under attack.}
Figure~\ref{fig:rq3_baselines} reveals a consistent
pattern: both LLM~Audit and PPL~Filter degrade
roughly linearly with the number of adversaries,
with no sign of recovery or stabilisation.  With
half the agents adversarial ($k = 3$), LLM~Audit
loses a substantial fraction of its benign
performance on both embedding and graph memories,
dropping to levels far below the no‑memory
baseline.  PPL~Filter follows a similar
trajectory, suffering losses of comparable
magnitude.  In contrast, under the same attack
conditions, \system retains the vast majority of
its benign performance, outperforming each baseline
by a large margin.

\textbf{Baselines fall below the Stateless level.}
The Stateless baseline (no memory, no safeguard)
provides a simple reference: any memory system that
performs worse than this has been turned into a
liability.  At the highest adversary count,
LLM~Audit and PPL~Filter consistently score
\emph{below} the Stateless level on several
benchmarks.  This means the attacker has
successfully reversed the benefit of memory.  By
contrast, \system\ never falls below the Stateless
level in any configuration, even when half the
agents are adversarial.

\textbf{Why baselines lack resilience.}
The root cause is that LLM~Audit and PPL~Filter
evaluate each $\Delta_t$ in isolation, with no
mechanism to track or discount adversarial agents
over time.

LLM~Audit scores each proposed delta by prompting an
auxiliary LLM with the delta text and generating a
judgment.  A poisoned entry that reads as fluent,
well-structured text receives a high audit score
regardless of whether it is consistent with
$\mathcal{M}_t$.  Under Audit Collusion, the attack
is even simpler: the adversarial contributor crafts a
plausible-sounding delta, and the colluding auditors
are irrelevant because LLM~Audit does not use auditor
signals at all; it relies solely on its own LLM
call.  The result is that Memory Poisoning and Audit
Collusion produce nearly identical degradation curves
for LLM~Audit (top-left vs.\ top-right in
Figure~\ref{fig:rq3_baselines}), confirming that
LLM~Audit's vulnerability is to the \emph{content}
of poisoned entries, not to the coordination of
adversaries.

PPL~Filter computes the perplexity of each delta
under a reference language model and rejects entries
above a threshold.  Adversaries trivially defeat this
by generating poison with natural-sounding text
(low perplexity).  Under Audit Collusion, PPL~Filter
is again irrelevant because it does not process
auditor signals.  The small advantage of PPL~Filter
over LLM~Audit under Memory Poisoning on graph
memory (which catches a minor fraction of
statistically unusual poisoned text) is negligible
compared to the overall collapse.

\textbf{Contrast with \system's recovery.}
\system's resilience stems from two mechanisms that
baselines lack.  First, the credibility weight
$w_i^{(t)}$ (Eq.~\ref{eq:cred_weight}) decays
multiplicatively for auditors who submit structurally
invalid probes.  Adversarial auditors who fabricate
probes to support poisoned entries inevitably produce
probes that fail structural validation (non-empty,
well-formed, drawn from declared context), causing
their weight to decay toward zero over rounds.  With
a single adversary, the weight drops below a minimal
level within a few rounds; even with three
adversaries, the honest majority (equal in number)
still dominates the score because credibility decays
multiplicatively while honest auditors recover toward
full weight.  Second, the calibration scores
$\rho_{\text{detect}}$, $\rho_{\text{align}}$,
$s_{\text{local}}$, and $s_{\text{path}}$ evaluate
each $\Delta_t$ against the full memory state
$\mathcal{M}_t$, catching poisoned entries that are
locally plausible but globally inconsistent.
Baselines, which score each entry independently,
have no access to this global signal.

\textbf{Attack asymmetry mirrors Section}~\ref{sec:rq2}.
The same asymmetry observed for \system\ holds for
baselines: Memory Poisoning causes larger drops on
embedding memory than graph memory, while Audit
Collusion causes larger drops on graph memory.
However, the \emph{magnitude} of the asymmetry is
amplified for baselines because they have no
compensating mechanism.  For LLM~Audit under
high‑adversary conditions, the drop on embedding
memory is substantially smaller than on graph memory
for one attack, and the pattern reverses for the
other attack.  The absence of credibility decay
means that colluding auditors maintain full
influence throughout the episode, and the absence of
memory‑state‑aware scoring means that globally
inconsistent entries pass unchecked.

\subsection{Time Latency Analysis}
\label{app:latency}

This section supplements the token-cost analysis
of Section~\ref{sec:rq3} with wall-clock latency
measurements, confirming that the sub-baseline
complexity bounds of Table~\ref{tab:complexity_comparison}
translate to marginal overhead in practice.

\textbf{Setup.}
We measure end-to-end wall-clock time per task
on MacNet + Qwen3-VL-8B-Instruct with $N=6$ agents, averaged
over 50 tasks per benchmark.  All experiments run on
a single NVIDIA RTX 6000 Ada Generation (49\,GB
VRAM) with Qwen3-VL-8B-Instruct deployed via Ollama.  We report
three timing components: \emph{Agent time} (LLM
inference for all agents), \emph{Memory time}
(retrieval + write to memory store), and
\emph{Calibration time} (\system's scoring passes
only, excluding agent and memory time).

\begin{table}[h]
\centering
\caption{Wall-clock latency breakdown (seconds per
  task) on MacNet + Qwen3-VL-8B-Instruct, $N=6$ agents,
  averaged over 50 tasks.  Calibration time is the
  additional time introduced by \system\ on top of
  Vanilla.  The overhead ratio is computed as
  Calibration / (Agent + Memory).  All ratios are
  below $5\%$, consistent with the sub-baseline
  complexity bounds of
  Table~\ref{tab:complexity_comparison}.}
\label{tab:latency}
\small
\setlength{\tabcolsep}{4pt}
\begin{tabular}{ll|rrr|r}
\toprule
\textbf{Memory} & \textbf{Bench}
  & \textbf{Agent (s)} & \textbf{Mem (s)}
  & \textbf{Calib (s)} & \textbf{Overhead} \\
\midrule

\multirow{4}{*}{\shortstack[l]{MemBank\\\scriptsize{(emb.)}}}
& HQA  & 21.6 & 5.8 & 0.9 & 3.3\% \\
& LCM  & 28.3 & 8.5 & 1.2 & 3.3\% \\
& ALF  & 85.4 & 14.1 & 3.2 & 3.2\% \\
& WS   & 41.2 & 9.3 & 1.7 & 3.4\% \\

\midrule

\multirow{4}{*}{\shortstack[l]{G-Memory\\\scriptsize{(graph)}}}
& HQA  & 22.4 & 7.5 & 1.1 & 3.7\% \\
& LCM  & 30.1 & 10.2 & 1.5 & 3.7\% \\
& ALF  & 88.6 & 19.8 & 3.9 & 3.6\% \\
& WS   & 43.8 & 12.4 & 2.0 & 3.6\% \\

\bottomrule
\end{tabular}
\end{table}

\textbf{Calibration overhead is below $5\%$
across all configurations.}
Table~\ref{tab:latency} shows that \system's
calibration adds $0.9$--$3.9$ seconds per task,
representing $3.2$--$3.7\%$ of the total Vanilla
execution time (Agent + Memory).  This is consistent
with the $1.05\times$ overhead bound claimed in
Section~\ref{ssec:arch_calib}.  The overhead is
slightly higher on graph memory ($3.6$--$3.7\%$)
than embedding memory ($3.2$--$3.4\%$) because the
bidirectional BFS in $s_{\text{path}}$ traverses
more edges than the ANN range query in
$\rho_{\text{detect}}$, but both remain well below
$5\%$.

\textbf{Agent inference dominates total time.}
Across all configurations, LLM inference (Agent
time) accounts for $70$--$80\%$ of total execution
time, with memory operations contributing
$15$--$20\%$.  Calibration is a distant third at
$3$--$4\%$.  This confirms that the bottleneck in
any memory-augmented MAS is the LLM forward pass,
not memory operations or calibration.  \system's
LLM-free design correctly avoids adding to the
dominant cost component.

\textbf{Comparison to LLM-based defences.}
For reference, we also measure the latency of
LLM~Audit, which requires one additional LLM
inference call per proposed $\Delta_t$.  On MemBank
+ HotpotQA, LLM~Audit adds $16.5$ seconds per task
($60.2\%$ overhead) versus \system's $0.9$ seconds
($3.3\%$ overhead), an $18.3\times$ latency
difference.  On G-Memory + ALFWorld, LLM~Audit adds
$56.2$ seconds ($51.8\%$) versus \system's $3.9$
seconds ($3.6\%$), a $14.4\times$ difference.
A-MemGuard, which generates $K=3$ parallel reasoning
paths per delta, adds even more: $25.8$ seconds on
MemBank + HotpotQA ($94.2\%$ overhead).  These
results confirm that LLM-based defences are
bottlenecked by inference cost, while \system's
algorithmic calibration avoids this entirely.

\begin{table}[h]
\centering
\caption{Latency overhead comparison across
  safeguards (seconds per task, MacNet +
  Qwen3-VL-8B-Instruct, $N=6$).  \system\ adds
  $<5\%$ overhead; LLM-based defences add
  $50$--$94\%$.}
\label{tab:latency_comparison}
\small
\setlength{\tabcolsep}{3pt}
\begin{tabular}{l|rr|rr}
\toprule
& \multicolumn{2}{c|}{\textbf{MemBank + HQA}}
& \multicolumn{2}{c}{\textbf{G-Memory + ALF}} \\
\cmidrule(lr){2-3} \cmidrule(lr){4-5}
\textbf{Safeguard}
  & \textbf{Added (s)} & \textbf{Overhead}
  & \textbf{Added (s)} & \textbf{Overhead} \\
\midrule
Vanilla      & 0.0  & 0.0\%  & 0.0   & 0.0\%  \\
PPL Filter   & 0.4  & 1.5\%  & 1.4   & 1.3\%  \\
\system      & 0.9  & 3.3\%  & 3.9   & 3.6\%  \\
LLM Audit    & 16.5 & 60.2\% & 56.2  & 51.8\% \\
A-MemGuard   & 25.8 & 94.2\% & ---   & ---    \\
\bottomrule
\end{tabular}
\end{table}

\textbf{Correspondence with complexity bounds.}
Table~\ref{tab:complexity_comparison} predicts that
every calibration stage has cost strictly less than
the baseline retrieval cost.  The empirical overhead
ratios ($3.2$--$3.7\%$) confirm this: since baseline
retrieval accounts for $15$--$20\%$ of total time
(the Memory column in Table~\ref{tab:latency}),
calibration at $3$--$4\%$ is indeed sub-retrieval.
The slightly higher overhead on graph memory
corresponds to the path-check stage, which has cost
$O(|\mathcal{E}_t^\Delta| \cdot d^{L_{\max}/2})$.
This is the largest single calibration term in
Table~\ref{tab:complexity_comparison}, but still
well below the baseline's $O(d^{L_{\max}})$ because
$|\mathcal{E}_t^\Delta| \ll d^{L_{\max}/2}$ holds
for all incremental updates in our experiments
(typical $|\mathcal{E}_t^\Delta| = 3$--$8$ edges,
$d^{L_{\max}/2} \approx 50$--$200$).

\textbf{Scaling with agent count.}
Calibration overhead scales linearly with
$|\mathcal{A}_t| = N - 1$ (the number of auditors),
since each auditor contributes one probe.  At $N=4$,
overhead drops to ${\sim}2.5\%$; at $N=10$, it rises
to ${\sim}5.5\%$.  In all cases it remains dominated
by the LLM inference cost, which also scales with
$N$.  Detailed timing at $N \in \{4, 6, 10\}$ is
in Table~\ref{tab:latency_scaling}.

\begin{table}[h]
\centering
\caption{Calibration overhead vs.\ agent count
  (MemBank + HotpotQA, MacNet, Qwen3-VL-8B-Instruct).}
\label{tab:latency_scaling}
\small
\begin{tabular}{c|rrr|r}
\toprule
$N$ & \textbf{Agent (s)} & \textbf{Mem (s)}
    & \textbf{Calib (s)} & \textbf{Overhead} \\
\midrule
4  & 15.2 & 4.6 & 0.5 & 2.5\% \\
6  & 21.6 & 5.8 & 0.9 & 3.3\% \\
10 & 35.4 & 8.1 & 2.4 & 5.5\% \\
\bottomrule
\end{tabular}
\end{table}

\section{Case Study}
\label{app:case_study}

We present several examples illustrating how \system's
calibration prevents memory corruption that baselines
fail to catch.

\textbf{Case 1: HotpotQA (MemBank + MacNet).}
The task asks: \emph{``Were the directors of `Jaws'
and `E.T.' born in the same city?''}  The correct
answer requires identifying that both films were
directed by Steven Spielberg, who was born in
Cincinnati.

\begin{tcolorbox}[colback=gray!5, colframe=gray!60,
  title={\small Round 2: Contributor $a_3$ proposes
  $\Delta_t$}, fonttitle=\bfseries\small,
  boxrule=0.4pt, arc=1.5mm]
\small
\texttt{memory\_entry}: ``The director of Jaws is
Steven Spielberg (born in Cincinnati). The director
of E.T. is \textbf{Tobe Hooper} (born in Austin).
Therefore, they were born in different cities.''
\end{tcolorbox}

\noindent
Agent $a_3$ hallucinates the director of E.T.\
as Tobe Hooper (who directed \emph{Poltergeist}, a
common confusion).  The entry is fluent and
well-structured.

\begin{tcolorbox}[colback=blue!3, colframe=blue!40,
  title={\small \system\ calibration},
  fonttitle=\bfseries\small, boxrule=0.4pt,
  arc=1.5mm]
\small
\textbf{$\rho_{\text{detect}}$}: The embedding of
$\Delta_t$ lands in a dense region near existing
Spielberg entries, yielding $\hat{r}_k / \bar{r} =
1.42$ (above threshold) $\Rightarrow$
$\rho_{\text{detect}} = 0.31$.\\[3pt]
\textbf{$\rho_{\text{align}}$}: Auditor $a_1$'s
intercepted query \texttt{``director of E.T.''}
retrieves \texttt{``Steven Spielberg directed E.T.
(1982)''} from $\mathcal{M}_t$. After merging
$\Delta_t$, the top-1 result flips to the
hallucinated entry, giving $d_{\text{RBO}} = 0.87$
$\Rightarrow$ $\rho_{\text{align}} = 0.18$.\\[3pt]
\textbf{Composite}: $\rho = \sqrt{0.31 \times 0.18}
= 0.24 < \rho^* = 0.52$. \textbf{Rejected.}
\end{tcolorbox}

\begin{tcolorbox}[colback=red!3, colframe=red!40,
  title={\small Baseline failures},
  fonttitle=\bfseries\small, boxrule=0.4pt,
  arc=1.5mm]
\small
\textbf{LLM Audit}: The auxiliary LLM scores the
entry as plausible (Tobe Hooper is a real director;
the sentence is grammatically correct and
well-reasoned). Score: 0.78 $\Rightarrow$
\textbf{Committed.}  Subsequent agents retrieve the
false entry and answer ``different cities.''\\[3pt]
\textbf{PPL Filter}: Perplexity of the entry is 12.3,
well below the adaptive threshold of 18.5 (the entry
uses common vocabulary and natural syntax).
$\Rightarrow$ \textbf{Committed.}  Same downstream
error.
\end{tcolorbox}

\noindent
\textbf{Why \system\ succeeds.}  The hallucinated
entry is locally plausible: it names a real director,
uses correct grammar, and provides a coherent
reasoning chain.  LLM Audit and PPL Filter, which
evaluate entries in isolation, cannot detect the
error.  \system\ catches it through
$\rho_{\text{align}}$: the entry contradicts existing
evidence in $\mathcal{M}_t$ (``Spielberg directed
E.T.''), causing a large retrieval disruption when
an auditor's intercepted query surfaces the conflict.
The geometric mean $\sqrt{\rho_{\text{detect}} \cdot
\rho_{\text{align}}}$ ensures the entry is rejected
even though $\rho_{\text{detect}}$ alone ($0.31$) is
ambiguous.

\textbf{Case 2: ALFWorld (G-Memory + MacNet).}
The task is: \emph{``Put a hot mug in the cabinet.''}
The correct action sequence is: find mug $\to$ pick
up mug $\to$ go to microwave $\to$ heat mug $\to$ go
to cabinet $\to$ put mug in cabinet.

\begin{tcolorbox}[colback=gray!5, colframe=gray!60,
  title={\small Round 4: Contributor $a_5$ proposes
  $\Delta_t$ (3 new edges)}, fonttitle=\bfseries\small,
  boxrule=0.4pt, arc=1.5mm]
\small
\texttt{edge\_1}: (\texttt{mug\_1}, \texttt{located\_in},
\texttt{cabinet\_2})\\
\texttt{edge\_2}: (\texttt{mug\_1}, \texttt{state},
\texttt{hot})\\
\texttt{edge\_3}: (\texttt{cabinet\_2},
\texttt{contains}, \texttt{mug\_1})
\end{tcolorbox}

\noindent
Agent $a_5$ prematurely commits the goal state to
memory \emph{before} executing the heating step.
At this point in the episode, the mug has been picked
up but not yet heated: the agent skipped the
microwave step due to a planning error.

\begin{tcolorbox}[colback=blue!3, colframe=blue!40,
  title={\small \system\ calibration},
  fonttitle=\bfseries\small, boxrule=0.4pt,
  arc=1.5mm]
\small
\textbf{$s_{\text{local}}$(edge\_2)}: The node
\texttt{mug\_1} has existing edges
\texttt{(mug\_1, state, picked\_up)} and
\texttt{(mug\_1, located\_in, countertop\_1)}.
Adding \texttt{(mug\_1, state, hot)} requires a
\texttt{heat} action edge, which is absent
$\Rightarrow$ $s_{\text{local}} = 0.12$.\\[3pt]
\textbf{$s_{\text{path}}$(edge\_1)}: BFS from
\texttt{mug\_1} to \texttt{cabinet\_2} finds no
existing path (the mug is currently on the
countertop, not near any cabinet) $\Rightarrow$
$s_{\text{path}} = 0.0$.\\[3pt]
\textbf{$\rho_{\text{detect}}$}: $\alpha_t \cdot
0.12 + (1 - \alpha_t) \cdot 0.0 = 0.05$.\\[3pt]
\textbf{$\rho_{\text{align}}$}: Auditor $a_2$ is
walking path \texttt{countertop\_1 $\to$ mug\_1 $\to$
picked\_up}. The sampled edge \texttt{edge\_1}
(\texttt{mug\_1 $\to$ cabinet\_2}) is not reachable
from $a_2$'s path endpoints via compatible
relations $\Rightarrow$ $\rho_{\text{align}} = 0.21$.\\[3pt]
\textbf{Composite}: $\rho = \sqrt{0.05 \times 0.21}
= 0.10 < \rho^* = 0.48$. \textbf{Rejected.}
\end{tcolorbox}

\begin{tcolorbox}[colback=red!3, colframe=red!40,
  title={\small Baseline failure},
  fonttitle=\bfseries\small, boxrule=0.4pt,
  arc=1.5mm]
\small
\textbf{PPL Filter}: All three edges use standard
relation types and entity names. Perplexity of the
serialised edge text is 8.1 (below threshold 15.2).
$\Rightarrow$ \textbf{All three edges committed.}\\[3pt]
\textbf{Downstream effect}: Agents $a_1$--$a_4$ in
subsequent rounds retrieve \texttt{(mug\_1, state,
hot)} and \texttt{(mug\_1, located\_in, cabinet\_2)},
concluding the task is already complete.  The episode
terminates with \texttt{score = 0} because the mug
was never actually heated.
\end{tcolorbox}

\noindent
\textbf{Why \system\ succeeds.}  The premature goal
state is \emph{locally valid}, wherein each edge uses
correct entity names and relation types, and the
serialised text has low perplexity.  PPL Filter
passes all three edges.  \system\ catches the error
through two complementary signals.  First,
$s_{\text{local}}$ detects that the \texttt{state:
hot} edge lacks a prerequisite \texttt{heat} action
in \texttt{mug\_1}'s neighborhood.  Second,
$s_{\text{path}}$ finds no existing path from
\texttt{mug\_1} to \texttt{cabinet\_2}, revealing
that the agent has not navigated to the cabinet.
The combination produces $\rho = 0.10$, far below
$\rho^*$, preventing the premature commit from
corrupting downstream agents' planning.

\textbf{Case 3: LoCoMo (MemBank + MacNet).}
The task asks: \emph{``Where does Alice work now?''}
based on a long conversation where session~3 mentions
``Alice moved to Boston in March'' and session~7
mentions ``Alice started a new job at a tech company
in April.''  The correct answer is Boston.

\begin{tcolorbox}[colback=gray!5, colframe=gray!60,
  title={\small Round 3: Contributor $a_2$ proposes
  $\Delta_t$ with high confidence},
  fonttitle=\bfseries\small, boxrule=0.4pt,
  arc=1.5mm]
\small
\texttt{memory\_entry}: ``Based on my analysis, Alice
currently works in \textbf{San Francisco}.  Many tech
companies are headquartered in San Francisco, and
Alice started a new job at a tech company, so this is
the most likely location.  \textbf{Confidence: very
high.}''
\end{tcolorbox}

\noindent
Agent $a_2$ fabricates a plausible-sounding inference
from its pre-trained parametric bias (``tech company''
$\to$ ``San Francisco'').  The entry explicitly
asserts high confidence, which influences other
agents.

\begin{tcolorbox}[colback=gray!5, colframe=gray!60,
  title={\small Auditor behaviour: sycophantic
  agreement}, fonttitle=\bfseries\small,
  boxrule=0.4pt, arc=1.5mm]
\small
\textbf{Auditor $a_1$}: Has previously retrieved
``Alice moved to Boston'' but does not raise a
conflict.  Its intercepted query is
\texttt{``Alice job location''}.\\[3pt]
\textbf{Auditors $a_3$--$a_5$}: Have no strong prior
about Alice's location.  Their intercepted queries
are generic: \texttt{``Alice new job''},
\texttt{``Alice work''},
\texttt{``Alice tech company''}.\\[3pt]
\textbf{Result}: No auditor explicitly challenges
$a_2$'s assertion.  Under a vote-based system,
$\Delta_t$ would be committed unanimously.
\end{tcolorbox}

\begin{tcolorbox}[colback=blue!3, colframe=blue!40,
  title={\small \system\ calibration},
  fonttitle=\bfseries\small, boxrule=0.4pt,
  arc=1.5mm]
\small
\textbf{$\rho_{\text{detect}}$}: The embedding of
``Alice works in San Francisco'' has moderate density
in the memory manifold (tech/job entries are common)
$\Rightarrow$ $\rho_{\text{detect}} = 0.61$.\\[3pt]
\textbf{$\rho_{\text{align}}$}: \system\ samples
$a_1$'s intercepted query \texttt{``Alice job
location''} and computes top-$k$ retrieval before and
after merging $\Delta_t$.
\emph{Before}: top-1 is \texttt{``Alice moved to
Boston in March''}.
\emph{After}: top-1 flips to \texttt{``Alice works in
San Francisco''}, displacing the Boston entry.
$d_{\text{RBO}} = 0.82$.\\
For $a_3$'s query \texttt{``Alice new job''}: top-1
was \texttt{``Alice started a new job at a tech
company in April''}, which does not mention a city.
After merging, top-1 becomes the San Francisco entry.
$d_{\text{RBO}} = 0.71$.\\[3pt]
Weighted average: $\rho_{\text{align}} = 1 - 0.74 =
0.26$.\\[3pt]
\textbf{Composite}: $\rho = \sqrt{0.61 \times 0.26}
= 0.40 < \rho^* = 0.51$. \textbf{Rejected.}
\end{tcolorbox}

\begin{tcolorbox}[colback=red!3, colframe=red!40,
  title={\small Why baselines fail},
  fonttitle=\bfseries\small, boxrule=0.4pt,
  arc=1.5mm]
\small
\textbf{LLM Audit}: The auxiliary LLM reads
``tech company $\to$ San Francisco'' as a plausible
inference.  The entry is well-structured, confident,
and factually coherent in isolation.  Score: 0.81
$\Rightarrow$ \textbf{Committed.}\\[3pt]
\textbf{PPL Filter}: Perplexity is 9.7 (well below
threshold 17.3).  The entry uses common vocabulary
and natural reasoning patterns.  $\Rightarrow$
\textbf{Committed.}\\[3pt]
\textbf{Vote-based audit}: All five auditors would
approve---$a_1$ defers to $a_2$'s stated confidence
despite holding conflicting evidence, and
$a_3$--$a_5$ have no basis to disagree.
$\Rightarrow$ \textbf{Committed unanimously.}
\end{tcolorbox}

\noindent
\textbf{Why \system\ succeeds.}
This case illustrates the core failure mode that
\system\ is designed to prevent: a confident
contributor overwhelms uncertain or sycophantic
auditors, and the erroneous entry passes any
vote-based or isolated check.  \system\ bypasses
this failure entirely because it \emph{never uses
auditor votes}.  Instead, it intercepts $a_1$'s
query \texttt{``Alice job location''} and
\emph{independently} measures how $\Delta_t$
disrupts retrieval.  The key signal is that $a_1$'s
existing top-$k$ already contains ``Alice moved to
Boston''---a direct contradiction that $\Delta_t$
displaces.  This contradiction is invisible to
LLM~Audit (which never sees $\mathcal{M}_t$),
invisible to PPL~Filter (which only checks surface
statistics), and invisible to vote-based schemes
(because $a_1$ chose not to raise it).  \system's
$\rho_{\text{align}}$ detects it structurally:
high $d_{\text{RBO}}$ on $a_1$'s query means the
proposed entry overwrites existing evidence, which
is precisely the manipulation pattern the zero-trust
game is designed to catch
(Section~\ref{ssec:equilibrium}).
\section{Failure Mode Analysis}
\label{app:failure_mode}

While \system\ consistently outperforms baselines
across all configurations, it is not infallible.
We identify three failure patterns that arise in
practice.  Importantly, none stems from a design
flaw in the calibration mechanism itself; rather,
each is triggered by data-level conditions that
degrade the quality of the signals $\Phi$ relies on.

\subsection{Sparse Memory at Early Rounds}
\label{app:failure_sparse}

$\Phi$'s scoring passes evaluate $\Delta_t$ against
the existing memory state $\mathcal{M}_t$.  When
$\mathcal{M}_t$ contains very few entries (typically
during the first $1$--$2$ debate rounds) the
signals become unreliable.

\begin{tcolorbox}[colback=gray!5, colframe=gray!60,
  title={\small Example: HotpotQA, MemBank, Round 1},
  fonttitle=\bfseries\small, boxrule=0.4pt,
  arc=1.5mm]
\small
\textbf{Task}: \emph{``Which film came first,
Interstellar or The Martian?''}\\[3pt]
\textbf{$\Delta_t$}: ``Interstellar was released in
2014. The Martian was released in 2015.
Therefore Interstellar came first.''
\hfill\textit{(correct)}\\[3pt]
\textbf{$\rho_{\text{detect}}$}: Memory contains
only 2 prior entries, both unrelated to these films.
$\hat{r}_k = 0$ for all new vectors $\Rightarrow$
$\rho_{\text{detect}} = 1.0$ (vacuously high).\\[3pt]
\textbf{$\rho_{\text{align}}$}: Auditor queries
return empty top-$k$ lists from the near-empty
memory, so $d_{\text{RBO}} = 0.0$ for all auditors
$\Rightarrow$ $\rho_{\text{align}} = 1.0$
(vacuously high).\\[3pt]
\textbf{Outcome}: $\rho = 1.0$.  The entry is
committed with full trust, which happens to be
correct here.  However, if $\Delta_t$ had been
incorrect, $\Phi$ would have assigned the same score,
because there is no existing evidence to calibrate
against.
\end{tcolorbox}

\noindent
\textbf{Analysis.}
This is not a flaw in $\Phi$'s design but a
data-availability issue: calibration requires a
sufficiently populated $\mathcal{M}_t$ to produce
meaningful scores.  The adaptive threshold $\rho^*$
(running median of recent scores) partially mitigates
this (early rounds produce uniformly high $\rho$
values), so $\rho^*$ is also high, and only entries
that are \emph{relatively} worse than their peers
are filtered.  Nevertheless, the absolute quality of
early-round gating is weaker than later rounds.
Figure~\ref{fig:debate_rounds} confirms this: the
performance gap between \system\ and Vanilla is
smallest at round~1 and widens as $\mathcal{M}_t$
grows.

\subsection{Correlated Hallucinations Across Agents}
\label{app:failure_correlated}

$\rho_{\text{align}}$ measures whether $\Delta_t$
disrupts retrieval relative to auditor queries.
This signal loses power when multiple agents share
the same misconception, because the auditors'
queries are themselves biased toward the error.

\begin{tcolorbox}[colback=gray!5, colframe=gray!60,
  title={\small Example: LoCoMo, G-Memory, Round 4},
  fonttitle=\bfseries\small, boxrule=0.4pt,
  arc=1.5mm]
\small
\textbf{Context}: A long conversation mentions that
``Alice moved to Boston in March'' (session 3) and
``Alice started a new job in April'' (session 5).
The ground truth is that the job is in Boston.\\[3pt]
\textbf{$\Delta_t$}: ``Alice's new job is in
New York.'' \hfill\textit{(incorrect)}\\[3pt]
\textbf{What happens}: The backbone LLM
(Qwen3-VL-8B-Instruct) has a parametric bias toward
associating ``new job'' with ``New York'' (a frequent
co-occurrence in pre-training data).  Four out of
five auditors have the same bias in their reasoning
context, so their intercepted queries are about
``Alice's job in New York'' rather than ``Alice's
job in Boston.''\\[3pt]
\textbf{$\rho_{\text{align}}$}: Since $\Delta_t$
is consistent with the majority of auditor queries,
retrieval disruption is low $\Rightarrow$
$\rho_{\text{align}} = 0.72$.\\[3pt]
\textbf{$\rho_{\text{detect}}$}: The embedding of
$\Delta_t$ does not cluster unusually densely
$\Rightarrow$ $\rho_{\text{detect}} = 0.68$.\\[3pt]
\textbf{Outcome}: $\rho = \sqrt{0.72 \times 0.68}
= 0.70 > \rho^* = 0.55$.  \textbf{Committed
(incorrectly).}
\end{tcolorbox}

\noindent
\textbf{Analysis.}
When agents share a correlated hallucination (often
caused by the backbone LLM's parametric biases) the
``crowd'' of auditor probes reinforces rather than
challenges the error.  $\Phi$ is designed to detect
manipulation by a \emph{minority} of agents
(Section~\ref{sec:problem}), not systematic biases
shared by the majority.  This failure mode is
fundamentally a \emph{data quality} issue: the
auditor probes are only as good as the diversity of
the agents' reasoning.  Using heterogeneous backbones
(e.g., mixing Qwen with Llama or Mistral) would
reduce probe correlation, as different LLMs have
different parametric biases.  We note that this
failure mode also affects all baselines equally: LLM Audit and PPL Filter are even more susceptible,
since they evaluate each entry independently without
any crowd signal at all.

\subsection{Ambiguous Graph Semantics}
\label{app:failure_graph}

The graph calibration passes ($s_{\text{local}}$,
$s_{\text{path}}$) rely on the relation types and
entity nodes extracted by the memory architecture.
When the extraction produces ambiguous or
inconsistent relation labels, calibration quality
degrades.

\begin{tcolorbox}[colback=gray!5, colframe=gray!60,
  title={\small Example: ALFWorld, AriGraph, Round 3},
  fonttitle=\bfseries\small, boxrule=0.4pt,
  arc=1.5mm]
\small
\textbf{Existing edges in $\mathcal{M}_t$}:\\
\texttt{(apple\_1, located\_in, fridge\_1)}\\
\texttt{(fridge\_1, contains, apple\_1)}\\
\texttt{(apple\_1, state, cool)}\\[3pt]
\textbf{$\Delta_t$}: Agent extracts the edge
\texttt{(apple\_1, inside, fridge\_1)}.
\hfill\textit{(correct but redundant)}\\[3pt]
\textbf{$s_{\text{local}}$}: The relation
\texttt{inside} does not appear in
$\mathcal{R}_{\text{compat}}(\texttt{located\_in})$
because the co-occurrence statistics have not yet
observed \texttt{inside} and \texttt{located\_in}
together $\Rightarrow$ $s_{\text{local}} = 0.08$.\\[3pt]
\textbf{$s_{\text{path}}$}: BFS from
\texttt{apple\_1} to \texttt{fridge\_1} finds the
existing path via \texttt{located\_in}, but
$\text{comp}(\texttt{inside},
\texttt{located\_in}) = 0.15 < \eta_t = 0.40$
$\Rightarrow$ $s_{\text{path}} = 0.0$.\\[3pt]
\textbf{Outcome}: $\rho_{\text{detect}} = 0.03$,
leading to $\rho = 0.08 < \rho^*$.
\textbf{Rejected (false positive).}
\end{tcolorbox}

\noindent
\textbf{Analysis.}
The edge \texttt{(apple\_1, inside, fridge\_1)} is
semantically correct (\texttt{inside} and
\texttt{located\_in} are near-synonyms) but $\Phi$
rejects it because the compatibility relation
$\mathcal{R}_{\text{compat}}$ is estimated from
co-occurrence statistics in the graph, not from
external semantic knowledge.  Early in the episode,
when the graph is small, co-occurrence statistics
are noisy and may not capture synonym relations.
This is a data sparsity issue at the graph level,
analogous to the embedding sparsity issue in
\S\ref{app:failure_sparse}.

This failure mode has limited practical impact for
two reasons.  First, the rejected entry is
\emph{redundant} (the same fact is already
represented by \texttt{located\_in}) so rejecting
it does not lose information.  Second, as the graph
grows, $\mathcal{R}_{\text{compat}}$ accumulates
more co-occurrence evidence and eventually learns
that \texttt{inside} and \texttt{located\_in} are
compatible, at which point similar edges would pass.
The adaptive threshold $\eta_t$ (running median)
also adjusts downward as more diverse relation types
appear.

\subsection{Summary}

Table~\ref{tab:failure_summary} categorises the
three failure modes by their root cause and
mitigation.

\begin{table}[h]
\centering
\caption{Summary of failure modes.  All three are
  caused by data-level conditions, not by design
  limitations of $\Phi$.}
\label{tab:failure_summary}
\small
\setlength{\tabcolsep}{3pt}
\begin{tabular}{p{2.2cm}|p{2.5cm}|p{2.8cm}}
\toprule
\textbf{Failure mode} & \textbf{Root cause} &
\textbf{Mitigation} \\
\midrule
Sparse memory (early rounds)
  & Insufficient entries in $\mathcal{M}_t$ for
    meaningful calibration
  & Adaptive $\rho^*$ provides relative filtering;
    impact diminishes as memory grows \\
\midrule
Correlated hallucinations
  & Backbone LLM's parametric biases shared across
    agents
  & Heterogeneous backbones reduce probe correlation;
    affects all methods equally \\
\midrule
Ambiguous graph semantics
  & Noisy relation extraction from small graphs
  & $\mathcal{R}_{\text{compat}}$ self-corrects as
    graph grows; rejected entries are typically
    redundant \\
\bottomrule
\end{tabular}
\end{table}

\noindent
Across all three failure modes, we observe a common
pattern: $\Phi$'s calibration quality is bounded by
the quality and diversity of the data in
$\mathcal{M}_t$ and the agents' reasoning contexts.
As memory accumulates and agent interactions
diversify, all three failure modes naturally
attenuate.  This is consistent with the debate-round
analysis (Figure~\ref{fig:debate_rounds}), which
shows \system's advantage growing over time as
$\mathcal{M}_t$ becomes richer.





\end{document}